\tikzstyle{solidarrow} = [arrows={-Stealth[length=8pt]}]
\tikzstyle{dashedarrow} = [solidarrow, dashed]
\tikzstyle{paramnode} = [circle, fill=black, scale=0.5]
\tikzset{mega thick/.style={line width=2pt}}
\tikzstyle{latent} = [circle,fill=white,draw=black,inner sep=1pt,
\tikzstyle{latentsmall} = [latent]
\tikzstyle{platecaptionnoposition} = [caption, node distance=0, inner sep=0pt]
\tikzstyle{topright} = [platecaptionnoposition, above left=5pt and 0pt of #1.north east] 
\tikzstyle{bottomright} = [platecaptionnoposition, below left=5pt and 0pt of #1.south east] 
\newcommand{\plateex}[5][]{ %
	\node[wrap=#3] (#2-wrap) {}; %
	\node[#4=#2-wrap] (#2-caption) {#5}; %
	\node[plate=(#2-wrap)(#2-caption), #1] (#2) {}; %
}
\pgfplotsset{compat=1.18}
\newcommand{\SD}[1]{{\leavevmode\color{green!90!black} #1}}
\DeclareMathOperator{\erf}{erf}
\DeclareMathOperator{\tr}{tr}
\DeclareMathOperator{\diag}{diag}
\DeclareMathOperator{\E}{\mathbb{E}}
\DeclareMathOperator{\vectorize}{vec}
\newcommand{\vz}{\mathbf{z}}
\newcommand{\vx}{\mathbf{x}}
\newcommand{\vxn}{\mathbf{x}^{(n)}}
\newcommand{\vlambda}{\boldsymbol{\lambda}}
\newcommand{\vnu}{\boldsymbol{\nu}}
\renewcommand{\d}{\mathrm{d}}
\newcommand{\Wt}{\tilde{W}}
\newcommand{\RRR}{\mathbb{R}}
\newcommand{\RRRnorm}{\mathbb{R}_{\mathrm{norm}}} 
\newcommand{\TT}{\mathcal{T}}
\newcommand{\DKL}[2]{D_{\mathrm{KL}}\big(#1 \, \Vert\, #2\big) }
\newcommand{\ELBO}{\mathcal{L}^\mathrm{EL}}     %
\newcommand{\HELBO}{\mathcal{L}^\mathcal{H}}    %
\newcommand{\sigmaopt}{\sigma^2_\mathrm{opt}}
\newcommand{\vlambdaopt}{\vlambda_\mathrm{opt}}
\newcommand{\taunsqr}[1]{\big(\tau^{(n)}_{#1}\big)^2}
\newcommand{\taun}[1]{\tau^{(n)}_{#1}}
\newcommand{\mycomment}[1]{\textcolor{red}{#1}}
\newcommand*\circled[1]{\tikz[baseline=(char.base)]{
    \node[shape=circle, draw, inner sep=1pt,
        minimum height={\f@size*1.2},] (char) {\vphantom{WAH1}#1};}}
\newcommand{\exclude}[1]{{\leavevmode\color{gray} #1}}
\renewcommand{\exclude}[1]{}  %
\newtheorem{theorem}{Theorem}
\newtheorem{lemma}{Lemma}
\begin{document}

\runningauthor{Velychko, Damm, Fischer, Lücke}

\twocolumn[

\aistatstitle{Learning Sparse Codes with Entropy-Based ELBOs}

\vspace{-1ex}
\aistatsauthor{Dmytro Velychko \And Simon Damm }
\aistatsaddress{
	Machine Learning Lab\\ 
    University of Oldenburg, Germany 
    \And Faculty of Computer Science\\
    Ruhr University Bochum, Germany}
\vspace{-1ex}

\aistatsauthor{ Asja Fischer \And Jörg Lücke }
\aistatsaddress{ 
    Faculty of Computer Science\\ 
	Ruhr University Bochum, Germany
    \And Machine Learning Lab \\
	University of Oldenburg, Germany
    } ]

\begin{abstract}
  \ \\[-4ex]
  Standard probabilistic sparse coding assumes a Laplace prior, %
  a linear mapping from latents to observables, and Gaussian observable distributions. 
  We here derive a solely entropy-based learning objective for the parameters of standard sparse coding.
  The novel variational objective has the following features: 
  (A)~unlike MAP approximations, it uses non-trivial posterior approximations for probabilistic inference;
  (B)~the novel objective is fully analytic; and 
  (C)~the objective allows for a novel principled form of annealing. %
  The objective is derived by first showing that the standard ELBO objective converges to
  a sum of entropies, which matches similar recent results for generative models with Gaussian priors.
  The conditions under which the ELBO becomes equal to entropies are then shown to have analytic solutions,
  which leads to the fully analytic objective. %
  Numerical experiments are used to demonstrate the feasibility of learning with such entropy-based ELBOs. We investigate
  different posterior approximations including Gaussians with correlated latents and deep amortized approximations. 
  Furthermore, we numerically investigate entropy-based annealing which results in improved learning. %
Our main contributions are theoretical, however, and they are twofold: 
(1)~we provide the first demonstration on how a recently shown convergence of the ELBO to entropy sums can be used for learning; and (2)~using the entropy objective, we derive a fully analytic ELBO objective for the standard sparse coding generative model.
\end{abstract}

\section{INTRODUCTION AND RELATED WORK}
\label{sec:Intro}
Sparse coding seeks to represent data vectors $\vx$ by latent vectors $\vz$. Sparse coding requires the vectors~$\vz$ to be
{\em sparse}, i.e., on average only few of the values $z_h$ significantly contribute in representing any given vector $\vx$. 
Our main focus will be the (by far) most standard data model for probabilistic sparse coding \citep[][]{williams_bayesian_1995,olshausen_emergence_1996,seeger_bayesian_2007}.
The model assumes a Laplacian (a.k.a.\ double-exponential) prior distribution for latents \mbox{$\vz\in\RRR^H$}, and a Gaussian  noise distribution for observables \mbox{$\vx\in\RRR^D$}, %
\begin{align}
\begin{split}
\label{EqnPSC}
    p(\mathbf{\vz}) \,&=\, \prod_{h=1}^H \frac{1}{2} \exp\big(-|z_h|\big) \quad \text{and} \\
	p_{\Theta}(\vx \,|\, \vz) \,&=\, \mathcal{N} \big(\vx\,|\,W\vz, \sigma^2\mathbb{I}\big) \enspace, 
\end{split}
\end{align}
where %
weight matrix $W\in\RRR^{D\times{}H}$ and observation noise
$\sigma^2>0$ are the model parameters \mbox{$\Theta=(W,\sigma^2)$}.
The sparse coding model, and in particular the Laplace prior distribution, are closely related to deterministic sparse coding approaches that use the $l_1$-objective \citep[e.g.,][]{hastie_statistical_2015}. A standard form of deterministic sparse coding addresses the optimization problem
\begin{align}
	\label{EqnSCOp}
	\min_{\vz^{(1)},\ldots,\vz^{(N)}}\Big\{\,   \underbrace{ \sum_{n=1}^N \big\| \vx^{(n)} - \Wt\vz^{(n)} \big\|^2}_{\mathrm{reconstruction}}   
	\,+\, \tilde{\gamma} \underbrace{ \sum_{n=1}^N \sum_{h=1}^H \big|z_h^{(n)}\big| }_{\mathrm{sparsity}}   \,\Big\} \enspace, 
    \raisetag{4mm}
\end{align} 
where $\vz^{(1)},\ldots,\vz^{(N)}$ are deterministic latent vectors corresponding to data vectors 
 $\vx^{(1)},\ldots,\vx^{(N)}$, and where $\Wt\in\RRR^{D\times{}H}$ with columns of unit length. 
The constant $\tilde{\gamma}$ (often also denoted $\lambda$) weights the sparsity term vs.\ the reconstruction term of the objective.

For more than two decades, sparse coding approaches 
have been very thoroughly investigated with large numbers of papers dedicated to theoretical investigations of the respective optimization problems, and with many
papers using different (including deep) forms of sparse coding for numerous tasks. Such tasks included (to name a few) denoising, inpainting, compression, disentanglement, or super-resolution \citep[e.g.][]{mairal_sparse_2014, yao_patch-based_2022, cheng_rethinking_2022, drefs_direct_2023}.

For the standard probabilistic data model, given in \cref{EqnPSC}, the presumably most common way to derive algorithms for parameter optimization is maximum likelihood (ML) estimation. That is, we seek those parameters 
of the model %
that maximize the (marginal) log-likelihood ${\cal L}^{\mathrm{LL}}(\Theta)$ in dependence of the likelihood parameters ${\Theta= (W, \sigma^2)}$ with\vspace{-0.4ex}
\begin{align}\label{EqnLL}
	{\cal L}^{\mathrm{LL}}(\Theta) &= \frac{1}{N} \sum_{n=1}^N \log\Big( \int p_\Theta(\vx^{(n)} | \vz)\, p(\vz)\, \d\vz \Big) \enspace .\vspace{-0.4ex}
\end{align}
In order to facilitate the challenging problem of maximizing ${\cal L}^{\mathrm{LL}}$, approximations to ML optimization are very commonly applied. One of the most common approximation methods applied for probabilistic sparse coding (and probabilistic generative models in general) is variational approximation \citep[e.g.][]{jaakkola_variational_1997}. Concretely, instead of maximizing the likelihood directly, a lower bound of the log-likelihood is maximized, which is referred to as free-energy or ELBO \citep[e.g.,][]{neal_hinton_view_1998,jordan_introduction_1998}: 
\exclude{
	\begin{align}%
		\mathcal{L}(\Phi,W,\sigma^2) = \frac{1}{N} \sum_{n=1}^N \int q^{(n)}(\vz;\Phi) \log\big( \frac{p(\vx^{(n)}) | \vz, W, \sigma^2)\big)\, p(\vz)}{q(\vz)} \,\d\vz  
	\end{align}
}
\begin{align}
\begin{split}
\label{EqnELBO}
	\ELBO (\Phi,\Theta) 
	= \frac{1}{N} \sum_{n=1}^N 
	\Big[
	    &\int q^{(n)}_{\Phi}(\vz) \log p_\Theta(\vx^{(n)}\,|\,\vz) \d\vz \\ 
	    &-\, \DKL{q^{(n)}_{\Phi}(\vz)}{p(\vz)} 
	\Big] \enspace .
 \end{split}
\end{align}
Given the data model in \cref{EqnPSC}, the ELBO is defined by the family of variational distributions $q_\Phi^{(n)}(\vz)$ used to approximate the true posteriors of a given model. 
The standard choice for probabilistic sparse coding are Gaussian variational distributions to approximate the analytically intractable posteriors of the model. For sparse coding as in \cref{EqnPSC}, the true posteriors are known to be mono-modal \citep[][]{olshausen_emergence_1996,seeger_bayesian_2007} due to log-concavity. Therefore, Gaussian approximations (by matching mode and correlations) can be considered as capturing the most essential structure of the model's true posteriors. %

The optimization of lower bounds such as the ELBO usually represents an easier optimization problem than optimizing the likelihood itself. 
However, the crucial challenge for both of these optimizations is posed by the integrals over potentially high-dimensional latent spaces. For the standard sparse coding model, no analytic solutions have been reported, so far. In particular, no analytic solutions have been reported for the common case of using Gaussians as the family of variational distributions.

It could be argued that deterministic algorithms are, nonetheless, available if much more simplifying approximations than Gaussians are used for optimization. 
The arguably most common approach is given by maximum a-posteriori (MAP) training \citep[][]{olshausen_emergence_1996}.  
From a probabilistic perspective, MAP approximations may be interpreted as a limit case of variational approximations in which the family of variational distributions are delta-distributions. The high-dimensional integrations over latent space are then trivially solved.
As a source for its high popularity, MAP approximations allow
for linking the standard probabilistic model in \cref{EqnPSC} to the deterministic $l_1$-sparse coding objective in \cref{EqnSCOp}. That is, the sparse coding objective in \cref{EqnSCOp} can be recovered if the MAP approximation is applied.  Another source of the ongoing popularity of MAP (also in general) is the resulting closed-form objective (cf.\,Eq.\;\ref{EqnSCOp}), i.e., no high-dimensional integrals have to be numerically estimated.

However, from a probabilistic machine learning perspective, delta-distributions do not represent theoretically well-grounded approximations. One consequence of using MAP is, for instance, that the ELBO objective is rendered non-finite and thus cannot be considered as a learning objective anymore \citep[see, e.g.,][for a discussion]{barello_sparse_coding_2018}; also the meaning of the ELBO as a lower bound of the log-likelihood ceases to provide meaning in the MAP case. 
Moreover, severe degeneracies are introduced: optimization of $W$ tends to yield infinite entries \citep{olshausen_emergence_1996}, which has to be manually corrected, and data noise $\sigma^2$ and sparsity are not learnable independently of one another. More generally, no probabilistic encoding is provided, i.e., neither can a probabilistic objective be used for tasks such as model selection nor is there uncertainty information available for data encoding (with all the negative consequences for downstream tasks one may seek to address). Such major drawbacks have, consequently, resulted in substantial research efforts to allow for appropriate uncertainty estimation in sparse coding.
Strategies that were followed include (i)~the application expectation propagation \citep{seeger_bayesian_2007}, (ii)~sampling-based fully Bayesian approaches \citep{mohamed_bayesian_2012}, (iii)~amendments of the original data model %
\citep[][]{berkes_sparsity_2007,sheikh_truncated_2014} such that non-trivial variational optimizations could be applied, (iv)~the use of amortized variational distributions for the original data model \citep[][]{barello_sparse_coding_2018}, or (v)~amendments of both data model and variational distributions \citep[][]{tonolini_variational_2020,drefs_direct_2023}.

\section{ELBO CONVERGENCE TO ENTROPY SUMS}
\label{sec:Convergence}
There are many ways to rewrite the ELBO and relate it to Kullback-Leibler divergence, entropies, cross-entropies, mutual information, and expected reconstruction error \citep[e.g.][]{alemi2018fixing, hoffman2016elbo, zhao2017infovae}. 
In contrast, we in this work seek to rewrite the ELBO as
a sum of entropies. The reformulation is obtained through
assuming convergence of a subset of the model parameters, i.e.,
our reformulation is valid on a submanifold in the space
of all model parameters.
Our reformulation is consequently different from previously known reformulations that are valid for the entire space of parameters.

We will first show that the ELBO for the sparse coding model given in \cref{EqnPSC} converges to a sum of three entropies given optimization
of specific model parameters.
The derived results will apply to general variational distributions, the specific variational family of Gaussian distributions
will only be used later.
Our derivations are based on recent results for variational autoencoders (VAEs) that show convergence of
standard (Gaussian) VAEs to entropy sums \citep[][]{DammEtAl2023}. These results have deeper roots in the exponential family property of
Gaussians \citep[][]{Lucke2022Convergence} and we here, for the first time, show that the ELBO of standard sparse coding %
converges to entropy sums.

Our main focus will be on learning, which contrasts with previous work \citep[][]{DammEtAl2023,Lucke2022Convergence} that
investigated the properties of the ELBO at stationary points. 
The focus on learning means that we will exploit properties the ELBO in \cref{EqnELBO} attains if a subset of model parameters have converged.
To specify those parameters we will first reparameterize the sparse coding model introduced in \cref{EqnPSC} before we investigate convergence to entropy sums.

\subsection{Reparameterization of Sparse Coding}

Consider an elementary Bayesian network (\cref{fig:graphical-model}, left) for probabilistic latent variable models, that covers models such as sparse coding, as in \cref{EqnPSC}, probabilistic PCA \citep{tipping_probabilistic_1999}, and VAEs \citep{KingmaWelling2014}. For our derivations of the entropy-based ELBO, we will use a slightly altered form of the model with learnable prior parameters (\cref{fig:graphical-model}, right). Concretely, we constrain the columns of the weight matrix (now termed $\Wt$) to be of unit length but we use parameterized Laplace distributions for the prior (instead of the parameterless standard choice). 

The sparse coding model is thus given by
\begin{align} %
	\begin{split}
		p_\Theta(\mathbf{\vz}) \,&=\, \prod_{h=1}^H \frac{1}{2\lambda_h} \exp\left(-\frac{|z_h|}{\lambda_h}\right) \ \ \mbox{and}\ \\
		p_\Theta(\vx | \vz) \,&=\, \mathcal{N} \big(\vx\,|\,\Wt\vz, \sigma^2\mathbb{I}\big) \enspace,  	
	\end{split}
	\label{EqnPSC2}
\end{align}
where $\forall{}h:\ \sum_{i} \big(\Wt_{ih}\big)^2\,=\,1$, such that $\Theta$ now reads %
${\Theta=(\vlambda,\Wt,\sigma^2) \in \RRR_+^H \times \RRRnorm^{D\times{}H} \times \RRR_+ }$. 
The prior parameters $\lambda_h$ are commonly referred to as scales. 

It is straightforward to show that the reparameterized model in \cref{EqnPSC2} parameterizes the same family of distributions $p_\Theta(\vx)$
as the original model in \cref{EqnPSC}, with a one-to-one mapping between their respective parameters.
This parameterization is important in order to show that the ELBO of sparse coding becomes equal to entropy sums under certain conditions.
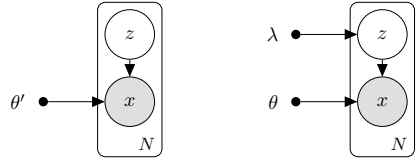
\begin{figure}[!t]
	\centering
	\scalebox{.75}{
		\begin{tikzpicture}
			\node[latent] (z) {$z$};
			\node[obs, below=0.3 cm of z] (x) {$x$};
			\node[paramnode, left=1 cm of x] (theta) {};
			\node[left=0.1 cm of theta] (thetalabel) {$\theta'$};
			\edge {z, theta} {x};
			\plateex {zx} {(z)(x)} {bottomright} {$N$}
		\end{tikzpicture}
		\hspace{1.5cm}
		\begin{tikzpicture}
			\node[latent] (z) {$z$};
			\node[obs, below=0.3 cm of z] (x) {$x$};
			\node[paramnode, left=1 cm of z] (lambda) {};
			\node[left=0.1 cm of lambda] (lambdalabel) {$\lambda$};
			\node[paramnode, left=1 cm of x] (theta) {};
			\node[left=0.1 cm of theta] (thetalabel) {$\theta$};
			\edge {lambda} {z};
			\edge {z, theta} {x};
			\plateex {zx} {(z)(x)} {bottomright} {$N$}
		\end{tikzpicture}
	}
	\caption{\textbf{Latent variable model.} \textit{Left:} graphical model
 representation corresponding to many popular latent variable models, including VAEs. \textit{Right:}~graphical model with learnable prior parameters and constrained likelihood parameters as used in this work.}%
	\label{fig:graphical-model}
\end{figure}

\subsection{Equality of ELBO and Entropy Sums}
\exclude{
	THEOREM ABOUT ENTROPY CONVERGENCE
}
Motivated by previous work \citep{DammEtAl2023}, we now investigate if
the ELBO of the model in \cref{EqnPSC2} becomes equal to entropy sums during learning. \citet{DammEtAl2023} did show equality of ELBO and entropy sums for Gaussian models (Gaussian prior and Gaussian noise model) at all stationary points. 
For this work, we will show equality to entropy sums for the ELBO of sparse coding. 
But, furthermore, it will be important for this work to explicitly note that only the parameters $\vlambda$ and $\sigma^2$ have to be at stationary points in order to realize equality to entropy sums.

\begin{theorem}[ELBO converges to a sum of entropies]
	\label{theo:laplace-prior}
	Consider the ELBO in \cref{EqnELBO} for the sparse coding model in \cref{EqnPSC2} with parameters $\Theta=(\vlambda,\Wt,\sigma^2)$.
	If the parameters $\vlambda$ and $\sigma^2$ are at a stationary point, i.e., %
	\begin{align}
		\label{eq:condition_stationary_points}
		\textstyle\frac{\partial}{\partial \vlambda}\ELBO(\Phi, \Theta) = 0\ \ \mbox{and}\ \ \frac{\partial}{\partial \sigma^2}\ELBO(\Phi, \Theta) = 0\ \enspace,
	\end{align}
	then it applies for any variational distributions $q_{\Phi}(\vz)$ and for any matrix $\Wt$ (with unit column lengths) that:
	\begin{align}
        &\hspace{-0ex}\ELBO(\Phi, \Theta) \nonumber \\
		&=\textstyle\frac{1}{N} \sum_n \mathcal{H}[q_\Phi^{(n)}(\vz)] 
		- \mathcal{H}[p_\Theta(\vz)] 
		- \mathcal{H}[p_\Theta(\vx | \vz)]  \enspace .
		\label{eq:three-entropies}
	\end{align}
\end{theorem}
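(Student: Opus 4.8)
The plan is to expand the KL term in \cref{EqnELBO} and show that, piece by piece, the ELBO reduces to the three entropies in \cref{eq:three-entropies}. Writing $\DKL{q_\Phi^{(n)}(\vz)}{p_\Theta(\vz)} = -\mathcal{H}[q_\Phi^{(n)}(\vz)] - \int q_\Phi^{(n)}(\vz)\log p_\Theta(\vz)\,\d\vz$, the ELBO splits into three contributions: an expected log-likelihood term $A = \frac{1}{N}\sum_n \int q_\Phi^{(n)}(\vz)\log p_\Theta(\vx^{(n)}\,|\,\vz)\,\d\vz$, the averaged variational entropy $B = \frac{1}{N}\sum_n \mathcal{H}[q_\Phi^{(n)}(\vz)]$, and an expected log-prior term $C = \frac{1}{N}\sum_n \int q_\Phi^{(n)}(\vz)\log p_\Theta(\vz)\,\d\vz$. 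The middle term $B$ already matches the first summand of \cref{eq:three-entropies} verbatim, so it remains to show that $A = -\mathcal{H}[p_\Theta(\vx\,|\,\vz)]$ and $C = -\mathcal{H}[p_\Theta(\vz)]$ at the stated stationary points.

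The key structural observation I would exploit is that the three contributions decouple in the relevant parameters: $B$ depends only on $\Phi$, the Gaussian term $A$ is the only place $\sigma^2$ enters (the $\Wt$-reconstruction error inside it is $\sigma^2$-independent), and the Laplace term $C$ is the only place $\vlambda$ enters. Hence $\frac{\partial}{\partial\sigma^2}\ELBO = \frac{\partial}{\partial\sigma^2}A$ and $\frac{\partial}{\partial\lambda_h}\ELBO = \frac{\partial}{\partial\lambda_h}C$, which lets me treat the two stationarity conditions in \cref{eq:condition_stationary_points} independently, for any fixed $\Phi$ and any fixed $\Wt$ with unit columns.

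For $A$ I would insert $\log p_\Theta(\vx^{(n)}\,|\,\vz) = -\frac{D}{2}\log(2\pi\sigma^2) - \frac{1}{2\sigma^2}\|\vx^{(n)}-\Wt\vz\|^2$, so that $A = -\frac{D}{2}\log(2\pi\sigma^2) - \frac{1}{2\sigma^2}R$ with $R = \frac{1}{N}\sum_n \E_{q_\Phi^{(n)}}\big[\|\vx^{(n)}-\Wt\vz\|^2\big]$. Setting $\frac{\partial}{\partial\sigma^2}A = 0$ yields $\sigma^2 = R/D$; substituting back collapses the reconstruction term to the constant $-\frac{D}{2}$ and gives $A = -\frac{D}{2}\log(2\pi e\sigma^2)$, which is precisely $-\mathcal{H}[p_\Theta(\vx\,|\,\vz)]$ for the isotropic Gaussian noise model. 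Analogously for $C$, I would write $\log p_\Theta(\vz) = \sum_h\big(-\log(2\lambda_h) - |z_h|/\lambda_h\big)$, so $C = \sum_h\big(-\log(2\lambda_h) - M_h/\lambda_h\big)$ with $M_h = \frac{1}{N}\sum_n \E_{q_\Phi^{(n)}}\big[|z_h|\big]$. The condition $\frac{\partial}{\partial\lambda_h}C = 0$ gives $\lambda_h = M_h$; resubstituting collapses each summand to $-\big(\log(2\lambda_h) + 1\big)$, i.e. $C = -\mathcal{H}[p_\Theta(\vz)]$, using that a Laplace density of scale $\lambda_h$ has entropy $1 + \log(2\lambda_h)$. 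Summing $A + B + C$ then delivers \cref{eq:three-entropies}.

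I expect the substance to lie in the two collapses of the last paragraph rather than in any single hard obstacle: they are instances of the exponential-family mechanism behind \citep{Lucke2022Convergence}, whereby optimizing a scale parameter forces the expected sufficient statistic ($R$ or $M_h$) to match the model's log-normalizer so that the cross-entropy $-\E_{q}[\log p_\Theta]$ equals the entropy $\mathcal{H}[p_\Theta]$. The only points requiring care are verifying that $R$ and the $M_h$ genuinely carry no hidden dependence on $\sigma^2$ or $\vlambda$ (they do not, since $q_\Phi^{(n)}$ and $\Wt$ are parameterized separately and held fixed while these partials are taken), and noting that the argument uses only stationarity, never maximality, so that \cref{eq:three-entropies} holds at \emph{any} critical point of $\vlambda$ and $\sigma^2$, independently of $\Phi$ and $\Wt$.
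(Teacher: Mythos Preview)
Your proposal is correct and follows essentially the same route as the paper's proof: both decompose the ELBO into the averaged variational entropy plus the expected log-prior and expected log-likelihood terms, observe the parameter decoupling ($\vlambda$ enters only the prior term, $\sigma^2$ only the likelihood term), and then use the respective stationarity conditions to collapse each cross-entropy to the corresponding model entropy. Your treatment of the $\sigma^2$ collapse is in fact more explicit than the main-text proof, which defers that step to the appendix; and your handling of the $\lambda_h$ term via $M_h=\frac{1}{N}\sum_n\E_{q_\Phi^{(n)}}[|z_h|]$ is exactly the paper's computation with the marginalization over $\vz_{/h}$ left implicit.
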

\begin{proof}
	The ELBO objective in \cref{EqnELBO} can be rewritten to consist of three summands, i.e., %
	\begin{align*} %
            \textstyle  \ELBO(\Phi, \Theta) =  \\[-1ex]
                \frac{1}{N} \sum_n \mathcal{H}&[q^{(n)}_{\Phi}(\vz)] 
				+ \overbrace{ 
						\textstyle\frac{1}{N} \sum_n \int q^{(n)}_{\Phi}(\vz) \log p_{\Theta}(\vz) \d\vz
					}^{\ELBO_1(\Phi, \Theta)} \\
				+ &\underbrace{ 
						\textstyle\frac{1}{N} \sum_n \int q^{(n)}_{\Phi}(\vz) \log p_{\Theta}(\vx^{(n)}|\vz) \d\vz
					}_{\ELBO_2(\Phi, \Theta)} \enspace .
	\end{align*}
	The first summand is already in the form of an (average) entropy.
        The last summand, $\ELBO_2(\Phi, \Theta)$, has the form
	\begin{align*}
        \textstyle
		\ELBO_2(\Phi, \Theta) = \frac{1}{N}&\sum_n \Big( - \frac{D}{2} \log\!\big( 2\pi\sigma^2 \big) \\ & - \frac{1}{2\sigma^2} \int q^{(n)}_{\Phi}\!(\vz)\, \| \vxn - \tilde W\vz \|^2 \mathrm{d}\vz 
		\Big) \enspace.
	\end{align*}
	
	If $\frac{\partial}{\partial \sigma^2}\mathcal{L}(\Phi, \Theta) = 0$,
        we get $\ELBO_2(\Phi, \Theta) = -\mathcal{H}[p_\Theta(\vx | \vz)]$,
	which can be shown analogously to the Gaussian noise distribution used in Gaussian variational autoencoders \cite[][]{DammEtAl2023}.\footnote{For completeness, we reiterate the derivation for our case in \cref{app:gaussian-likelihood-convergence}.}
	
    To show that \cref{eq:three-entropies} holds, it is therefore left to show that the summand $\ELBO_1(\Phi, \Theta)$ has the form of
    an entropy under the conditions of the theorem.
    We invoke the factorization $q_\Phi^{(n)}(\vz) = q_\Phi^{(n)}(\vz_{/h} | z_h) q_\Phi^{(n)}(z_h)$ to simplify the integral $\ELBO_1(\Phi, \Theta)$ such that $\ELBO_1(\Phi, \Theta)=$
    \begin{align}
    \begin{split}
		   &\hspace{-1ex}\sum_h \Big( -\log(2\lambda_h) - \frac{1}{N} \sum_n \frac{1}{\lambda_h} \int \, |z_h| \, q_\Phi^{(n)} (z_h) \, \d z_h \Big) \label{eq:laplace:f1}.
    \end{split}
    \end{align}
    \exclude{
	We can consider every dimension $h$ separately. 
        At stationary points we have $\frac{\partial \ELBO_1(\Phi, \Theta)}{\partial \lambda_h}=0$, which implies:
	\begin{align}
		\label{eq:lambda_opt_condition}
		-\frac{\partial \log(2\lambda_h)}{\partial \lambda_h} - \frac{\partial \lambda^{-1}_h}{\partial \lambda_h} \int \, |z_h| \, q_\Phi^{(n)} (z_h) \,\d z_h = 0
		\enspace .
	\end{align}
	Thus, at stationary points the integral in \cref{eq:lambda_opt_condition} reads: 
	\begin{align}
		\int \, |z_h| \, q_\Phi^{(n)} (z_h) \, \d z_h 
		= -\frac{\partial \lambda_h}{\partial \lambda^{-1}_h} \frac{\partial \log(2\lambda_h)}{\partial \lambda_h}  
		= \lambda^\star_h \label{eq:laplace:intabsz} \enspace .
	\end{align}
	Inserting the solution of the integral of \cref{eq:laplace:intabsz} back into \cref{eq:laplace:f1} and denoting  $\Theta^\star = (\tilde W, \sigma^2, \vlambda^\star)$  we get the final negative entropy value, which completes the proof:
    \begin{align*}
		\ELBO_1(\Phi, \Theta^*)
		= \sum_h \left( -\log(2\lambda^\star_h) -\frac{1}{N} \sum_n \frac{1}{\lambda^*_h}\lambda^\star_h \right) \\
		= - \sum_h \log(2\lambda^*_h e) 
		= -\mathcal{H}[p_{\Theta}(\vz)]\enspace . 
    \end{align*}
    }
    As only the term $\ELBO_1(\Phi, \Theta^*)$ of the ELBO depends on $\vlambda$, we obtain at stationary points of \cref{eq:laplace:f1} w.r.t $\lambda_h$:
    \begin{align*}
      0 
      &= \frac{\partial}{\partial \lambda_h} \ELBO(\Phi, \Theta) 
       = \frac{\partial}{\partial \lambda_h} \ELBO_1(\Phi, \Theta) \\
      &= -\frac{1}{\lambda_h} + \frac{1}{N} \sum_n \frac{1}{\lambda_h^2} \int \, |z_h| \, q_\Phi^{(n)} (z_h) \, \d z_h \\
      &= \frac{1}{\lambda_h} \left( -1 + \frac{1}{N} \sum_n \frac{1}{\lambda_h} \int \, |z_h| \, q_\Phi^{(n)} (z_h) \, \d z_h \right) ,
    \end{align*}
    for all $h$. As $\lambda_h \neq 0$, it follows that
    \begin{align}
      \frac{1}{N} \sum_n \frac{1}{\lambda_h} \int \, |z_h| \, q_\Phi^{(n)} (z_h) \, \d z_h = 1 .  \label{eq:conv_proof:eqal_one}
    \end{align}
    Now we insert \cref{eq:conv_proof:eqal_one} into \cref{eq:laplace:f1} and
    obtain:
    \begin{align*}
      \ELBO_1(\Phi, \Theta) 
      &= - \sum_h \log(2e\lambda_h)
      = -\mathcal{H}[p_{\Theta}(\vz)] . \qedhere
    \end{align*}
\end{proof}
In \cref{app:FactorizationTheorem} we present a simple but more general theorem that \textit{constructively} proves convergence to entropies for a small class of exponential family distributions. More general convergence criteria were presented by \citet{Lucke2022Convergence}, see \cref{app:proof-with-general-conditions}.

\section{ENTROPY-BASED ELBOS AS LEARNING OBJECTIVES}

\exclude{THEOREM ABOUT $\lambda_h$ and $\sigma^2$ SOLUTIONS (THIS WILL BE PLACED AS ONE OF OUR TWO MAIN CONTRIBUTIONS)}
The entropy sum expression in \cref{theo:laplace-prior} %
does by itself {\em not} represent a learning objective because it requires the conditions in \cref{eq:condition_stationary_points}; and these are usually not satisfied during optimization.
However, do note that the conditions only concern a subset of the parameters of the ELBO, i.e., $\vlambda$ and $\sigma^2$. No conditions have to be fulfilled for the parameters $\Wt$ and the variational parameters $\Phi$. 
Importantly, this means that the expression in \cref{eq:three-entropies} can potentially be used as a learning objective if we can derive solutions for $\vlambda$ and $\sigma^2$ that satisfy the conditions stated in \cref{eq:condition_stationary_points}. 
For our specific choice of variational distributions $q_\Phi^{(n)}(\vz)$ we can, notably, find analytic such solutions. %

\begin{theorem}[Optimal scales and variance]
	\label{theo:optimal-parameters}
	For the sparse coding model in \cref{EqnPSC2} consider the ELBO in \cref{EqnELBO} defined with Gaussian distributions $q_\Phi^{(n)}(\vz) = \mathcal{N}(\vz \,|\, \vnu^{(n)}, \TT^{(n)})$, for $n=1,\dots,N$, as
	family of variational distributions. The variational parameters are consequently given by $\Phi=(\vnu^{(1)},\ldots,\vnu^{(N)},\TT^{(1)},\ldots,\TT^{(N)})$ with $\vnu^{(n)}\in\RRR^H$ and positive semi-definite matrices $\TT^{(n)}\in\RRR^{H\times{}H}$. 
    For arbitrary such variational distributions
    and for an arbitrary matrix $\Wt$ (with unit length columns), we can then find the values for $\vlambda$ and $\sigma^2$ that
    satisfy \cref{eq:condition_stationary_points}.
    The solutions for $\vlambda$ and $\sigma^2$ are unique and are given by
\begin{align}
    \sigmaopt \big(\Phi,\Wt\big) &= \frac{1}{N} \sum_n \frac{1}{D}\Bigg[\tr(\tilde W^\mathrm{T} \tilde W\TT^{(n)}) \\
        &\hspace{4mm}+(\tilde W\vnu^{(n)}-\vx^{(n)})^\mathrm{T}(\tilde W\vnu^{(n)}-\vx^{(n)}) \Bigg] \enspace , \notag  \\   
    \label{eq:lambda_M}
	\forall h: \lambda^{\mathrm{opt}}_h&\big(\Phi\big) = \frac{1}{N} \sum_n \sqrt{\TT_{hh}^{(n)}} {\cal M}\Bigg( \frac{\nu_h^{(n)}}{\sqrt{\TT_{hh}^{(n)}}} \Bigg) \\ %
    \mbox{with }
	{\cal M}(a) &= \sqrt{ \frac{2}{\pi} } \exp\left(-\frac{1}{2}\,a^2 \right) + a \erf\left(\frac{a}{\sqrt{2}}\right)
	.
	\label{eq:DefFuncM}
\end{align}
\end{theorem}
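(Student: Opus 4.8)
The plan is to exploit the decoupling already visible in the proof of \cref{theo:laplace-prior}: the scales $\vlambda$ enter the ELBO only through the term $\ELBO_1(\Phi,\Theta)$, while the variance $\sigma^2$ enters only through $\ELBO_2(\Phi,\Theta)$, and neither term couples the two parameter blocks. Hence the two conditions in \cref{eq:condition_stationary_points} can be solved independently, and in each case the relevant part of the ELBO is, per coordinate $h$ (resp.\ as a scalar function of $\sigma^2$), a strictly concave function whose unique stationary point I can read off in closed form. The only genuine work is evaluating the two Gaussian expectations that appear, which is where the variational family $q_\Phi^{(n)}(\vz)=\mathcal{N}(\vz\,|\,\vnu^{(n)},\TT^{(n)})$ is used.

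For $\sigma^2$ I would differentiate $\ELBO_2(\Phi,\Theta) = \frac{1}{N}\sum_n\big(-\frac{D}{2}\log(2\pi\sigma^2) - \frac{1}{2\sigma^2}\,I_n\big)$, where $I_n := \int q_\Phi^{(n)}(\vz)\,\|\vxn-\Wt\vz\|^2\,\d\vz$, set $\partial_{\sigma^2}\ELBO_2 = \frac{1}{N}\sum_n(-\frac{D}{2\sigma^2}+\frac{1}{2\sigma^4}I_n)=0$, and solve to obtain $\sigma^2 = \frac{1}{ND}\sum_n I_n$; since the right-hand side is independent of $\sigma^2$ the solution is unique. It then remains to evaluate $I_n = \E_{q_\Phi^{(n)}}[\|\vxn-\Wt\vz\|^2]$. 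As $\Wt\vz$ is Gaussian with mean $\Wt\vnu^{(n)}$ and covariance $\Wt\TT^{(n)}\Wt^{\mathrm T}$, the bias--variance split $\E\|Y\|^2 = \|\E Y\|^2 + \tr(\mathrm{Cov}\,Y)$ gives $I_n = (\Wt\vnu^{(n)}-\vxn)^{\mathrm T}(\Wt\vnu^{(n)}-\vxn) + \tr(\Wt^{\mathrm T}\Wt\,\TT^{(n)})$, using $\tr(\Wt\TT^{(n)}\Wt^{\mathrm T})=\tr(\Wt^{\mathrm T}\Wt\TT^{(n)})$. This is exactly $\sigmaopt(\Phi,\Wt)$.

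For $\vlambda$ I would start from the stationarity identity already established inside the proof of \cref{theo:laplace-prior}, namely $\frac{1}{N}\sum_n\frac{1}{\lambda_h}\int |z_h|\,q_\Phi^{(n)}(z_h)\,\d z_h = 1$ (\cref{eq:conv_proof:eqal_one}), which rearranges to $\lambda_h = \frac{1}{N}\sum_n \int |z_h|\,q_\Phi^{(n)}(z_h)\,\d z_h$; again the right-hand side does not contain $\lambda_h$, so this fixes $\lambda_h$ uniquely. The marginal $q_\Phi^{(n)}(z_h)$ is $\mathcal{N}(z_h\,|\,\nu_h^{(n)},\TT_{hh}^{(n)})$, so the remaining task is the folded-normal mean $\E[|z_h|]$ for $z_h\sim\mathcal{N}(\nu_h^{(n)},\TT_{hh}^{(n)})$. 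Writing $z_h = \nu_h^{(n)} + \sqrt{\TT_{hh}^{(n)}}\,\epsilon$ with $\epsilon\sim\mathcal{N}(0,1)$ and splitting the integral at the sign change of $z_h$ (equivalently, completing the square and expressing the tail mass through the error function) yields $\E[|z_h|] = \sqrt{\TT_{hh}^{(n)}}\,\mathcal{M}\big(\nu_h^{(n)}/\sqrt{\TT_{hh}^{(n)}}\big)$ with $\mathcal{M}$ as in \cref{eq:DefFuncM}. Substituting into the expression for $\lambda_h$ gives \cref{eq:lambda_M}.

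I expect the main obstacle to be purely the explicit evaluation of the folded-normal mean $\E[|z_h|]$: one must carefully track the two contributions from $z_h>0$ and $z_h<0$ and recognize the combination $\sqrt{2/\pi}\,e^{-a^2/2} + a\,\erf(a/\sqrt2)$ as $\mathcal{M}(a)$, being careful with the $\sqrt{2}$ factors relating the Gaussian CDF to $\erf$. Everything else — the decoupling of the two conditions, the bias--variance computation of $I_n$, and the uniqueness argument (each stationarity equation has a parameter-free right-hand side) — is routine.
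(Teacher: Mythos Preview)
Your proposal is correct and mirrors the paper's own argument: decouple the two stationarity conditions, solve $\sigma^2$ from $\partial_{\sigma^2}\ELBO_2=0$ by evaluating the Gaussian expectation of the quadratic reconstruction error, and solve $\lambda_h$ from $\partial_{\lambda_h}\ELBO_1=0$ by computing the folded-normal mean $\E_{q_\Phi^{(n)}}[|z_h|]$ via splitting at the sign change and expressing the result through $\erf$. The only cosmetic differences are that the paper expands the quadratic form directly (you use the equivalent bias--variance split) and cites an explicit Gaussian integral table for $\int_0^\infty z\,e^{-(az+b)^2}\d z$ rather than invoking the folded-normal formula, but the computations and the uniqueness reasoning are the same.
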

\begin{proof}[Proof sketch]
    We solve the arising integrals analytically to find the corresponding parameters at stationary points. \cref{app:derive-analytic-elbo} contains the full derivations.
\end{proof}

\subsection{Entropy-based Learning Objective for Standard Sparse Coding}
\label{sec:Entropy-basedELBO}
\exclude{
	(THESE OBJECTIVES WILL BE PLACED AS THE SECOND OF OUR TWO MAIN CONTRIBUTIONS)
	\ \\
	\ \\
	THEOREM STATING THE OBJECTIVE 
}

We can now consider the subspace of all parameters $\Theta$
with optimal $\vlambda$ and $\sigma^2$. These are
all parameters that can be obtained from parameters $\Phi$ and $\Wt$ through the following function:
\begin{align}
    \Theta_{\mathrm{opt}}(\Phi,\Wt) = \big( \vlambdaopt(\Phi),\Wt,\sigmaopt(\Phi,\Wt)\big) \enspace,
 \label{eq:theta-opt} %
\end{align} 
where $\vlambdaopt(\Phi)$ and $\sigmaopt(\Phi,\Wt)$ are provided by \cref{theo:optimal-parameters}.
As for all $\Theta_{\mathrm{opt}}(\Phi,\Wt)$ the conditions for \cref{theo:laplace-prior} are fulfilled, it applies for all $\Phi$ and $\Wt$ that
	\begin{align}
        \begin{split}
        &\hspace{0ex}\ELBO\big(\Phi, \Theta_{\mathrm{opt}}(\Phi,\Wt)\big) = \frac{1}{N} \sum_n \mathcal{H}[q_\Phi^{(n)}(\vz)] 
		\\
		& \hspace{3ex}- \mathcal{H}[p_{\Theta_{\mathrm{opt}}(\Phi,\Wt)}(\vz)]
		- \mathcal{H}[p_{\Theta_{\mathrm{opt}}(\Phi,\Wt)}(\vx | \vz)] \enspace.
        \end{split}
		\label{eq:theorem-comb}
	\end{align}
The entropy-based right-hand-side of \cref{eq:theorem-comb} only depends on $\Phi$ and $\Wt$, and it suggests itself as a novel objective for these remaining parameters. Importantly, as the entropies in \cref{eq:theorem-comb} are all given in closed-form, and as $\vlambdaopt(\Phi)$ and $\sigmaopt(\Phi,\Wt)$ are analytic functions, the novel objective is an analytic function as well. Using the expressions for the entropies in \cref{eq:theorem-comb} and the solutions $\vlambdaopt(\Phi)$ and $\sigmaopt(\Phi,\Wt)$, the objective is given by (see  \cref{app:derive-analytic-elbo} for intermediate steps):
\begin{align}
    \begin{split}
        &\HELBO(\Phi, \Wt)=  \frac{1}{N} \sum_{n=1}^{N} \frac{1}{2} \log\big(\,|\,2\pi\,e\,\TT^{(n)} \,|\,\big)      \\
        &- \sum_{h=1}^H \log \Bigg( 2\,e\,  \frac{1}{N} \sum_{n=1}^{N} \sqrt{\TT_{hh}^{(n)}} {\cal M}\Bigg( \frac{\nu_h^{(n)}}{\sqrt{\TT_{hh}^{(n)}}} \Bigg)    \Bigg)  \\
        &-\frac{D}{2} \log \Bigg( 2\pi{}\,e\,\frac{1}{N} \sum_{n=1}^{N} \frac{1}{D}\Big[\tr(\tilde W^\mathrm{T} \tilde W\TT^{(n)}) \\
        &\hspace{5mm}+\big(\tilde W\vnu^{(n)}-\vx^{(n)}\big)^\mathrm{T}\big(\tilde W\vnu^{(n)}-\vx^{(n)}) \Big] \Bigg) \enspace,
    \end{split} \label{eq:analytic-elbo} %
\end{align}
where ${\cal M}\big( \cdot \big)$ is the analytic function from \cref{eq:DefFuncM}.

Considering the new objective $\HELBO$, there is, however, a subtle but important difference compared to $\ELBO$: the solutions for $\vlambda$ and $\sigma^2$ introduce dependencies between model parameters $\Theta$ and variational parameters $\Phi$. As a consequence, 
the standard lower-bound relation between log-likelihood (only depending on $\Theta$) and $\HELBO$ becomes more intricate away from stationary points. We can, however, show that the objective $\HELBO(\Phi, \Wt)$ has the 
same stationary points (together with $\vlambdaopt(\Phi)$ and $\sigma^2_{\mathrm{opt}}(\Phi,\Wt)$) as the original ELBO $\ELBO(\Phi, \Theta)$.

\begin{theorem}
    \label{theo:analytic-elbo}
    Consider the sparse coding model formulated in \cref{EqnPSC2} with model parameters ${\Theta=(\vlambda,\Wt,\sigma^2) \in \RRR_+^H}\times \RRRnorm^{D\times{}H} \times \RRR_+$, and variational parameters $\Phi = (\Phi_\nu, \Phi_\TT)$ that parameterize mean $ \vnu^{(n)} \in \RRR^H$ and covariance $\TT^{(n)} \in \RRR^{H\times H}$ (in amortized or non-amortized fashion) where $\Phi_\nu \cap \Phi_\TT = \emptyset$.
    Then, the set of stationary points of the original objective $\ELBO(\Phi,\Theta)$, given in \cref{EqnELBO}, and of the entropy-based objective $\HELBO(\Phi,\Wt)$, given in \cref{eq:analytic-elbo}, coincide.
    Furthermore, at any stationary point it holds %
	\begin{align}
		\label{eq:fixpointsLL}
		\ELBO(\Phi^\star,\Theta^\star) = \HELBO(\Phi^\star,\Wt^\star) \enspace .
	\end{align}

	\exclude{
		
		Consider the same conditions as for Theorem ..., i.e., the sparse coding model of \cref{EqnPSC2} and Gaussian variational distributions.
		Then the ELBO of \cref{EqnELBO}
		
		sparse coding model in \cref{EqnPSC2} and variational optimization using Gaussian distributions with variational
		parameters $\Phi=(\vnu^{(1)},\TT^{(1)},\ldots,\vnu^{(N)},\TT^{(N)})$. 
		Then, its ELBO in \cref{EqnELBO} be written as a sum of entropies
		and thus has an analytic solution given by:
		\begin{align}
			\begin{split}
				\HELBO&(\Phi,\Wt) = \hspace{-1ex} \phantom{+} \frac{1}{N} \sum_{n=1}^{N} \frac{1}{2} \log\big(\,|\,2\pi\,e\,\TT^{(n)} \,|\,\big)      \\
				& - \sum_{h=1}^H \log \Big( 2\,e\, \frac{1}{N} \sum_n \left[ \frac{2\sqrt{\TT_{hh}^{(n)}}}{\sqrt{2\pi}} \exp\left(-\frac{1}{2} \frac{(\nu_h^{(n)})^2}{\TT_{hh}^{(n)}}\right) + \nu_h^{(n)} \erf\left(\frac{\nu_h^{(n)}}{\sqrt{2 \TT_{hh}^{(n)}}}\right) \right]    \Big)  \\
				& -\frac{D}{2} \log \Big( 2\pi{}e \frac{1}{DN} \sum_n \left[\tr\big(\Wt^\mathrm{T}\Wt\TT^{(n)}\big) + \big(\Wt\nu^{(n)}-\vx^{(n)}\big)^\mathrm{T}(\Wt\nu^{(n)}-\vx^{(n)}) \right]      \Big) \enspace .
			\end{split}
		\end{align}
	}
\end{theorem}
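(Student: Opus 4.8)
The plan is to recognize the relation between the two objectives as an instance of \emph{profiling} (partial optimization) and to exploit the envelope theorem. By the construction in \cref{eq:theta-opt,eq:theorem-comb}, the two objectives are linked by the identity
\begin{equation}
    \HELBO(\Phi,\Wt) = \ELBO\big(\Phi,\Theta_{\mathrm{opt}}(\Phi,\Wt)\big),
    \label{eq:plan-identity}
\end{equation}
valid for \emph{all} admissible $(\Phi,\Wt)$, where $\Theta_{\mathrm{opt}}(\Phi,\Wt)=\big(\vlambdaopt(\Phi),\Wt,\sigmaopt(\Phi,\Wt)\big)$. The crucial structural fact, inherited from \cref{theo:optimal-parameters}, is that $\vlambdaopt$ and $\sigmaopt$ are analytic and annihilate the corresponding partial derivatives of $\ELBO$ \emph{identically} in $(\Phi,\Wt)$, i.e.\ $\partial_{\vlambda}\ELBO$ and $\partial_{\sigma^2}\ELBO$ vanish everywhere along the graph of $\Theta_{\mathrm{opt}}$, not just at isolated points.

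First I would differentiate \cref{eq:plan-identity} by the chain rule. For the free parameters $\Phi$ this gives
\begin{align*}
    \frac{\partial \HELBO}{\partial \Phi}
    &= \frac{\partial \ELBO}{\partial \Phi}
       + \frac{\partial \ELBO}{\partial \vlambda}\,\frac{\partial \vlambdaopt}{\partial \Phi} \\
    &\quad + \frac{\partial \ELBO}{\partial \sigma^2}\,\frac{\partial \sigmaopt}{\partial \Phi},
\end{align*}
with every $\ELBO$-derivative evaluated at $\Theta=\Theta_{\mathrm{opt}}(\Phi,\Wt)$, and analogously for $\Wt$ (using that $\vlambdaopt$ does not depend on $\Wt$). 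Because $\partial_{\vlambda}\ELBO$ and $\partial_{\sigma^2}\ELBO$ vanish at $\Theta_{\mathrm{opt}}$, all correction terms drop and I obtain the envelope identities
\begin{align*}
    \frac{\partial \HELBO}{\partial \Phi}\Big|_{(\Phi,\Wt)}
      &= \frac{\partial \ELBO}{\partial \Phi}\Big|_{\Theta_{\mathrm{opt}}(\Phi,\Wt)}, \\
    \frac{\partial \HELBO}{\partial \Wt}\Big|_{(\Phi,\Wt)}
      &= \frac{\partial \ELBO}{\partial \Wt}\Big|_{\Theta_{\mathrm{opt}}(\Phi,\Wt)}.
\end{align*}

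With these in hand I would prove both inclusions of the stationary-point sets. \emph{Forward:} if $(\Phi^\star,\Wt^\star)$ is stationary for $\HELBO$, the envelope identities give $\partial_\Phi\ELBO=\partial_{\Wt}\ELBO=0$ at $\Theta^\star:=\Theta_{\mathrm{opt}}(\Phi^\star,\Wt^\star)$, while $\partial_{\vlambda}\ELBO=\partial_{\sigma^2}\ELBO=0$ there by construction; hence $(\Phi^\star,\Theta^\star)$ is stationary for $\ELBO$. \emph{Backward:} if $(\Phi^\star,\Theta^\star)$ is stationary for $\ELBO$, then in particular $\partial_{\vlambda}\ELBO=\partial_{\sigma^2}\ELBO=0$, and the \emph{uniqueness} asserted in \cref{theo:optimal-parameters} forces $\vlambda^\star=\vlambdaopt(\Phi^\star)$ and $(\sigma^2)^\star=\sigmaopt(\Phi^\star,\Wt^\star)$, i.e.\ $\Theta^\star=\Theta_{\mathrm{opt}}(\Phi^\star,\Wt^\star)$; feeding the remaining conditions through the envelope identities yields $\partial_\Phi\HELBO=\partial_{\Wt}\HELBO=0$. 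The value equality \cref{eq:fixpointsLL} is then immediate: evaluating \cref{eq:plan-identity} at the stationary point and using $\Theta^\star=\Theta_{\mathrm{opt}}(\Phi^\star,\Wt^\star)$ gives $\HELBO(\Phi^\star,\Wt^\star)=\ELBO(\Phi^\star,\Theta^\star)$.

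The main obstacle I anticipate is treating stationarity with respect to $\Wt$ correctly, since $\Wt$ is confined to the manifold $\RRRnorm^{D\times H}$ of unit-length columns: ``stationary'' must mean vanishing of the tangent-space-projected (Riemannian) gradient, equivalently of the gradient of the Lagrangian carrying the unit-norm constraints. Because the constraint involves only $\Wt$ and is identical for both objectives, and the free $\Wt$-gradients coincide by the envelope identity, the projected/Lagrangian conditions coincide as well; I would spell this out to avoid conflating constrained and unconstrained stationarity. A secondary point requiring care is that the envelope step relies on the profiled-out derivatives vanishing \emph{identically} along the graph of $\Theta_{\mathrm{opt}}$ (so that differentiating the substitution is legitimate), which is exactly what \cref{theo:optimal-parameters} supplies together with the analyticity of $\vlambdaopt$ and $\sigmaopt$; I would also confirm that the stated hypothesis $\Phi_\nu\cap\Phi_\TT=\emptyset$ is only used to guarantee that the mean and covariance parameterizations are independently varied, and does not otherwise enter the envelope argument.
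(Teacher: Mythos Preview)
Your argument is correct and is in fact more streamlined than the paper's. The paper proceeds by explicit term-by-term computation (their Lemma~1 in \cref{Theorem3:Proof}): they write out closed-form expressions for $\partial_\phi \ELBO_1$, $\partial_\phi \ELBO_2$, $\partial_{\Wt}\ELBO_2$ and their entropy counterparts, and observe that each pair differs only by a scalar prefactor ($1/\lambda_h$ versus $1/\lambda_{\mathrm{opt},h}$, or $1/\sigma^2$ versus $1/\sigmaopt$). These prefactors coincide exactly when \cref{eq:condition_stationary_points} holds, and the two inclusions of the stationary-point sets then follow as in your argument. You instead invoke the profiling identity $\HELBO(\Phi,\Wt)=\ELBO(\Phi,\Theta_{\mathrm{opt}}(\Phi,\Wt))$ directly and apply the envelope theorem, so the correction terms drop automatically and no explicit gradient computation is needed.

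What each approach buys: your envelope argument is shorter, does not actually require the disjointness hypothesis $\Phi_\nu\cap\Phi_\TT=\emptyset$ (the paper itself remarks that this assumption can be dropped but keeps it only to simplify the explicit calculations), and makes the role of the uniqueness clause in \cref{theo:optimal-parameters} transparent in the backward direction. The paper's explicit computation yields slightly more information: it exhibits \emph{how} the gradients of the two objectives relate even away from the optimal submanifold (namely via the multiplicative factors $\sigma^2/\sigmaopt$ and $\lambda_h/\lambda_{\mathrm{opt},h}$), and the authors leverage this to argue that the Hessians---and hence the signature of each stationary point---also coincide. Your handling of the unit-norm constraint on $\Wt$ via projected/Lagrangian gradients is appropriate and matches what the paper leaves implicit.
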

\begin{proof}[Proof sketch]
As all stationary points must satisfy \cref{eq:condition_stationary_points}, \cref{eq:fixpointsLL} holds directly by \cref{theo:laplace-prior}. 
To show that any stationary point of one objective is also a stationary point of the other we show that the gradients of both objectives 
coincide whenever \cref{eq:condition_stationary_points} holds.
The full proof is deferred to \cref{Theorem3:Proof}.
 \exclude{
	The solutions for $\vlambda$ and $\sigma^2$ in ... and ... are defined for any parameters $\Phi$ and $\Wt$. Therefore, we can define the function $\vlambda(\Phi)$ using
	the right-hand-side of ..., and the function $\sigma^2(\Phi,\Wt)$ using the right-hand-side of ... . In virtue of Theorem~\ref{theo:optimal-parameters}, the functions $\vlambda(\Phi)$
	and $\sigma^2(\Phi,\Wt)$ map to those values of $\vlambda$ and $\sigma^2$ at which applies conditions (\ref{eq:condition_stationary_points}) are fulfilled. In virtue of Theorem~\ref{theo:laplace-prior}, the ELBO ${\cal L}(\Phi,\Theta)$ is equal to the three entropies expression (\ref{eq:three-entropies}). We use the formulas for the individual entropies
	to obtain:
	\begin{align}
		\mathcal{L}(\Phi, \Wt, \vlambda, \sigma^2) = ...
		\label{eq:three-entropies-explicit}
	\end{align}
	We then define $\HELBO(\Phi,\Wt) = \mathcal{L}(\Phi, \Wt, \vlambda(\Phi), \sigma^2(\Wt))$, i.e., the solution ... and ... remove the dependence of $\mathcal{L}$ on $\vlambda$
	and $\sigma^2$. As the solutions for $\vlambda$ and $\sigma^2$ represent (global) maxima of $\mathcal{L}(\Phi, \Wt, \vlambda, \sigma^2)$ w.r.t.\ $\vlambda$ and $\sigma^2$, the new function
	$\HELBO(\Phi,\Wt)$ is greater equal ${\cal L}(\Phi,\Theta)$. As ${\cal L}(\Phi,\Theta)$ is a lower bound of the log-likelihood ${\cal L}^{\mathrm{LL}}(\Theta)$ for arbitrary $\Phi$ and as there is by definition of $\HELBO$ always a $\Theta$ such that $\HELBO(\Phi,\Wt)={\cal L}(\Phi,\Theta)$, we conclude that $\HELBO(\Phi,\Wt)$  ?????? ...
	\begin{align}
		\mathcal{L}(\Phi, \Theta) = 
		\frac{1}{N} \sum_n \mathcal{H}[q_\Phi^{(n)}(\vz)] 
		- \mathcal{H}[p_\Theta(\vz)] 
		- \mathcal{H}[p_\Theta(\vx | \vz)] \enspace .
	\end{align}
    Use optimal parameters of the entropies from \cref{theo:optimal-parameters} to complete the entropy-based expression of ELBO in \cref{eq:three-entropies}.}
\end{proof}
In virtue of \cref{theo:analytic-elbo}, $\HELBO(\Phi, \Wt)$, given in \cref{eq:analytic-elbo}, can be used as novel objective for standard sparse coding with Laplace prior. 
Note that also the error function is known to be an analytic function, which can be seen, e.g., by considering its representation by the B\"urmann series \citep{schopf_burmanns_2014}. The series representations also highlight that for all practical reasons, very accurate (and readily available) closed-form approximations of the error function can be used for optimization (see \cref{app:approx-error-function}).

\exclude{
	Writing the error function in the form of a convergent, infinite power series(?) serves to highlight that
	objective ... is an analytic (but not a closed-form) solution.
	\ \\
	ANALYTIC FORMULA FOR THE ELBO!!
	\ \\
	CLOSED-FORM FORMULA FOR THE ELBO (Bürman approximation, mention others)
}
\exclude{ WE PLACE CONSTANTS IN FRONT OF THE ENTROPIES
	
	CITE: BETA-VAE
	and
	Improving Explorability in Variational Inference with Annealed Variational Objectives, Huang et al., NeurIPS 2018
}

\exclude{
	\subsection{Entropy ELBOs for Diagonal and Isotropic Gaussians}
	
	HERE WE STATE:
	
	ELBO FOR DIAGONAL GAUSSIANS
	
	ELBO FOR PROPORTIONAL 1 GAUSSIANS
}

\subsection{Properties of the New Objective}
\Cref{eq:analytic-elbo} represents the most general form of the novel objective.
In case of diagonal covariance matrices, $q^{(n)}(\vz) = \mathcal{N}\big(\vz \,|\, \vnu^{(n)}, \diag\big( \taunsqr{1} \ldots \taunsqr{H} \big) \big)$, \cref{eq:analytic-elbo} simplifies significantly as $\tr(\tilde W^\mathrm{T} \tilde W\TT^{(n)})$ becomes $\sum_h \taunsqr{h}$, and the log-determinant of $\TT^{(n)}$ is easy to compute (see \cref{app:derive-other-elbo} for the details).%

Also note that the entropy objective is fully compatible with amortized inference, i.e., if the variational parameters are functions (usually deep neural networks) of data points: $\vnu^{(n)} = \text{DNN}_{\nu}(\vx^{(n)}; \Phi)$ and $ \TT^{(n)} = \text{DNN}_{\TT}(\vx^{(n)}; \Phi)$.
In this context, the functions can map to diagonal covariance matrices (as is standard), to full rank covariance matrices, or to intermediate low-rank versions. The entropy-ELBO remains an analytic function in all these cases.
As a consequence, we can use standard gradient-based approaches for analytic functions to optimize all parameters. Without an analytic ELBO, sampling-based estimation of integrals and the reparameterization trick, or similar approaches to estimate ELBO gradients are required (Sec.\,\ref{sec:num} and \cref{app:amortized-learning} for details and experiments).

\subsection{Entropy Annealing}
\label{subsec:EntropyAnnealingDef}
Direct optimizations of ELBO objectives often result in locally optimal solutions. This observation is a main motivation to use {\em annealed} versions of the ELBO objective. 
A very prominent example is $\beta$-annealing \citep{higgins2017beta, huang_improving_2018}. 
In $\beta$-annealing, the  KL-divergence term is weighted: ${\cal L}(\Phi, \Theta) = \int q_\Phi(\vz)\log p_\Theta(\vx|\vz) \d \vz - \beta \DKL{q_\Phi(\vz)}{p_\Theta(\vz)}$.\footnote{In terms of entropies the KL-divergence corresponds -- at optimality -- to the gap between prior entropy and average variational entropy.}
The here derived entropy-based ELBOs %
invite to new types of annealing. As all terms of the ELBO are of the same principled type (i.e., entropy) it is straightforward
to reweight the entropy contributions for annealing. The annealed objective thus becomes:
\begin{align}
  \begin{split}
	\HELBO_{\gamma, \delta}(\Phi, \Theta) = 
	&\frac{1}{N} \sum_n \mathcal{H}[q_\Phi^{(n)}(\vz)] \\
	&- \gamma \, \mathcal{H}[p_\Theta(\vz)] 
	- \delta \, \mathcal{H}[p_\Theta(\vx | \vz)]  \enspace . \label{eq:three-entropies-annealing}
   \end{split}
\end{align}
Notice that we can effectively anneal equivalently to ${\beta}$-annealing by setting $\delta = \frac{1}{\beta}$ and $\gamma=1$. By using $\gamma=\delta\geq{}1$, we recover {\em energy tempering} (a.k.a., $\alpha$-annealing) \citep{katahira_deterministic_2008, huang_improving_2018}. \Cref{eq:three-entropies-annealing} suggests a third alternative using $\gamma\geq{}1$ and $\delta=1$, which we will call {\em prior annealing}.
\subsection{Relation to \texorpdfstring{$l_1$}{ℓ₁}-Sparse Coding}
\label{subsec:classicalSC}

\cref{eq:analytic-elbo} as well as the ELBOs for less general Gaussian distributions (see \cref{app:derive-other-elbo}), represent analytic learning objectives for sparse coding.
Therefore, it may be of interest to study the relation of entropy-ELBOs to objectives for standard $l_1$ sparse coding, which are likewise analytic functions. And $l_1$-objectives have extensively been researched \citep[][]{daubechies_iterative_2004, lee_efficient_2006, beck_fast_2009, gregor_learning_2010, hastie_statistical_2015}.
At first sight, the similarity between entropy-ELBOs and $l_1$-objectives does not seem to go very far because the intricate ELBO in \cref{eq:analytic-elbo} seems very different from objectives like \cref{EqnSCOp}.
At closer inspection, the similarity to $l_1$-objectives is higher than it first seems, however.
In this context, consider the entropy-based ELBO for Gaussian distributions with diagonal covariance matrix (derived in \cref{app:derive-other-elbo}). If we use an annealed version in analogy to \cref{eq:three-entropies-annealing}, the resulting 
objective function is given by \cref{eqn:helbodiag-anneal} in the appendix. We set $\delta=1$ to use prior annealing.
If we now focus on the optimization of variational parameters $\vnu^{(n)}$, then just the latter two of the three entropies are relevant (the first is independent of $\vnu^{(n)}$). Removing constant terms of these remaining entropies then results in the following objective for $\vnu^{(n)}$ that has to be minimized:\footnote{Minimization is the convention for $l_1$-objectives.}
\begin{align}
\underbrace{\frac{D}{2} \log \big( \sigma_{\mathrm{opt}}^2(\Phi,\Wt) \big)}_{\mathrm{reconstruction}} +\,\gamma\,\underbrace{\sum_{h=1}^H \log\big(\lambda^{\mathrm{opt}}_h(\Phi) \big)}_{\mathrm{sparsity}} \,.
\label{eqn:lonelocal}
\end{align}
Considering the result of \cref{theo:optimal-parameters} for $\sigma_{\mathrm{opt}}^2(\Phi,\Wt)$ for %
diagonal covariances, we obtain
\begin{align}
\sigma_{\mathrm{opt}}^2(\Phi,\Wt) \,
  &= \,\frac{1}{D}\frac{1}{N} \sum_{n=1}^N \|\Wt\vnu^{(n)}-\vxn)\|^2 + \frac{H}{D}\, \bar{\tau}^2, \nonumber
\end{align}
where $\bar{\tau}^2$ is just the average of $\taunsqr{h}$ across data points and latent dimensions (see \cref{eq:tau_avg}). %
Hence, the first term of \cref{eqn:lonelocal} is (a monotonic function of) a mean squared reconstruction error. Using again \cref{theo:optimal-parameters}, this time for $\lambda^{\mathrm{opt}}_h(\Phi)$, we can now more closely inspect the second term. For this, we define a smoothed magnitude function $|\cdot|^*$ using
the function ${\cal M}(a)$ of \cref{eq:DefFuncM} such that the solutions for the prior parameters become:
\begin{align}
	\lambda^{\mathrm{opt}}_h(\Phi) = \frac{1}{N} \sum_n \big|\nu_h^{(n)}\big|^\star\ \mbox{with}\ \big|\nu_h^{(n)}\big|^\star = \tau_h^{(n)} {\cal M}\Big( \frac{\nu_h^{(n)}}{\tau_h^{(n)}} \Big). \nonumber
\end{align}
Observe that for small $\tau_h^{(n)}$ compared to $\nu_h^{(n)}$, we indeed obtain that $\big|\nu_h^{(n)}\big|^\star \approx \big|\nu_h^{(n)}\big|$, see \cref{app:m-function}. 
Hence, the second term of \cref{eqn:lonelocal} penalizes large values of $\nu_h^{(n)}$ according to a (logarithmic) $l_1$ sparsity penalty.

Taking gradients w.r.t.\ $\vnu^{(n)}$ of the objective in \cref{eqn:lonelocal} makes the similarity to classical $l_1$-objectives like \cref{EqnSCOp} still more salient because the logarithms disappear as well as the $\bar{\tau}^2$-offset in $\sigma_{\mathrm{opt}}^2(\Phi,\Wt)$. However, gradients of the reconstruction and sparsity term will be weighted by $1/\sigma_{\mathrm{opt}}^2(\Phi,\Wt)$ and $1/\lambda^{\mathrm{opt}}_h(\Phi)$, respectively. %

In the next section, we will use different values of $\gamma\geq{}1$, i.e., prior annealing to investigate resulting encodings, e.g., for image patches.
This will allow us to numerically investigate the similarity of (annealed) entropy-ELBOs and $l_1$-objectives.
From a theoretical perspective, a notable difference to standard $l_1$-objectives is, however, that \cref{eqn:lonelocal} is derived from the standard ELBO objective. That ELBO is itself an approximation for maximum likelihood parameter estimation. As a consequence, the optimal $\gamma$ is known in our case ($\gamma=1$), while for classical $l_1$-sparse coding the weighting factor is an important free parameter 
that has to be tuned.

\section{EXPERIMENTS}
\label{sec:num}
We use numerical experiments to verify the feasibility of the novel entropy-based objectives. Our main interests will be convergence speed and insights into the effects of entropy annealing on sparsity. The source code for the experiments is available at \url{https://github.com/Learning-with-Entropies/sparse-coding.git}
\subsection{Verification on Artificial Data}
We first investigated learning based on entropy-based ELBOs using artificial data with ground-truth. Data consisted of $N=1000$ data points with horizontal and vertical bars \citep{foldiak_forming_1990,Hoyer2002} on a $5\times5$ grid ($D=25$). We used (unamortized) Gaussian variational distributions with full covariance matrix (i.e., variational parameters for all data samples) to optimize the model given in \cref{EqnPSC2} with $H=10$. Using \cref{eq:analytic-elbo}, the variational parameters were then optimized jointly with $\tilde W \in \RRR^{D \times H}$ by applying L-BFGS \citep{liu_limited_1989}, which is readily available in PyTorch \citep{paszke2017automatic}. After convergence, each recovered generative field (GF; i.e., each column of $\tilde W$) contained one bar (\cref{fig:bars-dataset}). 
Convergence was fast: after approximately 100 L-BFGS calls 
to optimize the entropy-ELBO, ELBO values were already close to values of the ELBO for (ground-truth) generating parameters (for details, see \cref{app:bars-dataset}). 

\begin{figure}[hbt]
	\centering
	\begin{minipage}[b]{.23\textwidth}
		\centering
		\includegraphics[width=\textwidth, trim={0.5cm 0.5cm 0.5cm 0.5cm}, clip]{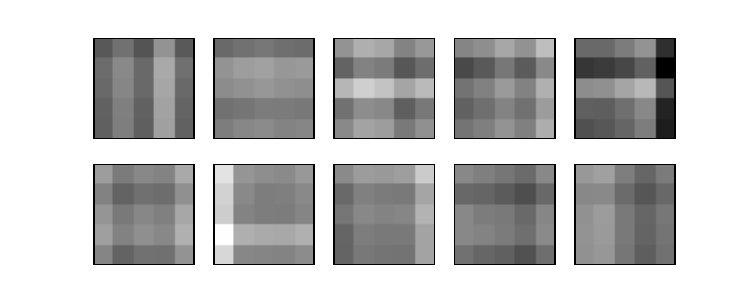}
		\subcaption{Training data samples}\label{fig:bars:1}
	\end{minipage}
	\hfill
	\begin{minipage}[b]{.23\textwidth}
		\centering
		\includegraphics[width=\textwidth, trim={0.5cm 0.5cm 0.5cm 0.5cm}, clip]{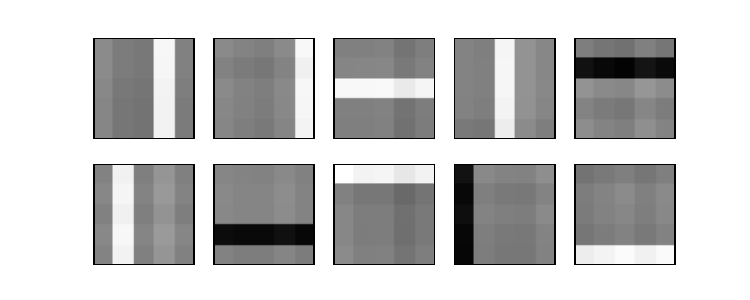}
		\subcaption{Learned generative fields}\label{fig:bars:2}
	\end{minipage}
	\caption{\textbf{Artificial sparse bars dataset.} (a)~Example data which is constructed by Laplace-distributed activation of horizontal and vertical bars. %
    (b)~Optimizing $\HELBO$ recovers the bars and their activations (up to signs of generative fields).\vspace{-2.2ex} %
    }
	\label{fig:bars-dataset}
\end{figure}

\subsection{Natural Image Patches, Sparsity and Entropy Annealing}
After verification on artificial data, we investigated entropy-ELBOs using
the presumably most standard application of sparse coding models: natural image patches.
Learning sparse codes for image patches based on \cref{EqnPSC} is also the most common approach to explain neuronal receptive fields in cortical area V1 \citep{olshausen_emergence_1996}. Originally estimated with MAP approximation, further extensions were developed to employ, e.g., VAE-style training with stochastic estimation of the ELBO gradient using the “reparameterization trick” \citep[e.g.][]{barello_sparse_coding_2018, tonolini_variational_2020}. Here we used analytic entropy-ELBOs to optimize the model in \cref{EqnPSC2} on whitened images \citep{olshausen_emergence_1996} with 
$N=204\,800$, $D=16\times16$ and $H=100$ and $400$ (\cref{app:overcomplete-basis} for experiments with $H=400$).
We explored different versions of the entropy-ELBO, \cref{eq:analytic-elbo}, in order to investigate the effect of entropy-based annealing and to compare it to amortized optimization. To allow for sufficient computational efficiency, we used variational distributions with diagonal (see \cref{eqn:helbodiag}) or low-rank covariance matrices.
Concretely, we used (A)~an entropy-ELBO without annealing; (B)~an entropy-ELBO (\cref{eqn:helbodiag-anneal}) with prior annealing ($\gamma\geq{}1$, $\delta=1$); (C)~an entropy-ELBO using amortized variational distributions with diagonal covariance; and (D)~the same amortized entropy-ELBO as in (C) but with low-rank approximation of the covariance matrices.
For (D) we also use prior annealing ($\gamma\geq{}1$, $\delta=1$).
For (A) and (B) we used EM-like updates: for every minibatch, we optimized variational parameters with L-BFGS and then took a gradient step to update $\Wt$. For (C) and (D) neural networks were used to map data to means and to (diagonal or low-rank) covariances, and for parameter optimization we used Adam-based gradient ascent provided by the standard PyTorch implementation \citep{paszke2017automatic}.

For all four different versions 
of entropy-ELBOs, optimization ultimately resulted in the familiar Gabor-like generative fields: \Cref{fig:compare-optimization-4} shows ELBO-optimization for the different versions, \cref{app:compare-annealing} shows final GFs for $H=100$ and $H=400$.
However, salient quantitative differences could be observed (see \cref{fig:compare-optimization-4} and \cref{fig:image-patches-dataset}). Prior annealing of non-amortized entropy-ELBOs resulted in the fastest convergence and the highest ELBO values. Without annealing, non-amortized entropy-ELBOs finally resulted in very similar ELBO values but required longer to converge.
Using also diagonal encoder covariances but amortized optimization, entropy-ELBOs converged more slowly and showed lower final ELBO values (see \cref{fig:compare-optimization-4}). Final ELBOs improved using low-rank covariance approximations and prior annealing (again \cref{fig:compare-optimization-4}). Low-rank covariances had a stronger effect on improvements \citep[][for a related analysis]{Wipf2023} than prior annealing. 
In general, we observed that annealing
has a comparably smaller effect for the amortized ELBO versions, which may be due to an interaction between annealing and standard Adam optimizers (\cref{app:amortized-learning} for details). 

\begin{figure}[t!]
  \centering
\includegraphics[width=0.99\linewidth, trim={-0.2cm 1.0cm -0.2cm 0.5cm}]{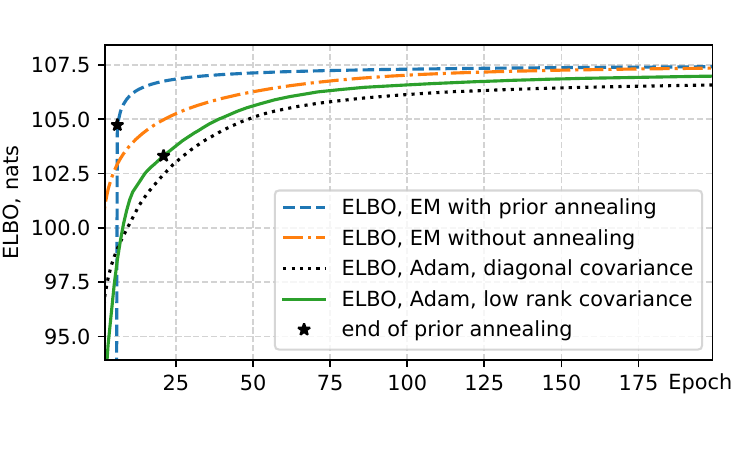}
\caption{\textbf{Optimization of entropy-ELBOs}. Two non-amortized optimizations and two amortized optimizations are shown. Two optimizations use annealing.\vspace{-1ex}}
  \label{fig:compare-optimization-4}
\end{figure} 
\exclude{
\begin{figure}[Ht]
		\centering
		\includegraphics[width=\linewidth, trim={0 0.6cm 0 0cm}, clip]{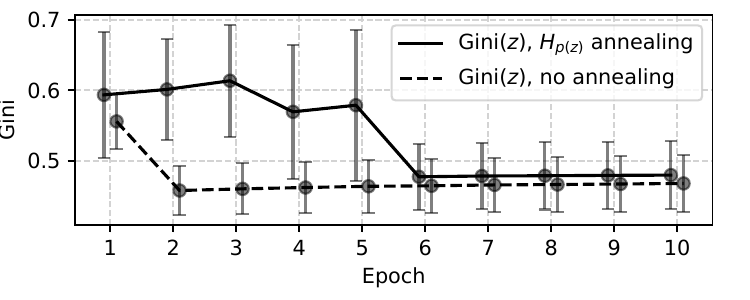}
	\caption{\textbf{Prior anealing} on natural image patches dataset.
            Gini coefficients (mean $\pm$SD bars) of the latent codes (\cref{fig:image-patches-dataset} for example generative fields.}
	\label{fig:image-patches-ELBO}
\end{figure}
}
\begin{table}[b]
	\centering
	\caption{{\bf Different entropy annealings}. No annealing (top), prior annealing (middle) and $\beta$-annealing (bottom) are compared (\cref{{app:compare-annealing}} for details).} 
 \resizebox{\columnwidth}{!}{%
	\begin{tabular}{ll rrr r r}
		\toprule
         \multicolumn{2}{l}{\textbf{ANNEALING}}
        & $\mathcal{H}[p_\Theta(\vz)]$ & $\mathcal{H}[p_\Theta(\vx|\vz)]$ & $\mathcal{H}[q_\Phi(\vz)]$ & ELBO & Gini$(\vz)\pm$SD \\
		\midrule 
		\multicolumn{2}{l}{No annealing} &   32.40 & -234.70 &  -98.97 &  103.33 & $0.47\pm0.04$ \\
		\midrule 
		\multirow{3}{*}{$\mathcal{H}[p_\Theta(\vz)]$} 
		& $\gamma=10.0$ & -218.72 &   10.71 & -365.22 & -157.21 & $0.59\pm0.09$ \\
		& $\gamma=2.0$  &  -17.29 & -144.81 & -112.68 &   49.41 & $0.58\pm0.10$ \\
		& $\gamma=1.0$  &   29.65 & -235.50 &  -99.67 &  106.18 & $0.48\pm0.05$ \\
		\midrule 
		\multirow{3}{*}{$\mathcal{H}[p_\Theta(\vx | \vz)]$} 
		& $\delta=0.14$   & -228.33 &   30.59 & -234.02 &  -36.27 & $0.55\pm0.03$  \\
		& $\delta=0.50$   & -17.77 &  -115.24 & -72.77 &   60.24 & $0.62\pm0.04$  \\
		& $\delta=1.0$   &   29.74 & -234.98 &  -99.43 &  105.80 & $0.47\pm0.05$  \\
		\bottomrule
	\end{tabular}
 }
	\label{tab:annealing}
\end{table}

Next, we were interested in the different types of annealing suggested by entropy-ELBOs, see \cref{eq:three-entropies-annealing}. Prior annealing ($\gamma\geq{}1$, $\delta=1$) very quickly resulted in a sparse encoding and localized GFs (see \cref{fig:image-patches-dataset}). This is consistent with the role of $\gamma$ in weighting a sparsity penalty term, see \cref{eqn:lonelocal}. Hence, it is {\em prior annealing} with $\gamma\geq{}1$ which is analogous to high weights for the 
sparsity penalty in $l_1$ sparse coding. That prior annealing results in sparser encodings is also confirmed when, e.g., using the Gini index \citep{niall_measures-of-sparsity_2009} as a measure of sparsity (both Gini values and ELBO values are high, see \cref{tab:annealing}).
In contrast, $\beta$-annealing (used in $\beta$-VAEs), represents a type of regularization different from prior annealing resulting in  
much less localized GFs (\cref{{app:compare-annealing}} and \cref{fig:beta-annealing-all}). %

\exclude{
\ \\

\ \\
-------------------------------------------------
\ \\
\ \\
Hence, weighting the prior entropy more strongly corresponds to 

scemes. Prior annealing did result 

Annealing we did not observe to improve  

is very fast for inference, however. 

of optimization based on the different entropy-ELBOs. All generative fields do ultimately converge 

The latter uses an X-X-X-X ResNet DNN (APP XX for details). Fig. XX shows a comparison of non-annealed optimization ()

\ \\
OLD TEXT\\
\ \\
(C)~an entropy-ELBO with $\beta$-VAE-like annealing ($\gamma=\delta\geq{}1$ TRUE?); and
sWe used 

FORMULA 160

Concretely we investigated: (A)~an entropy ELBO with uncorrelated Gaussian distributions and no annealing; (B)~an entropy ELBO

mainly three different versions of the analytic ELBO: We used (A) an entropy

full covariance. However, we explored one entropy-ELBO
with amortized Gaussians and low-rank approximations.

For efficient optimization, we used the versio

\cref{eq:analytic-elbo} 
We used the model \cref{EqnPSC2} with up to $H=400$ latents. For optimization, we used the analytic ELBO \cref{eq:analytic-elbo} but with diagonal covariances for the variational distributions to be numerically more efficient (App.\,XXX).

Here we apply the entropy-based ELBO to

We here use $N=204\,800$ whitened natural images \citep{olshausen_sparse_nodate}
with $D=16\times16$ patches sampled at random positions.

We used $N=204\,800$ of 16$\times$16 randomly sampled image patches and

with $H=100$ dimensions 

. We used objective \cref{eq:analytic-elbo} but with diagonal covariances for the variational distributions (App.\,XXX). \mycomment{Pointer to appendix!}
The objective was iteratively updated:
first, for a batch of samples, we optimized the variational parameters (mean and diagonal covariance for each data point),
then we updated the $\tilde W$ matrix with a standard gradient ascent step. 
\mycomment{How were the variational parameters optimized? Also gradient? Please state.}
After every epoch, we computed the entropy-ELBOs by fixing $\tilde W$ and optimized the variational distributions for all data points. %
\cref{fig:image-patches-ELBO,fig:image-patches-dataset} shows the generative fields for a run with $H=100$ (see App. XXX for runs with $H=400$).  
ELBO values quickly increased with values after two eqochs being alread very close to those reached finally after ... epochs. We observed this behavior in all runs.

close to their final  
and $\tilde W$ usually did not take more than two epochs.
}

\exclude{
We found a positive effect of the prior entropy weighting and annealing on the sparseness and the localization of the estimated generative fields, see \cref{fig:image-patches-ELBO,fig:image-patches-dataset}.
By reweighing the prior entropy we could control the presence of dense generative fields to capture high-frequency textures, and the latent code sparseness, see \cref{tab:annealing}.

\begin{figure}[ht]
		\centering
		\includegraphics[width=\linewidth, trim={0 0.6cm 0 0cm}, clip]{images/vanhateren/olshausen-100/mini-vanhateren-gini-notitle.pdf}
		\includegraphics[width=\linewidth]{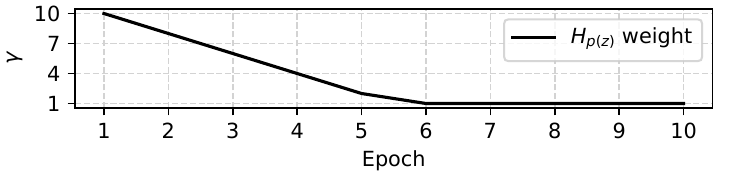}
	
	\caption{\textbf{Prior anealing} on natural image patches dataset.
            Gini coefficients (mean $\pm$SD bars) of the latent codes (top), and the prior annealing schedule (bottom). 
            See \cref{fig:image-patches-dataset} for the learned generative fields.}
	\label{fig:image-patches-ELBO}
\end{figure}
We use the Gini coefficient as a measure of sparsity \citep{niall_measures-of-sparsity_2009}.
Such small numerical difference of ELBO for visually different generative fields may relate to the analysis of linear filter performance for modeling of natural images \citep{eichhorn_natural_2009}, where orientation-selective filters learned with ICA provided less than 5\% improvement in average log-loss compared to PCA.

}

\begin{figure}[!ht]
	\begin{minipage}[b]{.23\textwidth}
		\centering
		\includegraphics[width=\linewidth, trim={2cm 2.5cm 2cm 2.5cm}, clip]{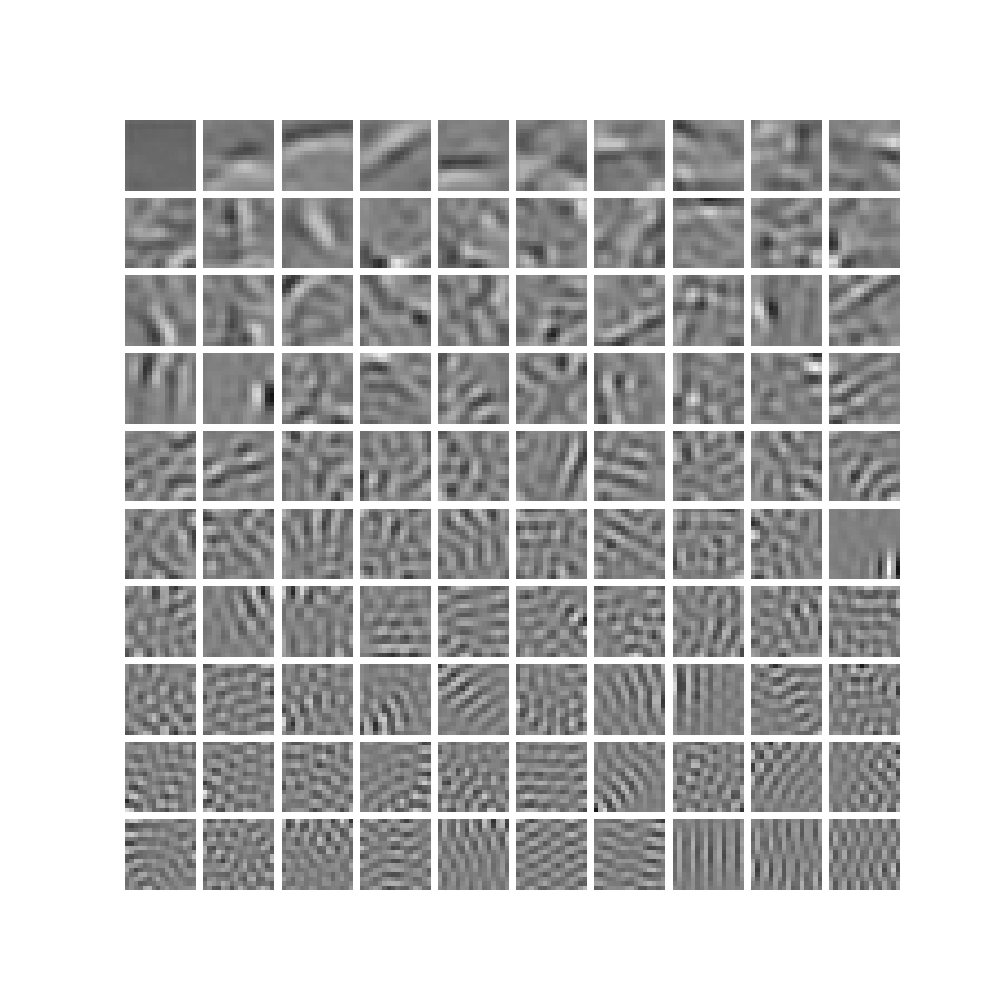}
		\subcaption{No annealing, epoch 10}\label{fig:natural:2}
	\end{minipage}
	\begin{minipage}[b]{.23\textwidth}
		\centering
		\includegraphics[width=\linewidth, trim={2cm 2.5cm 2cm 2.5cm}, clip]{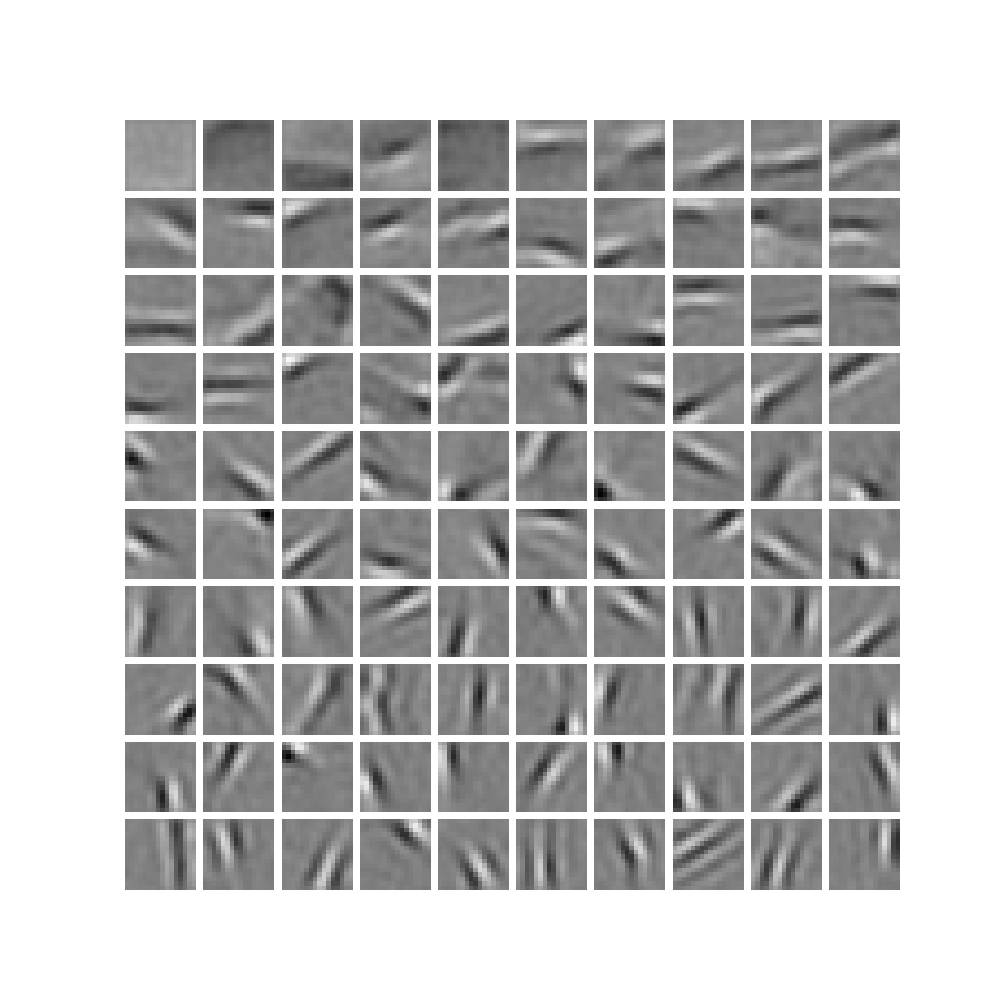} 
		\subcaption{Prior annealing, epoch 1}\label{fig:natural:3}
	\end{minipage} \\
	\begin{minipage}[b]{.23\textwidth}
		\centering
		\includegraphics[width=\linewidth, trim={2cm 2.5cm 2cm 2.5cm}, clip]{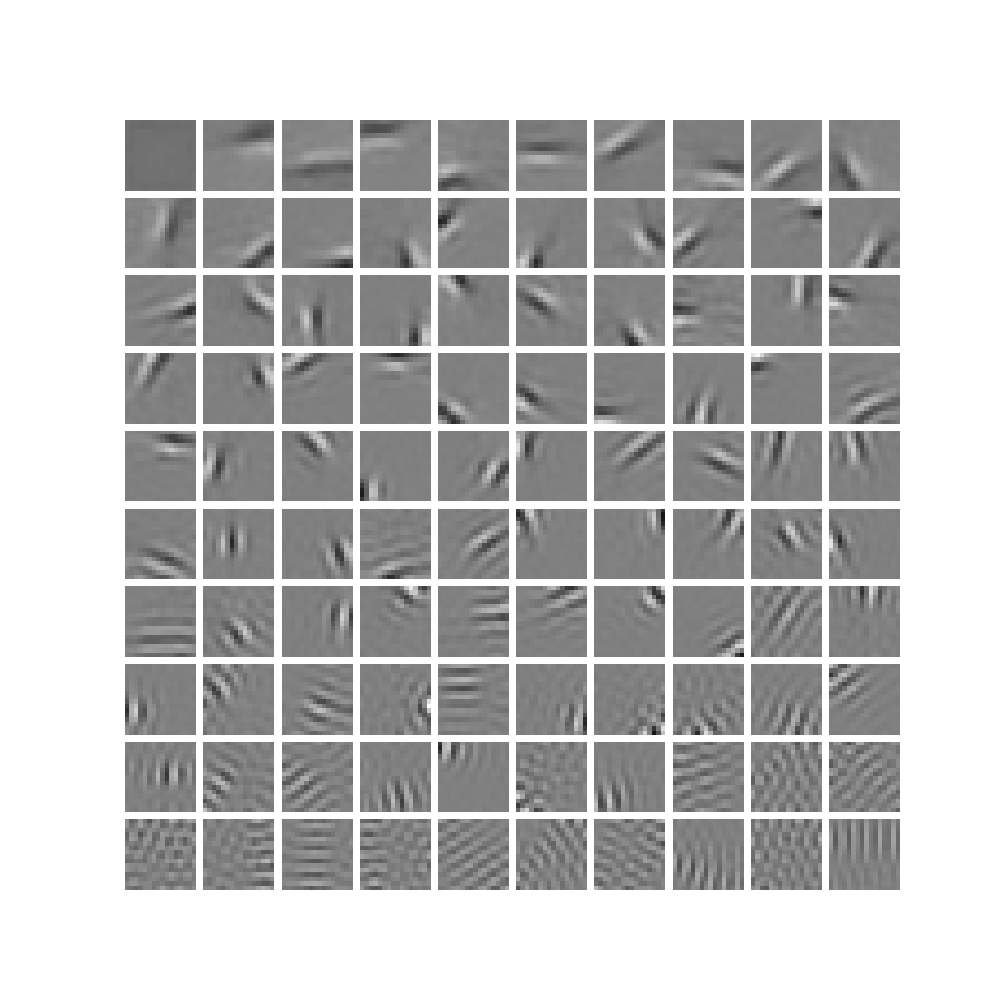}
		\subcaption{Prior annealing, epoch 10}\label{fig:natural:4}
	\end{minipage}
	\begin{minipage}[b]{.23\textwidth}
		\centering
		\includegraphics[width=\linewidth, trim={2cm 2.5cm 2cm 2.5cm}, clip]{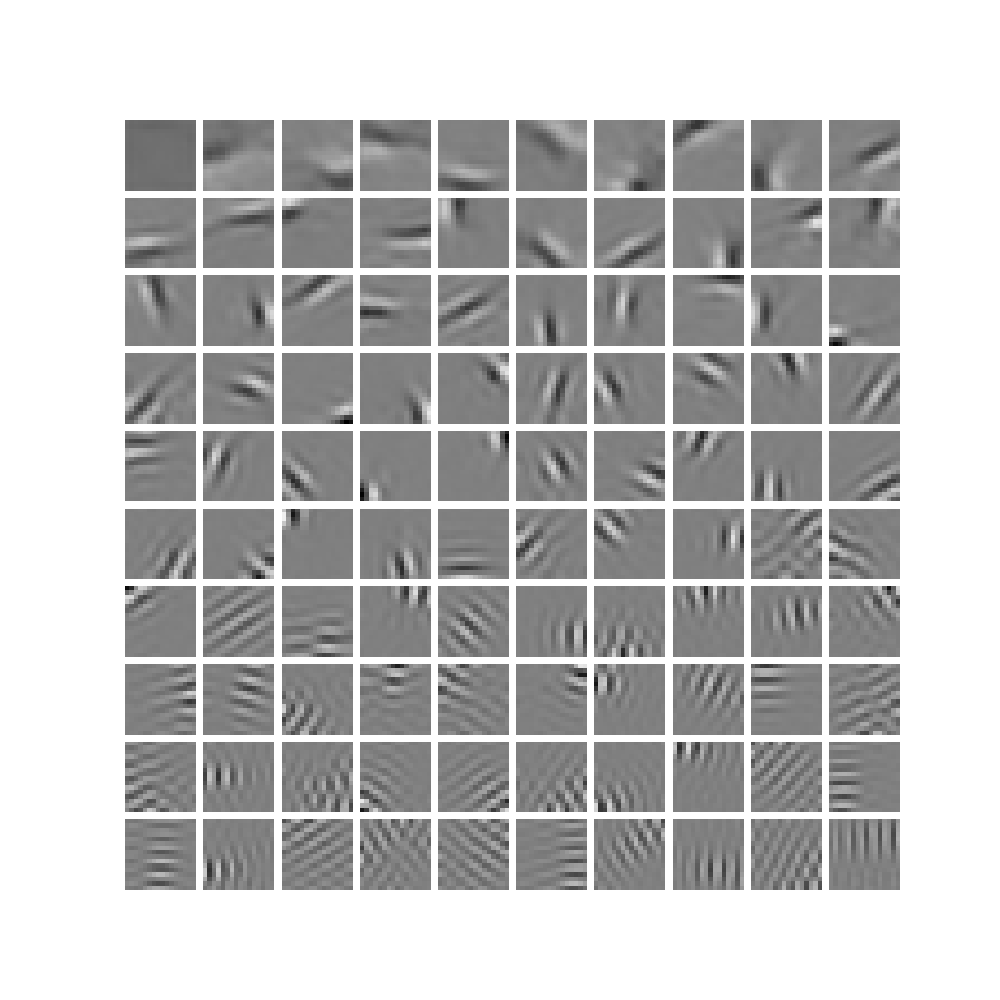}
		\subcaption{Amortized, epoch 200}\label{fig:natural:5}
	\end{minipage}
	\caption{\textbf{Learned generative fields} on natural image patches.
        Without annealing, the convergence is slow (a).
        With the prior entropy annealing, even after one epoch, we observe the familiar localized Gabor filters (b). The final GFs (c) comprise a set of Gabors and higher frequency texture-like images. Learning with amortized posterior results in similar GFs (d). 
        }
	\label{fig:image-patches-dataset}
\end{figure}

Finally, our analytic objective also allows for easy estimation of ELBO values for sparse coding models optimized using standard MAP-based approaches.
To show this, we used the original “sparsenet” code of Olshausen and Field 
\citep{olshausen_sparse_nodate}. After optimization, we used the resulting $W$ matrix, %
normalized its columns, and optimized only the variational parameters of Gaussians (means and diagonal covariances). %
The obtained ELBO values of $94.719$ with Gini$(z)=0.464\pm0.041$ indicates underfitting due to a manually selected weighting of the sparsity penalty. %

\exclude{
\subsection{Entropy Annealing}
\label{subsec:EntropyAnnealingLinear}
We ran experiments on the natural image patches dataset used by \citep{olshausen_sparse_nodate,olshausen_sparse_1997} and found substantial qualitative differences in the learned generative fields: prior annealing results in more localized Gabor filters, while likelihood entropy annealing simply turns off latent dimensions by driving $\lambda_h$ to small values with unstructured noisy generative fields for them (see \cref{app:compare-annealing}). Prior entropy annealing also produced the best Gini index and the highest ELBO values (see \cref{tab:annealing}).
\mycomment{Use the weighting factors $\gamma$ and $\delta$ and reference to the equation 16 here!}
}

\exclude{ WE PLACE CONSTANTS IN FRONT OF THE ENTROPIES
	
	CITE: BETA-VAE
	and
	Improving Explorability in Variational Inference with Annealed Variational Objectives, Huang et al., NeurIPS 2018
}

\exclude{
	\subsection{Entropy ELBOs for Diagonal and Isotropic Gaussians}
	
	HERE WE STATE:
	
	ELBO FOR DIAGONAL GAUSSIANS
	
	ELBO FOR PROPORTIONAL 1 GAUSSIANS
}

\exclude{
\subsection{Posterior Collapse Shrinks Prior Scales}
\label{subsec:PostCollapseShrinking}
\mycomment{If we need the space: Move this section to appendix and just mention collapse with one sentence or so.}
Contrary to the Gaussian prior and Gaussian variational distributions, the KL-divergence term of the ELBO \cref{eq:ELBO_FF} can not vanish if the prior is Laplace and the variational distribution is Gaussian. For the case with learnable parameters for the Laplace prior, the KL-divergence may get arbitrarily close to zero by assigning
very small values to the $\lambda_h$. We observed this effect in some experiments when we learned overcomplete sparse dictionaries for natural image patches (see \cref{app:overcomplete-basis}).
}

\section{DISCUSSION}
Our main contributions are \cref{theo:laplace-prior,theo:analytic-elbo,theo:optimal-parameters}. Taken together, these three theorems ensured that the here derived analytic objective in \cref{eq:analytic-elbo} can be used to optimize model parameters of standard probabilistic sparse coding. 
Apart from MAP-based approximations with known shortcomings for uncertainty encoding (cf. \cref{sec:Intro}), there are many other approaches for sparse coding that maintain non-trivial posterior approximations. It could, of course, be argued that for those approaches at least the optimization algorithms (if not the objectives) are described by analytic or closed-form equations. Examples are work by \citet[][]{seeger_bayesian_2008}, who used expectation propagation to derive a learning algorithm, or by \citet[][]{berkes_sparsity_2007}, who used a student-t prior and Gaussian scale mixture ideas to facilitate variational optimization.
The approach by \citet[][]{sheikh_truncated_2014} also provides an analytic objective for probabilistic sparse coding but at the cost of a combinatorial discrete optimization. 
We also state work by \citet{ChallisBarber2013}. The contribution focuses theoretically and empirically on fully Bayesian models in which weights are sparse, and the optimization bound is therefore different. But models and bound are closely related to probabilistic sparse coding (we elaborate in \cref{app:derive-analytic-elbo}). 

In contrast to these and other previous approaches, we here remain with the most standard choices for probabilistic sparse
coding. And it is for this setting that we show the ELBO to have an analytic solution. Concretely, we
remain (A)~with the (by far) most standard model, \cref{EqnPSC}; we use (B)~the presumably most standard
optimization framework (ELBOs for approximate maximum likelihood); and we use (C)~the most standard
posterior approximations (Gaussians). The here derived objective, presented in \cref{eq:analytic-elbo}, then shows that
all (high dimensional) integrals that emerge can be solved analytically. To the knowledge of the authors,
this has previously not been shown and/or empirically used.
However, we remark the similarity to problems emerging for probabilistic inference using sparse weights \citep[][]{ChallisBarber2013} (\cref{app:derive-analytic-elbo}), and we remark that analytic solutions for standard ELBOs can also be derived without knowledge of entropy convergence (see \cref{app:derive-classical-elbo}).

The results here derived do, notably, apply very generally. We have, for instance, numerically verified that potentially intricate deep neural networks (DNNs) can be used as encoders. The analytic objective then represents a deterministic DNN objective, and such objectives can conveniently be optimized with standard DNN tools. 
\Cref{eq:three-entropies} of \Cref{theo:laplace-prior} 
is still more general by applying for any decoder 
(linear or non-linear) with Gaussian observables.
\Cref{theo:laplace-prior} thus extends to sparse VAEs which are of recent interest \citep[][]{fallah_variational_2022,drefs_direct_2023,chen_encouraging_2023}. Future work can consequently investigate the here presented approaches like entropy annealing for such deep sparse coding models.

Conceptually maybe most relevantly, we here for the first time investigated how an ELBO objective can be reformulated as a solely entropy-based objective.
From a theoretical perspective, entropies are more deeply rooted in the foundations of probabilistic machine learning, mathematical statistics, and information theory. Furthermore, for the class of distributions usually used to define generative models (exponential family, constant base measure), entropies are closed-form and are equipped with potentially convenient properties (via their log-partition function). 
Also, the derivatives of entropies (that are used for learning) have similarly convenient properties, and future work can link
those to information geometry.
Entropy convergence has previously only been considered for analysis \citep[][]{lucke_henniges_closed-form_2012,DammEtAl2023}, and, so far, it has been unclear if or how entropy convergence
can be used for learning. In this work, we provided the first demonstration that solely entropy-based objectives {\em can} be used for learning, and there is no principled
obstacle to extending this general approach to further generative models in the future.

\subsubsection*{Acknowledgments}
This work was funded by the German Research Foundation (DFG) within the priority program SPP~2298 “Theoretical Foundations of Deep Learning” - project 464104047 (FI~2583/1-1 and LU~1196/9-1).
Asja \mbox{Fischer} also acknowledges support by the DFG under Germany’s Excellence Strategy – EXC-2092 CASA – 390781972.

\exclude{
OLD TEXT\\

Future work will extend this general approach to further generative models.

Entropies have theoretical advantages: the entropies of the most commonly used distributions (exponential family) are available in closed-form, and their derivatives have convenient properties. Entropies are also

are  

such that their summands solely consist of entropies. ADVANTAGES OF ENTROPIES ...

Convergence to entropies
applies not

also applies more generally:
        for any deterministic mapping from latents to observables, i.e., sparse coding using DNNs also for the decoding model. Sparse VAEs are of recent interest \citep[][]{drefs_direct_2023,RozellXXXX,otherECML}. Future work can consequently investigate here considered approaches such as entropy annealing also for deep sparse coding models.

        Other sparse priors can also be considered in this context. 
    - If such priors are of the exponential family of distributions, convergence to entropies carries over under commonly fulfilled conditions \citep[][]{Lucke2022Convergence}. (maybe make)

TO BE CONTINUED

FOR SPARSE CODING: 
- weighting of sparsity term is not a free parameter but can ultimately be derived from maximum likelihood parameter estimation and Gaussian posterior approximations
- establishes link
- convex optimization approaches developed for $l_1$-sparse coding can be used

\ \\

FOR OTHER SPARSE MODELS
- why sparse coding still interesting
- general principle still remains valid
- our results or part of our results are valid for different extensions
     - theorems 1-3 already cover any encoding model including potentially intricate DNNs
    - furthermore, Theorem 1 (convergence to entropies) also applies more generally:
        for any deterministic mapping from latents to observables, i.e., sparse coding using DNNs also for the decoding model. Sparse VAEs are of recent interest \citep[][]{drefs_direct_2023,RozellXXXX,otherECML}. Future work can consequently investigate here considered approaches such as entropy-based annealing also for deep sparse coding models.
        Other sparse priors can also be considered in this context. 
    - If such priors are of the exponential family of distributions, convergence to entropies carries over under commonly fulfilled conditions \citep[][]{Lucke2022Convergence}. (maybe make)

ENTROPY ELBOS
- maybe most relevantly, we here for the first time investigated how ELBOs can in principle be reformulated such that their summands solely consist of entropies. ADVANTAGES OF ENTROPIES ... From a theoretical perspective, entropies are more deeply rooted in the foundations of probabilistic machine learning, mathematical statistics and information theory. Furthermore, for the class of distributions usually used to define generative models (exponential family, constant base measure), entropies are closed-form and are equipped with potentially convenient properties defined by their log-partition function. Similar convenient properties can be shown for derivatives of entropies, which can be convenient for learning. 
Entropy convergence has previously only been considered for analysis \citep[][]{LuckeHenniges2011,DammEtAl2023}, and, so far, it has been unclear if or how entropy convergence
can be used for learning. In this work we provided the first demonstration that solely entropy-based objectives can be defined, and that they can be used for learning. Future work will extend this general approach to further generative models.

Those and other approaches are different from the  

it could be argued that 

it could, of course, be argued that
other forms of sparse coding 

closed-form learning approaches for sparse coding have been suggested. For instance, SEEGER used an expectation propagation for approximate learning. Also approaches using Gaussian scale mixture ideas (Berkes, Turner, Sahani, NeurIPS 2008) use 

or ... derive closed-form algorithms. But all such approaches deliberately divert from standard settings and/or involve further approximations.

A distinguishing feature of the result this study provides that

In this paper we demonstrated that the ELBO for standard probabilistic sparse coding, given in \cref{EqnELBO}, converges to a sum of entropies and hence admits an analytic solution (\cref{theo:laplace-prior}).
In particular, the ELBO values at optimality solely depend on the prior scales, observation noise and covariance of the variational distributions.
ADD MORE THEORETICAL RELEVANCE + RELATION TO $l_1$ SPARSE CODING.
We utilize this insight to derive an entropy-based objective by invoking analytic solutions for scales of the Laplace prior and the observation noise (\cref{theo:analytic-elbo}).

The entropy-based ELBO comes with additional possibilities to steer optimization in favorable directions. We demonstrate the utility of entropy-based learning by annealing the prior entropy which results in better likelihood and increased quality of the induced generative fields.
}

\bibliographystyle{abbrvnat} 
\bibliography{references.bib}

\section*{Checklist}

 \begin{enumerate}

 \item For all models and algorithms presented, check if you include: %
 \begin{enumerate}
   \item A clear description of the mathematical setting, assumptions, algorithm, and/or model. [Yes]
   \item An analysis of the properties and complexity (time, space, sample size) of any algorithm. [Yes]
   \item (Optional) Anonymized source code, with specification of all dependencies, including external libraries. [Yes]
 \end{enumerate}

 \item For any theoretical claim, check if you include:
 \begin{enumerate}
   \item Statements of the full set of assumptions of all theoretical results. [Yes]
   \item Complete proofs of all theoretical results. [Yes]
   \item Clear explanations of any assumptions. [Yes]     
 \end{enumerate}

 \item For all figures and tables that present empirical results, check if you include:
 \begin{enumerate}
   \item The code, data, and instructions needed to reproduce the main experimental results (either in the supplemental material or as a URL). [Yes]
    \item All the training details (e.g., data splits, hyperparameters, how they were chosen). [Yes]
    \item A clear definition of the specific measure or statistics and error bars (e.g., with respect to the random seed after running experiments multiple times). [Yes]
    \item A description of the computing infrastructure used. (e.g., type of GPUs, internal cluster, or cloud provider). [Not Applicable]
 \end{enumerate}

 \item If you are using existing assets (e.g., code, data, models) or curating/releasing new assets, check if you include:
 \begin{enumerate}
   \item Citations of the creator if your work uses existing assets. [Yes]
   \item The license information of the assets, if applicable. [Not Applicable]
   \item New assets either in the supplemental material or as a URL, if applicable. [Not Applicable]
   \item Information about consent from data providers/curators. [Not Applicable]
   \item Discussion of sensible content if applicable, e.g., personally identifiable information or offensive content. [Not Applicable]
 \end{enumerate}

 \item If you used crowdsourcing or conducted research with human subjects, check if you include:
 \begin{enumerate}
   \item The full text of instructions given to participants and screenshots. [Not Applicable]
   \item Descriptions of potential participant risks, with links to Institutional Review Board (IRB) approvals if applicable. [Not Applicable]
   \item The estimated hourly wage paid to participants and the total amount spent on participant compensation. [Not Applicable]
 \end{enumerate}
\end{enumerate}

\appendix

\onecolumn
\thispagestyle{empty} %

{\hsize\textwidth
\linewidth \hsize \toptitlebar 
{\centering {\Large\bfseries Learning Sparse Codes with Entropy-Based ELBOs: \\ Supplementary Materials \par}}
\bottomtitlebar} %

\exclude{
The appendix is organized as follows. First, we present additional convergence criteria for entropy-convergence of ELBOs in \cref{app:add_convergence_criteria}. The derivations of \cref{theo:optimal-parameters} and the analytic entropy-based objective \cref{eq:analytic-elbo} is given in
\cref{app:ELBOs_for_Laplace_Prio_SC}, the proof of \cref{theo:analytic-elbo} in \cref{Theorem3:Proof}. Additional numerical results are presented in \cref{app:numerical_results}.
}

\startcontents[appendices]
\printcontents[appendices]{l}{1}{\section*{Organization of the Appendix}\setcounter{tocdepth}{2}}

\ \\
\ \\
\ \\
\section{CONVERGENCE CRITERIA AND PROOFS}
\label{app:add_convergence_criteria}
This section contains an additional theorem, which allows a quick check of whether a model ELBO possesses the convergence to entropies property. Additionally, we provide a more detailed proof of convergence to entropies sums for the sparse coding models using the general conditions derived in \citep{Lucke2022Convergence}.

\subsection{Factorization Criteria for Natural Parameters}
\label{app:FactorizationTheorem}

 Here, we provide a simple theorem that gives a set of \textit{sufficient} conditions, under which the ELBO converges to a sum of entropies.

\begin{theorem}
  \label{theor:factorization}
  Consider a model $p_\Theta(\vx, \vz)=p_\Theta(\vx | \vz, \theta)p_\Theta(\vz|\vlambda)$.
  We assume that prior and observable (likelihood) distribution belong to an exponential family with constant base measure.
  If the model can be stated with the following factorization
  of the natural parameters,
    \begin{align}
    p_\Theta(\vz | \vlambda) &= \frac{1}{Z(\vlambda)} \exp(-T^\mathrm{T}(\vz) \eta_{\vlambda}(\vlambda))) \enspace , \\
    p_\Theta(\vx | \vz, \theta) &= \frac{1}{Z(\theta)} \exp(-T^\mathrm{T}(\vx) (\eta_z(\vz) \odot \eta_{\theta}(\theta))) \enspace ,
  \end{align}
  and if the Jacobians 
  $\frac{\partial \eta_{\theta}(\theta)}{\partial \theta}$ 
  and 
  $\frac{\partial \eta_{\vlambda}(\vlambda)}{\partial \vlambda}$ 
  of the parameter mappings are invertible,
  then the model ELBO converges to the following sum of entropies:
  \begin{align}
    \HELBO
    = \frac{1}{N} \sum_{n=1}^N \mathcal{H}[q_\Phi^{(n)}(\vz)]
      - \mathcal{H}[ p_\Theta(\vz | \vlambda)] 
      - \mathcal{H}[ p_\Theta(\vx | \vz, \theta)]  \enspace .
  \end{align}
\end{theorem}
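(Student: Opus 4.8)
The plan is to follow the same three-step template used in the proof of \cref{theo:laplace-prior}, now driven purely by the exponential-family structure rather than the explicit Laplace/Gaussian forms. First I would split the ELBO of \cref{EqnELBO} into the three summands $\frac{1}{N}\sum_n \mathcal{H}[q_\Phi^{(n)}(\vz)]$, the prior cross-entropy $\ELBO_1 = \frac{1}{N}\sum_n \int q_\Phi^{(n)}(\vz)\log p_\Theta(\vz\,|\,\vlambda)\,\d\vz$, and the likelihood cross-entropy $\ELBO_2 = \frac{1}{N}\sum_n \int q_\Phi^{(n)}(\vz)\log p_\Theta(\vx^{(n)}\,|\,\vz,\theta)\,\d\vz$. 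The first summand is already an average entropy, so the whole statement reduces to showing $\ELBO_1 = -\mathcal{H}[p_\Theta(\vz\,|\,\vlambda)]$ and $\ELBO_2 = -\mathcal{H}[p_\Theta(\vx\,|\,\vz,\theta)]$ under the (implicit) stationarity conditions $\partial\ELBO/\partial\vlambda = 0$ and $\partial\ELBO/\partial\theta = 0$ in the spirit of \cref{eq:condition_stationary_points}.

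For the prior term, I would insert the natural-parameter form and obtain $\ELBO_1 = -\bar T^{\mathrm T}\eta_{\vlambda}(\vlambda) - \log Z(\vlambda)$, where $\bar T = \frac{1}{N}\sum_n \mathbb{E}_{q_\Phi^{(n)}}[T(\vz)]$ collects the averaged variational expectations of the sufficient statistic. Since only $\ELBO_1$ depends on $\vlambda$, the chain rule gives $\partial\ELBO/\partial\vlambda = (\partial\ELBO_1/\partial\eta_{\vlambda})\,(\partial\eta_{\vlambda}/\partial\vlambda)$, and here the invertibility of the Jacobian $\partial\eta_{\vlambda}/\partial\vlambda$ is exactly what lets me conclude $\partial\ELBO_1/\partial\eta_{\vlambda} = 0$. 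Using the exponential-family identity $\partial\log Z/\partial\eta_{\vlambda} = -\mathbb{E}_{p_\Theta(\vz|\vlambda)}[T(\vz)]$, this stationary condition is precisely the moment-matching equation $\bar T = \mathbb{E}_{p_\Theta(\vz|\vlambda)}[T(\vz)]$. Substituting back turns $\ELBO_1$ into $-\mathbb{E}_{p_\Theta(\vz|\vlambda)}[T(\vz)]^{\mathrm T}\eta_{\vlambda} - \log Z(\vlambda)$, which is exactly $-\mathcal{H}[p_\Theta(\vz\,|\,\vlambda)]$ for a constant-base-measure exponential family.

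For the likelihood term I would use the Hadamard factorization to rewrite the integrand as $-T^{\mathrm T}(\vx)(\eta_z(\vz)\odot\eta_\theta(\theta)) = -(T(\vx)\odot\eta_z(\vz))^{\mathrm T}\eta_\theta(\theta)$, so that the dependence on $\theta$ is again linear in $\eta_\theta(\theta)$; this is the structural feature that makes the same argument applicable. Defining $\bar S = \frac{1}{N}\sum_n \mathbb{E}_{q_\Phi^{(n)}}[T(\vx^{(n)})\odot\eta_z(\vz)]$ yields $\ELBO_2 = -\bar S^{\mathrm T}\eta_\theta(\theta) - \log Z(\theta)$, and invertibility of $\partial\eta_\theta/\partial\theta$ again collapses $\partial\ELBO/\partial\theta = 0$ to $\partial\ELBO_2/\partial\eta_\theta = 0$, i.e.\ to the corresponding moment-matching identity; plugging back recovers $-\mathcal{H}[p_\Theta(\vx\,|\,\vz,\theta)]$.

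I expect the likelihood term to be the main obstacle, for two reasons. First, because the observable's natural parameter couples $\vz$ and $\theta$, I must verify that writing the normalizer as $Z(\theta)$ (i.e.\ independent of $\vz$) and treating $\mathcal{H}[p_\Theta(\vx\,|\,\vz,\theta)]$ as a well-defined, $\vz$-independent object is consistent --- this is exactly where the constant-base-measure assumption and the assumed factorization do the work, generalizing the $\sigma^2$-stationarity step of \cref{theo:laplace-prior}. Second, I would need to check that $\partial\log Z(\theta)/\partial\eta_\theta$ reproduces the correct model moment despite the $\odot$ coupling, so that the final substitution indeed lands on the observable entropy rather than on a residual cross term. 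Combining the two negative entropies with the leading average variational entropy then gives the claimed identity, completing the proof.
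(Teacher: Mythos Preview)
Your proposal is correct and follows essentially the same route as the paper's proof: the same three-term decomposition of the ELBO, the same use of Jacobian invertibility to reduce the stationarity conditions $\partial\ELBO/\partial\vlambda=0$ and $\partial\ELBO/\partial\theta=0$ to equations in the natural parameters, and the same substitution back into $\ELBO_1$ and $\ELBO_2$ to recover the two negative entropies. The paper resolves your second flagged concern by directly invoking the standard exponential-family identity $-\mathcal{H}[p]=\langle \eta,\,\partial\log Z/\partial\eta\rangle-\log Z$ \citep{nielsen_entropies_2010}, which together with the assumed $\vz$-independence of the normalizer $Z(\theta)$ makes the final identification immediate.
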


We emphasize that, unlike the general conditions \citep{Lucke2022Convergence}, this theorem, although being more restrictive, allows us to not only \textit{check}, but also to easily \textit{construct} probability distributions for models that converge to entropies sums. If conditions from the \cref{theor:factorization} do not hold, one still has to check the more general conditions \citep{Lucke2022Convergence}.

\begin{proof}
Here we spell out the proof by introducing the above requirements to the model distributions and checking the convergence at stationary points. First, we write the ELBO of such models with approximate posterior $q_\Phi^{(n)}(\vz)$ and prove the convergence for the ${\ELBO_2(\Phi, \Theta)}$ term:

\begin{align*} %
        \textstyle  \mathcal{L}(\Phi, \Theta) = 
            \frac{1}{N} \sum_{n=1}^N \mathcal{H}&[q^{(n)}_{\Phi}(\vz)] 
            + \underbrace{ 
                    \textstyle\frac{1}{N} \sum_{n=1}^N \int q^{(n)}_{\Phi}(\vz) \log p_{\Theta}(\vz) d\vz
                }_{\ELBO_1(\Phi, \Theta)}
            + \underbrace{ 
                    \textstyle\frac{1}{N} \sum_{n=1}^N \int q^{(n)}_{\Phi}(\vz) \log p_{\Theta}(\vx^{(n)}|\vz) d\vz
                }_{\ELBO_2(\Phi, \Theta)} \enspace .
\end{align*}

We consider a class of distributions that belong to the exponential family with constant base measure $Z(\vz, \theta)$ and a factorizable negative energy term $E(\vx^{(n)}; \vz, \theta)$. It can be written as follows:
\begin{align}
 p_\Theta(\vx^{(n)} | \vz, \theta) &= \frac{1}{Z(\vz, \theta)} \exp(E(\vx^{(n)}; \vz, \theta)) \enspace , \\
E(\vx^{(n)}; \vz, \theta) &= \left\langle T(\vx^{(n)}), \eta(\vz, \theta) \right\rangle \enspace .
\end{align}

Here we introduce the following assumptions from the theorem:
\begin{align}
Z(\vz, \theta) &= Z(\theta) \label{asump:1} \enspace , \\
\eta(\vz, \theta) &= \eta_z(\vz) \odot \eta_{\theta}(\theta) \enspace . \label{asump:2}
\end{align}

Then the $\ELBO_2(\Phi, \Theta)$ term reads:
\begin{align}
  \ELBO_2(\Phi, \Theta)
&= 
  \frac{1}{N} 
  \sum_{n=1}^N 
  \int 
    q^{(n)}_{\Phi}(\vz)
    \left\langle T(\vx^{(n)}), \eta_z(\vz) \odot \eta_{\theta}(\theta) \right\rangle 
    d\vz 
  -\log Z(\theta) \\
&= 
  \frac{1}{N} 
  \sum_{n=1}^N 
  \left\langle  
    \eta_{\theta}(\theta), \int q^{(n)}_{\Phi}(\vz) T(\vx^{(n)}) \odot \eta_z(\vz) d\vz 
  \right\rangle  
  -\log Z(\theta) \\
&= 
  \left\langle  
    \eta_{\theta}(\theta), 
    \frac{1}{N} 
    \sum_{n=1}^N 
    \int q^{(n)}_{\Phi}(\vz) T(\vx^{(n)}) \odot \eta_z(\vz) d\vz 
  \right\rangle  
  -\log Z(\theta) \enspace . \label{eq:3}
\end{align}

We are interested in stationary points of $\ELBO_2(\Phi, \Theta)$ w.r.t. $\theta$, which means that:
\begin{align}
0 = \frac{\partial \ELBO_2(\Phi, \Theta^*)}{\partial \theta} 
&= 
  \frac{1}{N} 
  \sum_{n=1}^N 
    \int q^{(n)}_{\Phi}(\vz)
      \frac{\partial E(\vx^{(n)}; \vz, \theta^*) -\log Z(\vz, \theta^*)}{\partial \theta}  
      d\vz \\
&= 
  \frac{1}{N} 
  \sum_{n=1}^N 
    \int q^{(n)}_{\Phi}(\vz)
      \left(
        \frac{\partial \left\langle T(\vx^{(n)}), \eta(\vz, \theta^*) \right\rangle }{\partial \theta} 
        -\frac{\partial\log Z(\vz, \theta^*)}{\partial \theta} 
      \right) 
      d\vz \enspace . \label{eq:2}
\end{align}

Applying the assumptions (\ref{asump:1}) and (\ref{asump:2}), it can be rewritten as follows:
\begin{align}
0 = \frac{\partial \ELBO_2(\Phi, \Theta^*)}{\partial \theta} 
  &= \frac{1}{N} \sum_{n=1}^N 
  \int q^{(n)}_{\Phi}(\vz)
    \left(
      \frac{\partial \left\langle T(\vx^{(n)}), \eta_z(\vz) \odot \eta_{\theta}(\theta^*) \right\rangle }{\partial \theta} 
      -\frac{\partial\log Z(\theta^*)}{\partial \theta}
    \right) 
  d\vz \\
  &= \frac{\partial \eta_{\theta}(\theta^*) }{\partial \theta} 
    \frac{1}{N} 
    \sum_{n=1}^N 
    \int 
      q^{(n)}_{\Phi}(\vz) T(\vx^{(n)}) \odot \eta_z(\vz) 
      d\vz 
    - \frac{\partial\log Z(\theta^*)}{\partial \theta} \enspace .
\end{align}

If the $\frac{\partial \eta_{\theta}(\theta^*) }{\partial \theta}$ Jacobian is invertible, it follows that:
\begin{align}
  \frac{1}{N} 
  \sum_{n=1}^N 
    \int 
      q^{(n)}_{\Phi}(\vz) T(\vx^{(n)}) \odot \eta_z(\vz)  
      dz \label{eq:4}
&=
  \left[ 
     \frac
       {\partial \eta_{\theta}(\theta^*)}
       {\partial \theta} 
  \right]^{-1}
  \frac
    {\partial\log Z(\theta^*)}
    {\partial \theta} \\ 
&= 
  \frac
    {\partial\log Z(\theta^*)}
    {\partial \eta_{\theta}(\theta)}  \enspace . \label{eq:5}
\end{align}

Plugging (\ref{eq:4}-\ref{eq:5}) into (\ref{eq:3}) we get the final entropy representation:
\begin{align}
\ELBO_2(\Phi, \Theta^*) &= \left\langle  
    \eta_{\theta}(\theta^*), 
    \frac
    {\partial\log Z(\theta^*)}
    {\partial \eta_{\theta}(\theta)} 
    \right\rangle
      - \log Z(\theta^*) = -H[p_\Theta(\vx | \vz, \theta^*)] \enspace .
\end{align}

The last equation is a common form of entropies for exponential family distributions, see e.g. \citep{nielsen_entropies_2010}.

Now we take the parameterized prior distribution $p_{\theta}(\vz)$ and consider the $\ELBO_1(\Phi, \Theta)$ term:
\begin{align}
    \ELBO_1(\Phi, \Theta) 
    = \frac{1}{N} \sum_{n=1}^N  \int q^{(n)}_{\Phi}(\vz) \log p_{\theta}(\vz) d\vz \enspace . \label{eq:6}
\end{align}

Similarly to the likelihood distribution, we require it to belong to the following class of exponential family distributions:
\begin{align}
  p_{\vlambda}(\vz) 
  = \exp(\eta_{\vlambda}^\mathrm{T}(\vlambda)\eta(\vz) - \log Z(\vlambda)) \enspace .
\end{align}

Then we can rewrite the integral over the prior (\ref{eq:6}) as:
\begin{align}
  \ELBO_1(\Phi, \Theta) 
  = \frac{1}{N} \sum_{n=1}^N \int q^{(n)}_{\Phi}(\vz) \log p_{\vlambda}(\vz) d\vz 
  = \left\langle \eta_{\vlambda}(\vlambda) , \frac{1}{N} \sum_{n=1}^N  \int q^{(n)}_{\Phi}(\vz) \eta(\vz) d\vz \right\rangle - \log Z(\vlambda) \enspace . \label{eq:7}
\end{align}

We are interested in stationary points w.r.t. the $\vlambda$ parameters. Thus, setting the derivative to zero:
\begin{align}
  0 
  &= \frac{\partial\ELBO_1(\Phi, \Theta^*)}{\partial \vlambda}
  = \frac{\partial \eta_{\vlambda}(\vlambda^*)}{\partial \vlambda} \frac{1}{N} \sum_{n=1}^N \int q^{(n)}_{\Phi}(\vz) \eta(\vz) d\vz - \frac{1}{Z(\vlambda)}\frac{\partial Z(\vlambda^*)}{\partial \vlambda} \;\Rightarrow\; \\
   \frac{1}{N} \sum_{n=1}^N \int q^{(n)}_{\Phi}(\vz) \eta(\vz) d\vz 
  &= \left[\frac{\partial \eta_{\vlambda}(\vlambda^*)}{\partial \vlambda} \right]^{-1} \frac{\partial \log Z(\vlambda^*)}{\partial \vlambda} = \frac{\partial \log Z(\vlambda^*)}{\partial \eta_{\vlambda}(\vlambda)} \enspace .
\end{align}

Inserting it into (\ref{eq:7}) we get:
\begin{align}
  \ELBO_1(\Phi, \Theta) 
  =\left\langle \eta_{\vlambda}(\vlambda^*), \frac{\partial \log Z(\vlambda^*)}{\partial \vlambda} \right\rangle - \log Z(\vlambda^*) = -H[p_{\Theta}(\vz|\vlambda^*)] \enspace ,
\end{align}
which completes the proof.
\end{proof}

Now we show that the Laplace prior sparse coding model fulfills these sufficient conditions. The mapping functions read:
\begin{align}
    \eta_{\vlambda}(\vlambda) &= \vectorize[-\frac{1}{\lambda_1}, \cdots, -\frac{1}{\lambda_H}] \enspace , \\
    \eta_{\vz}(\vz) &= \vectorize [\Wt\vz, -\frac{1}{2}] \enspace , \\
    \eta_{\theta}(\theta) &= \vectorize [-\frac{1}{\sigma^2}] \enspace , \\
    T(\vx) &= \vectorize [\vx, \vx\vx^T] \enspace .
\end{align}
Jacobians $\frac{\partial \eta_{\theta}(\theta)}{\partial \theta}$ and $\frac{\partial \eta_{\vlambda}(\vlambda)}{\partial \vlambda}$ are diagonal (with non-zero elements) and thus clearly invertible, which satisfies the conditions. %

\subsection{Proof of Convergence to Three Entropies Using the General Conditions}
\label{app:proof-with-general-conditions}
Here we prove the convergence to three entropies for the sparse coding model with Laplace prior, using the general conditions \citep{Lucke2022Convergence}. For this, we have to show that the parametrizations of the prior and the likelihood distributions fulfill the corresponding criteria. 

\textit{Prior distribution parametrization criterion}. Let $\zeta(\Psi)$ be a function that maps model parameters to the natural parameters of $ p_\Theta(z)$, $\mathcal{I}_{(\Psi)} = [ \frac{\partial \zeta_i(\Psi)}{\Psi_j} ] $ is the Jacobian matrix of the mapping function. Then the following criterion should be met for the ELBO integral to be equal to entropy at convergence, for any function $f(\Phi, \Psi)$:
\begin{align}
  \mathcal{I}^\mathrm{T}_{(\Psi)} f(\Phi, \Psi) = 0 \;\implies\; \zeta(\Psi)^\mathrm{T} f(\Phi, \Psi) = 0 \enspace .
\end{align} 

The Laplace prior has a very simple mapping into the natural parameter space: $\zeta(\psi) = -\frac{1}{\vlambda}$. The Jacobian is a diagonal matrix and reads: 
\begin{align}
  \mathcal{I}^\mathrm{T}_{(\Psi)} = 
    \begin{bmatrix}
      \frac{1}{\lambda_1^2} & \cdots & 0 \\
      \vdots & \ddots & \vdots \\
      0 & \cdots & \frac{1}{\lambda_H^2} \\ 
    \end{bmatrix} \enspace .
\end{align}

Now we can check that the parametrization criterion holds:
\begin{align}
  \begin{bmatrix}
    \frac{1}{\lambda_1^2} & \cdots & \frac{1}{\lambda_H^2} \\ 
  \end{bmatrix} f(\Phi, \Psi) &= 0  \\
  \implies\; f(\Phi, \Psi) &= 0  \\
  \implies\; \zeta(\Psi)^\mathrm{T} f(\Phi, \Psi) &= 0 \enspace .
\end{align}

\textit{Likelihood distribution parametrization criterion}. Let $\eta(\vz, \Theta)$ be a function that maps model parameters to the natural parameters of $ p_\Theta(\vx | \vz)$, 
$\mathcal{J}_{(\vz, \Theta)} = [ \frac{\partial \eta_i(\vz, \Theta)}{\Theta_j} ] $ is the Jacobian matrix of the mapping function. 
Then the following criterion should be met for the ELBO integral to be equal to entropy at convergence for any function $g(\vz, \Phi, \Theta)$ and a subset of parameters $\theta \in \Theta$:
\begin{align}
  \int \mathcal{J}^\mathrm{T}_{(\vz, \theta)} g(\vz, \Phi, \Theta) d\vz = 0 \;\implies\; \int \eta(\vz, \Theta)^\mathrm{T} g(\vz, \Phi, \Theta) = 0 \enspace .
\end{align} 

Let's check if Gaussian likelihood in the sparse coding model fulfills this criterion. The function to map $z$ and $\Theta=\{\Wt, \sigma^2 \}$ to natural parameters reads:
\begin{align}
  \eta(z, \Theta) = 
    \begin{bmatrix}
      \frac{\Wt\vz}{\sigma^2} \\ 
      -\frac{1}{2\sigma^2} \\ 
    \end{bmatrix} 
\end{align}

We choose the subset $\theta = \{ \sigma^2 \}$. Then the Jacobian of the mapping w.r.t. $\theta$ reads:
\begin{align}
  \mathcal{J}^\mathrm{T}_{(\vz, \theta)} = 
    \begin{bmatrix}
      -\frac{\Wt_{1,:}\vz}{\sigma^4}, & 
      \cdots, &
      -\frac{\Wt_{H,:}\vz}{\sigma^4}, &       
      \frac{1}{2\sigma^4} \\ 
    \end{bmatrix} \enspace .
\end{align}

Check the criterion:
\begin{align}
  0 
  &= \int \mathcal{J}^T_{(\vz, \theta)} g(\vz, \Phi, \Theta) d\vz \\
  &= \int \begin{bmatrix}
    -\frac{\Wt_{1,:}\vz}{\sigma^4}, & 
    \cdots, &
    -\frac{\Wt_{H,:}\vz}{\sigma^4}, &       
    \frac{1}{2\sigma^4} \\ 
    \end{bmatrix}
    g(\vz, \Phi, \Theta) d\vz \\
  &= -\frac{1}{\sigma^2} \int \begin{bmatrix}
    \frac{\Wt_{1,:}\vz}{\sigma^2}, & 
    \cdots, &
    \frac{\Wt_{H,:}\vz}{\sigma^2}, &       
    -\frac{1}{2\sigma^2} \\ 
    \end{bmatrix}
    g(\vz, \Phi, \Theta) d\vz \enspace .
\end{align}

We can recognize the entries of the mapping function $\eta(\vz, \Theta)$ in the argument of the integral. We can, therefore, rewrite and conclude:
\exclude{
Check the criterion:
\begin{align}
  \begin{cases}
    \int -\frac{\Wt_{1,:}\vz}{\sigma^4} g(\vz, \Phi, \Theta) d\vz &= 0 \\
    \cdots & \\
    \int -\frac{\Wt_{H,:}\vz}{\sigma^4} g(\vz, \Phi, \Theta) d\vz &= 0 \\
    \int \frac{1}{2\sigma^4} g(\vz, \Phi, \Theta) d\vz &= 0
  \end{cases}
\end{align}
 
\begin{align}
\implies
  \begin{cases}
    \frac{1}{\sigma^2} \int -\frac{\Wt_{1,:}\vz}{\sigma^2} g(\vz, \Phi, \Theta) d\vz &= 0 \\
    \cdots & \\
    \frac{1}{\sigma^2} \int -\frac{\Wt_{H,:}\vz}{\sigma^2} g(\vz, \Phi, \Theta) d\vz &= 0 \\
    \frac{1}{\sigma^2} \int \frac{1}{2\sigma^2} g(\vz, \Phi, \Theta) d\vz &= 0
  \end{cases} 
\end{align}

As the prefactor is unequal zero, all integrals are zero. 
We can recognize the entries of the mapping function $\eta(\vz, \Theta)$ in arguments of the integrals. We can, therefore, rewrite and conclude:
}
\begin{align}
  \frac{1}{\sigma^2} \int \eta(\vz, \Theta)^\mathrm{T} g(\vz, \Phi, \Theta) d\vz &= 0 \\
  \implies \int \eta(\vz, \Theta)^\mathrm{T} g(\vz, \Phi, \Theta) d\vz &= 0  \enspace ,
\end{align}
where the last step follows from $\sigma^2$ being unequal to zero.
Thus, the parametrization criterion is fulfilled.

\subsection{Likelihood Convergence Proof (Gaussian)}
\label{app:gaussian-likelihood-convergence}

To have this paper self-contained we here reiterate the argument that the log-likelihood term $\mathcal{L}_2(\Phi, \Theta)$ in the ELBO (see  \cref{theo:laplace-prior}) with optimal observation noise reduces to the negative entropy of the likelihood (see, e.g., \cite{DammEtAl2023}, their Theorem 1 for a slightly different derivation).

The expectation of the log-likelihood under the variational posterior, denoted as $\mathcal{L}_2(\Phi, \Theta)$ in \cref{theo:laplace-prior}, reads
\begin{equation}
	\label{eq:appL2}
	\mathcal{L}_2(\Phi, \Theta) = -\frac{1}{N}\sum_n \Big( \frac{1}{2\sigma^2} \int q^{(n)}_{\Phi}\!(\vz)\, \| \vxn - \tilde W\vz \|^2 \mathrm{d}\vz 
	\Big) - \frac{D}{2} \log\!\big( 2\pi\sigma^2 \big) \enspace .
\end{equation}
Note that this is the only term in the ELBO, given in \cref{EqnELBO}, that depends on the observation noise $\sigma^2$. Consequently, the derivative of the ELBO, denoted as $\mathcal{L}$, w.r.t. $\sigma^2$ is given by

\begin{equation}
	\frac{\d \mathcal{L}(\Phi, \Theta)}{\d \sigma^2} = \frac{\d \mathcal{L}_2(\Phi, \Theta)}{\d \sigma^2} = 
	-\frac{1}{N}\sum_n \left( \frac{1}{2 \sigma^4} \int {q_\Phi^{(n)}(\vz)}  \| \vxn - \tilde W\vz \|^2 \mathrm{d}\vz  \right) - \frac{D}{2 \sigma^2} \enspace .
\end{equation}
As $\sigma^2 > 0 $, we conclude the whenever $\frac{\d \mathcal{L}(\Phi, \Theta)}{\d \sigma^2} = 0$ the following holds 
\begin{align}
	\frac{1}{N}\sum_n& \left( \frac{1}{2 \sigma^2} \int {q_\Phi^{(n)}(\vz)}  \| \vxn - \tilde W\vz \|^2 \mathrm{d}\vz  \right) - \frac{D}{2} = 0\\
	\implies \frac{1}{N}\sum_n& \left( \frac{1}{2 \sigma^2} \int {q_\Phi^{(n)}(\vz)}  \| \vxn - \tilde W\vz \|^2 \mathrm{d}\vz  \right) = \frac{D}{2}
	\label{eq:app_sigma2_opt} \enspace .
\end{align}
So, at optimality the high-dimensional integral in \cref{eq:appL2} has a particularly simple solution and by plugging \cref{eq:app_sigma2_opt} into \cref{eq:appL2} we obtain
\begin{equation}
	\label{eq:appL2Entropy}
	\mathcal{L}_2(\Phi, \Theta) = - \frac{D}{2} - \frac{D}{2} \log\!\big( 2\pi\sigma^2 \big) = 
	- \frac{D}{2} \log\!\big( 2 e \pi\sigma^2 \big)
	= -\mathcal{H}[p_\Theta(\vx | \vz)]
	\enspace,
\end{equation}
that is, $\mathcal{L}_2$ becomes equal to the (negative) entropy of $p_\Theta(\vx \vert \vz)$, which concludes the argument.

\ \\
\section{ELBOS FOR LAPLACE PRIOR SPARSE CODING}
\label{app:ELBOs_for_Laplace_Prio_SC}
\subsection{Deriving Analytic ELBO for Laplace Prior Sparse Coding Model}
\label{app:derive-analytic-elbo}

Here we discuss sparse coding defined as a linear latent variable model with Laplace prior defined as: 
\begin{align}
  p_\Theta(\mathbf{\vz}) &= \prod_{h=1}^H \frac{1}{2\lambda_h} \exp\left(-\frac{|z_h|}{\lambda_h}\right) \enspace , \\
  p_\Theta(\vx | \vz) &= \mathcal{N} (\vx | \Wt\vz, \sigma^2\mathbb{I}) \enspace , \label{eq:likelihood} \\
  || \Wt_{:,h} || &= 1 \enspace .
\end{align}

For convenience we will use a more extended notation $p_\Theta(\vxn | \vz, \Wt, \sigma^2)$ for the noise distribution in \cref{eq:likelihood}. The ELBO with variational distribution $q_{\Phi}^{(n)}(\vz)$ for $N$ data points reads:
\begin{align}
  \log p_\Theta(\vx) \geq \ELBO= \frac{1}{N} \sum_{n=1}^N \int q_\Phi^{(n)}(\vz) \log \frac{ p_\Theta(\vxn | \vz, \Wt, \sigma^2) \,p_\Theta(\vz)}{q_\Phi^{(n)}(\vz)} d\vz \enspace .
\end{align}

We can rewrite it as:
\begin{align}
  \ELBO &= \frac{1}{N} \sum_{n=1}^N \int q_\Phi^{(n)}(\vz) \log p_\Theta(\vxn | \vz, \Wt, \sigma^2) d\vz \\
  &+ \frac{1}{N} \sum_{n=1}^N \int q_\Phi^{(n)}(\vz) \log p_\Theta(\vz) d\vz \label{EqnLambdaFFTerm}\\
  &- \frac{1}{N} \sum_{n=1}^N \int q_\Phi^{(n)}(\vz) \log  q_\Phi^{(n)} d\vz \enspace .
\end{align}

At stationary points for parameters $\{\vlambda, \sigma^2 \}$, it converges to the following expression of entropy terms:
\begin{align}
  \HELBO = - \mathcal{H}[ p_\Theta(\vx | \vz, \sigma^2)] - \mathcal{H}[ p_\Theta(\vz | \{\lambda_i\})] + \frac{1}{N} \mathcal{H}[q_\Phi^{(n)}(\vz)] \enspace .
\end{align}

For the sparse coding model, it reads:
\begin{align}
  \HELBO= -\frac{D}{2} \log (2\pi e \sigma^2) -\sum_{h=1}^H \log (2 \lambda_h e)  + \frac{1}{N} \sum_n\mathcal{H}[q_\Phi^{(n)}(\vz)] \enspace . 
  \label{eq:sparse:elbo}
\end{align}

Detailed, from the derivation of the convergence to entropies, recall that at stationary points it holds that:
\begin{align}
  \frac{1}{N} \sum_{n=1}^N \int q_\Phi^{(n)}(\vz) \log p_\Theta(\vxn | \vz, \Wt, \sigma^2) d\vz &= \frac{D}{2} \log (2\pi e \sigma^2) \label{eq:sigmasqr}\\
  \frac{1}{N} \sum_{n=1}^N \int q_\Phi^{(n)}(\vz) \log p_\Theta(\vz | \vlambda) d\vz &= \sum_{h=1}^H \log (2 \lambda_h e) \enspace .
\end{align}

Now we can analytically solve the integrals and obtain expressions for optimal $\lambda_h$ and $\sigma^2$ at stationary points.

We use a Gaussian distribution with full covariance as the variational distribution for each data point $\vxn$:
\begin{align}
  q_\Phi^{(n)}(\vz) = \mathcal{N}(\vz | \vnu^{(n)}, \TT^{(n)}) \enspace .
\end{align}

Let us start to with \cref{eq:sigmasqr} and solve the integral analytically to obtain $\sigma^2$: %
\begin{align}
  & \frac{1}{N} \sum_{n=1}^N \int q_\Phi^{(n)}(\vz) \log p_\Theta(\vxn | \vz, \Wt, \sigma^2) d\vz \\
  &= -\log [Z(\mathbb{I}\sigma^2)] - \frac{1}{N} \sum_{n=1}^N \frac{1}{2\sigma^2} \E_{q^{(n)}(\vz)} \left[ (\Wt\vz-\vx^{(n)})^\mathrm{T}(\Wt\vz-\vx^{(n)}) \right] \\
  &= -\frac{D}{2} \log(2\pi\sigma^2) - \frac{1}{2\sigma^2} \frac{1}{N} \sum_{n=1}^N \left[\tr(\Wt^\mathrm{T}\Wt\TT^{(n)}) + (\Wt\vnu^{(n)}-\vx^{(n)})^\mathrm{T}(\Wt\vnu^{(n)}-\vx^{(n)}) \right]   \enspace .
\end{align}
The last equation can be obtained by carefully expanding the quadratic form and taking the corresponding expectations w.r.t. the Gaussian density $q^{(n)}(\vz)$. %

Taking derivative w.r.t. $\sigma^2$ and setting it to zero:
\begin{align}
  0 
  &= \frac{\partial \frac{1}{N} \sum_{n=1}^N \int q_\Phi^{(n)}(\vz) \log p_\Theta(\vxn | \vz, \Wt, \sigma^2) d\vz}{\partial \sigma^2} \\
  &= -\frac{D}{2\sigma^2} + \frac{1}{2(\sigma^2)^2} \frac{1}{N} \sum_{n=1}^N \left[\tr(\Wt^\mathrm{T}\Wt\TT^{(n)}) + (\Wt\vnu^{(n)}-\vx^{(n)})^\mathrm{T}(\Wt\vnu^{(n)}-\vx^{(n)}) \right] \enspace .
\end{align}

Solve it w.r.t. $\sigma^2$:
\begin{align}
  \sigma^2 
  = \frac{1}{DN} \sum_{n=1}^N \left[\tr(\Wt^\mathrm{T}\Wt\TT^{(n)}) + (\Wt\vnu^{(n)}-\vx^{(n)})^\mathrm{T}(\Wt\vnu^{(n)}-\vx^{(n)}) \right] \enspace .
  \label{eq:sparse:sigmasqr}
\end{align}

Next, we solve for the optimal prior scales $\lambda_h$. As \cref{EqnLambdaFFTerm} is the only term of the ELBO that depends on $\vlambda$, the condition for a stationary point for $\lambda_h$ yields:
\begin{align}
  0 
  &= \frac{\partial \frac{1}{N} \sum_{n=1}^N \int q_\Phi^{(n)}(\vz) \log p_\Theta(\vz | \vlambda) d\vz}{\partial \lambda_h} \\
  &= \frac{1}{N} \sum_{n=1}^N \int q_\Phi^{(n)}(\vz) \left(-\frac{1}{\lambda_h} + \frac{|z_h|}{\lambda_h^2}\right) d\vz \\
  &= \frac{1}{\lambda_h^2}\,\frac{1}{N} \sum_{n=1}^N \int q_\Phi^{(n)}(\vz) (|z_h|- \lambda_h) d\vz \\ 
  \implies\ 0 &= \frac{1}{N} \sum_{n=1}^N \int q_\Phi^{(n)}(\vz) |z_h| d\vz - \frac{1}{N} \sum_{n=1}^N \int q_\Phi^{(n)}(\vz) \lambda_h d\vz \\
  &= \frac{1}{N} \sum_{n=1}^N \int q_\Phi^{(n)}(\vz) |z_h| d\vz - \lambda_h \label{eq:highdimintegral} \\
  \implies\ \lambda_h &= \frac{1}{N} \sum_{n=1}^N \int \mathcal{N}(\vz\,|\, \vnu^{(n)}, \TT^{(n)})\, |z_h|\,d\vz \label{eq:multidimintegral} \\
              &= \frac{1}{N} \sum_{n=1}^N \int \mathcal{N}(z_h \,|\, \nu_h^{(n)}, \TT_{hh}^{(n)})\, |z_h|\, dz_h \enspace . \label{eq:onedimintegral} %
\end{align}
The $h$-dimensional integral (\ref{eq:multidimintegral}) w.r.t. $\vz$ can be simplified to (\ref{eq:onedimintegral}) because we can rewrite the Gaussian distribution as a product of a marginal and a conditional Gaussian $q_\Phi^{(n)}(\vz) = q_\Phi^{(n)}(\vz_{\setminus h} | z_h) q_\Phi^{(n)}(z_h)$:
\begin{align}
  &= \int \mathcal{N}(z_{\setminus h} | \vnu^{(n)}_{\setminus h}(z_h), \TT^{(n)}_{\setminus h}(z_h)) \mathcal{N}(z_h | \nu^{(n)}_h, \TT^{(n)}_{hh}) |z_h| d\vz \\
  &= \int \mathcal{N}(z_h | \nu^{(n)}_h, \TT^{(n)}_{hh}) |z_h| \int \mathcal{N}(z_{\setminus h} | \vnu^{(n)}_{\setminus h}(z_h), \TT^{(n)}_{\setminus h}(z_h))  d\vz_{\setminus h}  dz_h \\
  &= \int \mathcal{N}(z_h | \nu^{(n)}_h, \TT^{(n)}_{hh}) |z_h| dz_h \enspace .
\end{align}

To solve this integral we now make use of another integral that is known to have an analytic solution:
\begin{align}
  \int_0^{+\infty} z\, \exp\big(-(az+b)^2\big)\, dz = \frac{\sqrt{\pi}\, b}{2a^2} \big(\erf(b)-1\big) + \frac{(-b^2)}{2a^2} \text{ for } a > 0 \enspace .
\end{align}
The analytic solution of the integral is, e.g., stated in Eq.\,2.1.2 by \citep[][]{korotkov2020integrals}. The book is itself based on two earlier books by the same authors \citep[][]{KorotkovKorotkov2012} and \citep[][]{Korotkov2002}.
Rewriting the integral to a proper Gaussian integral by substituting mean and covariance and by multiplying by the normalizing coefficient gives:
\begin{align}
  \int_0^{+\infty} z\, \mathcal{N}(z | \nu, \sigma^2)\, dz = \frac{\sigma}{\sqrt{2\pi}} \exp\left(-\frac{1}{2} \frac{\nu^2}{\sigma^2}\right) -\frac{\nu}{2} \left[ \erf\left(-\frac{\nu}{\sqrt{2}\sigma}\right) -1 \right] \enspace .
\end{align}

The integral over the complementary set of the support reads:
\begin{align}
  \int_{-\infty}^0 (-z)\, \mathcal{N}(z | \nu, \sigma^2)\, dz 
  &= \int_0^{+\infty} z\, \mathcal{N}(z | -\nu, \sigma^2) \,dz \\
  &= \frac{\sigma}{\sqrt{2\pi}} \exp\left(-\frac{1}{2} \frac{\nu^2}{\sigma^2}\right) +\frac{\nu}{2} \left[ \erf\left(\frac{\nu}{\sqrt{2}\sigma}\right) -1 \right] \enspace . 
\end{align}
\ \vspace{-2ex}\\
The full integral over the magnitude of $z$ therefore reads:
\begin{align}
  \int \mathcal{N}(z | \nu, \sigma^2)\, |z|\, dz
  &= \frac{2\sigma}{\sqrt{2\pi}} \exp\left(-\frac{1}{2} \frac{\nu^2}{\sigma^2}\right) +\frac{\nu}{2} \left[ \erf\left(\frac{\nu}{\sqrt{2}\sigma}\right) - \erf\left(-\frac{\nu}{\sqrt{2}\sigma}\right) \right] \\
  &= \frac{2\sigma}{\sqrt{2\pi}} \exp\left(-\frac{1}{2} \frac{\nu^2}{\sigma^2}\right) + \nu \erf\left(\frac{\nu}{\sqrt{2}\sigma}\right) \enspace .
\end{align}

Therefore we obtain for $\lambda_h$ the expression:
\begin{align}
  \lambda_h 
  &= \frac{1}{N} \sum_{n=1}^N \int \mathcal{N}(z_h | \nu_h^{(n)}, \TT_{hh}^{(n)})\, |z_h|\, dz_h \\
  &= \frac{1}{N} \sum_{n=1}^N \left[ \frac{2\sqrt{\TT_{hh}^{(n)}}}{\sqrt{2\pi}} \exp\left(-\frac{1}{2} \frac{(\nu_h^{(n)})^2}{\TT_{hh}^{(n)}}\right) + \nu_h^{(n)} \erf\left(\frac{\nu_h^{(n)}}{\sqrt{2 \TT_{hh}^{(n)}}}\right) \right] \\
  &= \frac{1}{N} \sum_{n=1}^N \sqrt{\TT_{hh}^{(n)}} \,{\cal M}\left( \frac{\nu_h^{(n)}}{\sqrt{\TT_{hh}^{(n)}}} \right) \enspace , \label{eq:sparse:alpha} 
\end{align}
where ${\cal M}(a) = \sqrt{ \frac{2}{\pi} } \exp\left(-\frac{1}{2}\,a^2 \right) + a \erf\left(\frac{a}{\sqrt{2}}\right)$ as defined in the main text, \cref{eq:DefFuncM}.
So the important observation is that the integral \cref{eq:multidimintegral} has an analytic solution, \cref{eq:sparse:alpha}. 
In this context, we remark that integrals such as \cref{eq:multidimintegral} emerged in other contexts of probabilistic machine learning. Concretely, \citet{ChallisBarber2013} investigated integrals of Gaussians with different `potential functions' including integrals with potential functions $\exp(-|x|)$ (while also other potential functions were treated). 
The same authors point out \mbox{\citep[][]{BarberBishop1998,KussRasmussen2005}} for procedures to reduce high-dimensional to one-dimensional integrals analogously to how \cref{eq:onedimintegral} is obtained from \cref{eq:multidimintegral} (but marginalizations involving Gaussians are also generally well-known).

All models treated in \citep[][]{ChallisBarber2013} (theoretically and empirically) consider fully Bayesian learning using sparse weight matrices. Consequently, the used approximation bound is different from the here considered ELBOs for sparse coding (for both the entropy-based version \cref{eq:sparse:elbo} as well as the classical ELBO \cref{eq:intloglikelihood} to \cref{eq:intlogproposal}).
The emerging problems are closely related, however, and the bound treated by \citet{ChallisBarber2013} could be reformulated to relate to the probablistic sparse coding problem using the classical ELBO \cref{eq:intloglikelihood} to \cref{eq:intlogproposal} (which is also explicitly stated by the authors in the introduction). The integral that emerges for Laplace potentials in their site potential term of the bound is the same as the integral required to solve for $\lambda_h$ in our context (see \cref{eq:multidimintegral}); and \citet{ChallisBarber2013} also provide an analytic soltuion for the integral.
In the context of fully Bayesian approaches, entropy convergence results could, visa versa, also be applied to the bound of \citet{ChallisBarber2013} albeit some algebraic transformations would be required. Convergence to entropy sums could then potentially be useful for models such as Gaussian process regression etc. 

In principle, the analytic integral solutions that emerge in standard probabilistic sparse coding \cref{EqnPSC} are known since still earlier. For instance, \citet[see][]{Korotkov2002,KorotkovKorotkov2012} provided
analytic solutions of integrals that can be used for the here emerging integrals (and we used these solutions in our derivation above). Hence, analytic solutions could have been used, e.g., for work by \citet[][]{seeger_bayesian_2008} or \citep[][]{barello_sparse_coding_2018}, and may prove useful in future work in these direction.

\exclude{
I THINK WE SAY EVERYTHING NOW IN THE MAIN TEXT.\\\
\ \\
Now we can write the ELBO equation when $\lambda_h$ and $\sigma^2$ are optimal, by plugging in \cref{eq:sparse:alpha} and \cref{eq:sparse:sigmasqr} into \cref{eq:sparse:elbo}:
\begin{align}
 \lambda_{\mathrm{opt}, h} 
  &= \frac{1}{N} \sum_n \sqrt{\TT_{hh}^{(n)}} {\cal M}\left( \frac{\nu_h^{(n)}}{\sqrt{\TT_{hh}^{(n)}}} \right) \\
  \sigmaopt 
  &= \frac{1}{DN} \sum_n \left[\tr(\Wt^\mathrm{T}\Wt\TT^{(n)}) + (\Wt\vnu^{(n)}-\vxn)^\mathrm{T}(\Wt\vnu^{(n)}-\vxn) \right] \\
  \HELBO (\Phi,\Wt)
  &= \frac{1}{N} \sum_{n=1}^N  \frac{1}{2} \log(|2\pi e \TT^{(n)}|) 
     -\sum_{h=1}^H \log (2\lambda_{\mathrm{opt}, h} e) 
     -\frac{D}{2} \log (2\pi e \sigmaopt) 
  \label{eq:helbo:objective}
\end{align}

We have to maximize the $\HELBO (\Phi,\Wt)$ w.r.t. $\Wt, \{\vnu^{(n)}, \TT^{(n)}\}$ respecting the corresponding constraints for every variable: a) the columns of $\Wt$ must be of unit length, and b) $\TT^{(n)}$ are S.P.D. matrices (use factorized triangular representation e.g.). 
}

\subsection{Sparse Coding. ELBO for Other Versions of Gaussian Variational Distributions}
\label{app:derive-other-elbo}
If the variational posterior is an uncorrelated Gaussian $q^{(n)}(\vz) = \mathcal{N}(\vz | \vnu^{(n)}, \diag(\taunsqr{1}, \ldots, \taunsqr{H}) )$, the entropy-based ELBO objective can be further simplified. First, we exploit that the columns of $\Wt$ are normalized, and consider the term
\begin{align}
    \tr(\Wt^\mathrm{T}\Wt \diag( \taunsqr{1}, \ldots, \taunsqr{H}) ) 
    &= \diag(\Wt^\mathrm{T}\Wt)^T  \vectorize ( \taunsqr{1}, \ldots, \taunsqr{H} ) \\
    &= \sum_{h=1}^H \Wt_{:,h}^\mathrm{T}\Wt_{:,h} \taunsqr{h} \\
    &= \sum_{h=1}^H \taunsqr{h} \enspace ,
\end{align}
which removes the dependency on $\Wt$ here. The optimal $\lambda_{\mathrm{opt}, h}$ and $\sigmaopt$ then read:
\begin{align}
  \lambda_{\mathrm{opt}, h}(\Phi)
  &= \frac{1}{N} \sum_{n=1}^N \taun{h} {\cal M}\left( \frac{\nu_h^{(n)}}{\taun{h}} \right) \\
  \sigmaopt(\Phi,\Wt) \,
  &= \, \frac{H}{D}\, \bar{\tau}^2 \, + \, \frac{1}{D}\frac{1}{N} \sum_{n=1}^N (\Wt\vnu^{(n)}-\vxn)^\mathrm{T}(\Wt\vnu^{(n)}-\vxn), 
  \mbox{\ where\ } \bar{\tau}^2 = \frac{1}{N} \sum_{n=1}^N \frac{1}{H} \sum_{h=1}^H \taunsqr{h} \enspace , 
  \label{eq:tau_avg}
\end{align}
which gives us a simplified entropy-based objective:
\begin{align}
  \HELBO (\Phi,\Wt)
  &= \frac{1}{N} \sum_{n=1}^N  \sum_{h=1}^H \frac{1}{2} \log\big(2\pi e \big(\tau^{(n)}_h\big)^2\big) 
     -\sum_{h=1}^H \log (2 e\lambda_{\mathrm{opt}, h}(\Phi) ) 
     -\frac{D}{2} \log (2\pi e \sigmaopt(\Phi,\Wt)) \\
  &= \frac{1}{N} \sum_{n=1}^N  \sum_{h=1}^H \frac{1}{2} \log\big(2\pi e \big(\tau^{(n)}_h\big)^2\big) 
     -\sum_{h=1}^H \log \left(2 e \frac{1}{N} \sum_{n=1}^N \taun{h} {\cal M}\left( \frac{\nu_h^{(n)}}{\taun{h}} \right) \right) \\
  &-\frac{D}{2} \log \left(2\pi e \left[ \frac{H}{D}\, \bar{\tau}^2 \, 
    + \, \frac{1}{D}\frac{1}{N} \sum_{n=1}^N (\Wt\vnu^{(n)}-\vxn)^\mathrm{T}  (\Wt\vnu^{(n)}-\vxn) \right] \right) \enspace .
     \label{eqn:helbodiag}
\end{align}

If the objective is annealed as suggested in \cref{eq:three-entropies-annealing}, then the objective reads:
\begin{align}
  \HELBO (\Phi,\Wt)
  &= \frac{1}{N} \sum_{n=1}^N  \sum_{h=1}^H \frac{1}{2} \log\big(2\pi e \big(\tau^{(n)}_h\big)^2\big) 
     - \,\gamma\, \sum_{h=1}^H \log \big(2 e\lambda_{\mathrm{opt}, h}(\Phi) \big) 
     - \,\delta\, \frac{D}{2} \log \big(2\pi e \sigmaopt(\Phi,\Wt)\big) \enspace . 
     \label{eqn:helbodiag-anneal}
\end{align}

\subsection{Sparse Coding. Classical Variational Inference Objective}
\label{app:derive-classical-elbo}
Having obtained the analytic solution of the ELBO in \cref{eq:analytic-elbo}, it could be asked how much the results rely on the entropy convergence results. For this, we here consider the original ELBO of the sparse coding model defined as in \cref{EqnPSC}.
The ELBO with variational distribution $q^{(n)}(\vz)$ for $N$ data points reads:
\begin{align}
  \log p_\Theta(\vx) \geq \ELBO(\Phi,\Theta) = \frac{1}{N} \sum_{n=1}^N \int q_\Phi^{(n)}(\vz) \log \frac{ p_\Theta(\vxn | \vz, W, \sigma^2)  p_\Theta(\vz)}{q_\Phi^{(n)}(\vz)} d\vz \enspace .
\end{align}

We can rewrite it as:
\begin{align}
  \ELBO(\Phi,\Theta) &= \frac{1}{N} \sum_{n=1}^N \int q_\Phi^{(n)}(\vz) \log p_\Theta(\vxn | \vz, W, \sigma^2) d\vz \label{eq:intloglikelihood} \\
  &+ \frac{1}{N} \sum_{n=1}^N \int q_\Phi^{(n)}(\vz) \log p_\Theta(\vz) d\vz \label{eq:intlogprior} \\
  &- \frac{1}{N} \sum_{n=1}^N \int q_\Phi^{(n)}(\vz) \log q_\Phi^{(n)}(\vz) d\vz  \enspace . \label{eq:intlogproposal}
\end{align}

Similarly, as for the entropy-based ELBO, we use full covariance Gaussian $q^{(n)}(\vz) = \mathcal{N}(\vz | \vnu^{(n)}, \TT^{(n)})$ as a variational posterior distribution. The integral over the likelihood function (Eq. \ref{eq:intloglikelihood}) then reads:
\begin{align}
  & \int q_\Phi^{(n)}(\vz) \log p_\Theta(\vxn | \vz, W, \sigma^2) d\vz \\
  &= -\log [Z(\mathbb{I}\sigma^2)] - \frac{1}{2\sigma^2} \E_{q(\vz)} \left[ (W\vz-\vxn)^\mathrm{T}(W\vz-\vxn) \right] \\
  &= -\frac{D}{2} \log(2\pi\sigma^2) - \frac{1}{2\sigma^2} \left[\tr(W^\mathrm{T}W\TT^{(n)}) + (W\vnu^{(n)}-\vxn)^\mathrm{T}(W\vnu^{(n)}-\vxn) \right]  \enspace .
\end{align}

Now consider the integral in \cref{eq:intlogprior}. We can rewrite it as follows:
\begin{align}
  \int q_\Phi^{(n)}(\vz) \log p_\Theta(\vz) d\vz
  &= \int \mathcal{N}(\vz | \vnu^{(n)}, \TT^{(n)}) \Big(H \log\Big(\frac{1}{2}\Big) - \sum_{h=1}^H |z_h|\Big) d\vz \\
  &= H \log\Big(\frac{1}{2}\Big) - \int \mathcal{N}(\vz | \vnu^{(n)}, \TT^{(n)})  \sum_{h=1}^H |z_h| d\vz \\
  &= H \log\Big(\frac{1}{2}\Big) - \sum_{h=1}^H \int \mathcal{N}(\vz_h | \vnu^{(n)}_h, \TT^{(n)}_{hh})  |z_h| d\vz_h \\
  &= H \log\Big(\frac{1}{2}\Big) - \sum_{h=1}^H \left[ \frac{2\sqrt{\TT_{hh}^{(n)}}}{\sqrt{2\pi}} \exp\left(-\frac{1}{2} \frac{(\nu_h^{(n)})^2}{\TT_{hh}^{(n)}}\right) + \nu_h^{(n)} \erf\left(\frac{\nu_h^{(n)}}{\sqrt{2 \TT_{hh}^{(n)}}}\right) \right] \enspace .
\end{align}
That is, we can use the result obtained for the entropy ELBO and also for the classical ELBO.

\Cref{eq:intlogproposal} is just a Gaussian entropy:
\begin{align}
  -\int q_\Phi^{(n)}(\vz) \log q(\vz) d\vz 
  &= \mathcal{H}[q_\Phi^{(n)}(\vz)] \\
  &= \frac{1}{2} \log(|2\pi e \TT^{(n)}|) \enspace .
\end{align}

Thus, the classical ELBO objective can be reformulated as follows:
\begin{align}
  \ELBO(\Phi,\Theta) 
  = &-\frac{D}{2} \log(2\pi\sigma^2) - \frac{1}{2\sigma^2} \frac{1}{N} \sum_{n=1}^N \left[\tr(W^\mathrm{T}W\TT^{(n)}) + (W\vnu^{(n)}-\vxn)^\mathrm{T}(W\vnu^{(n)}-\vxn) \right]  \\
  &+ H \log\Big(\frac{1}{2}\Big) - \frac{1}{N} \sum_{n=1}^N \sum_{h=1}^H 
  \sqrt{\TT_{hh}^{(n)}} \,{\cal M}\left( \frac{\nu_h^{(n)}}{\sqrt{\TT_{hh}^{(n)}}} \right) \\
  &+  \frac{1}{N} \sum_{n=1}^N \frac{1}{2} \log(|2\pi e \TT^{(n)}|) \enspace .
\end{align}

\newcommand{\FFEE}{{\cal F}^{\mathrm{EE}}} %
\newcommand{\FFEL}{{\cal F}^{\mathrm{ELBO}}} %
\newcommand{\PhiMax}{{\Phi}^{\mathrm{max}}} %
\newcommand{\ThetaNew}{{\Theta}^{\mathrm{new}}} %
\newcommand{\AtPoint}[1]{\Big|_{\hspace{0ex}\tiny #1}}
\newcommand{\AtPoints}[2]{\Big|_{\hspace{-1ex}\tiny\begin{array}{c} \ \\ #1\\ #2 \end{array}}}

\newcommand{\lambdaVec}{\vec{\lambda}}
\newcommand{\lambdaVecT}{\vec{\lambda}^{\mathrm{T}}}
\newcommand{\lambdaVecNew}{\vec{\lambda}^{\mathrm{new}}}

\newcommand{\vtau}{\vec{\tau}}
\newcommand{\tauVec}{\vec{\tau}}
\newcommand{\sigmaTilde}{\tilde{\sigma}}

\exclude{
\section{Optimization of Variational Parameters Based on Entropy sums}
\mycomment{Remove this section? But preserve.}
Using Theorems 1 and 2 we can define a new objective for learning that is based on entropies. \\
We use $\sigmaTilde=\sigma^2$. We can then define 
\begin{align}
\FFEE_{(\Phi,\Theta)} = \FFEE_{(\tauVec,\lambdaVec,\sigmaTilde)} = 
	\frac{1}{N} \sum_n \mathcal{H}[q_\Phi^{(n)}(\vz)] 
	- \mathcal{H}[p_\Theta(\vz)] 
	- \mathcal{H}[p_\Theta(\vx | \vz)] \\
= \frac{1}{N}\sum_n\sum_{h=1} \frac{1}{2}\log\!\big( 2\pi{}e{}\tau^{(n)}_h \big) - \sum_{h=1} \log\!\big( 2e{}\lambda_h \big) - \frac{D}{2} \log\!\big( 2\pi{}e\sigmaTilde \big)
\end{align}
After the M-step\footnote{In our case the M-step amounts to taking the gradient $\frac{\partial}{\partial \Wt}\sigmaTilde(\Phi,\Wt)$ with $\sigmaTilde(\Phi,\Wt)$ as given by Theorem 2,
while respecting the constraints of $\Wt$.}, the new model parameters are given by $\ThetaNew$, i.e., $\Wt^{\mathrm{new}}$ was computed using a gradient step and $\lambdaVecNew$ and $\sigma^2_{new}$ were determined using Theorem 2 based on $\Phi^{old}$ and $\Wt^{\mathrm{new}}$.

Now the following formula may apply (this has still to be checked thoroughly): The optimal variational parameters $\Phi$ are given by the solution of the following equation:
\begin{align}
\textstyle \frac{\partial}{\partial \Phi} \FFEE_{(\Phi,\Theta)}\AtPoint{\Theta=\ThetaNew}
+ \big( \frac{\partial}{\partial \lambdaVecT} \FFEE_{(\Phi,\Theta)} \big)\AtPoint{\Theta=\ThetaNew} \Big( \frac{\partial}{\partial \Phi} \lambdaVec\big(\Phi\big)    \Big)
+ \big( \frac{\partial}{\partial \sigma^2} \FFEE_{(\Phi,\Theta)} \big)\AtPoint{\Theta=\ThetaNew} \Big( \frac{\partial}{\partial \Phi} \sigma^2\big(\Phi,\Wt\big)\Big)\AtPoint{\Wt=\Wt^{new}}     = 0\nonumber
\end{align}
The formula looks like the conventional total differentiation of the function $\FFEE_{\big(\tauVec,\lambdaVec(\Phi),\sigmaTilde(\Phi,\Wt)\big)}$ with $\lambdaVec(\Phi)$ and $\sigmaTilde(\Phi,\Wt)$ as given by Theorem 2. There is one important difference, however. In the formula, we evaluate the first factor of each summand always at the fixed parameters
$\ThetaNew=(\Wt^{new},\lambdaVec^{new},\sigmaTilde^{new})$, i.e., we hold those values fixed at the result of the M-step (also for $\lambdaVec$ and $\sigmaTilde$ that would for the "normal" total derivative depend on $\Phi$ and $\Wt$). The solutions for $\Phi$ should coincide with the solutions for $0=\frac{\partial}{\partial \Phi}\FFEL(\Phi,\Theta)$, i.e., with the derivative of the normal ELBO w.r.t.\ the variational parameters. However, the new formula is (A) entropy-based, and (B) is presumably resulting in nicer equations. 

COMMENT: 
It seems that for $\nu$-updates we indeed get the same objective. With Dmytro it was discussed that he double-checks, and checks how the updates for $\tau_h$ would look like (for conventional and using the new formula). The formula should be valid more generally but I still have to proof it in detail. But it may also be not correct (don't know for sure without a proper proof). Please check everybody. Thanks!
}

\ \\
\section{PROOF OF THEOREM 3}
\label{Theorem3:Proof}

\setcounter{theorem}{2}

In this section, we lay out the details of the proof for \cref{theo:analytic-elbo}.
For completeness, we first restate the entropy-based ELBO (given equivalently in \cref{eq:analytic-elbo}), which reads 
\begin{align*}
    \begin{split}
        \HELBO(\Phi,\Wt) = &\, \frac{1}{N} \sum_{n=1}^{N} \frac{1}{2} \log\big(\,|\,2\pi\,e\,\tau_h^{(n)} \,|\,\big)      \\
        & - \sum_{h=1}^H \log \Big( 2\,e\,  \frac{1}{N} \sum_{n=1}^N \tau_h^{(n)} {\cal M}\Big( \frac{\nu_h^{(n)}}{\tau_h^{(n)}} \Big)    \Big) \\
        & -\frac{D}{2} \log \Big( 2\pi{}e \frac{1}{DN} \sum_{n=1}^N \E_{q^{(n)}_{\Phi}\!(\vz)} \| \vxn - \tilde W\vz \|^2  \Big) \enspace ,
    \end{split}
\end{align*}
where we again use $\tau_h^2 = \TT_{hh}$ as a short-hand for the diagonal elements of the covariance (and, accordingly, $\tau_h = \sqrt{\TT_{hh}}$ for their positive square root).
To prove \cref{theo:analytic-elbo} we rely on the following Lemma which establishes the equality of gradients on the manifold of optimal scales and variances, i.e., all points in parameter space that satisfy \cref{eq:condition_stationary_points} for non-amortized and amortized parametrizations.

\begin{lemma}[Equality of gradients on manifold of optimal scales and variance]
    \label{lemma:gradient_equality}
    Consider the learning objectives $\ELBO(\Phi, \Theta)$, given in \cref{EqnELBO}, and $\HELBO(\Phi, \Wt)$, given in \cref{eq:analytic-elbo}, for the probabilistic sparse coding model formulated in \cref{EqnPSC2} with $\Theta=(\Wt,\sigma^2, \vlambda) \in  \RRRnorm^{D\times{}H}  \times \RRR_+ \times \RRR_+^H$, and variational parameters $\Phi = (\Phi_\nu, \Phi_\TT)$ that parameterize mean 
    $ \vnu^{(n)} \in \RRR^H$ and covariance $\TT^{(n)} \in \RRR^{H\times H}$ (in amortized or non-amortized fashion).
    We assume $\Phi_\nu \cap \Phi_\TT = \emptyset$.
    
    Then, whenever \cref{eq:condition_stationary_points} holds, it holds that
    \begin{align}
    \begin{split}
        \label{eq:Equality_of_Gradients}
        \nabla_{\Phi} \ELBO(\Phi,\Theta) &= \nabla_{\Phi} \HELBO(\Phi,\Wt) \enspace , \\
        \nabla_{\Wt} \ELBO(\Phi,\Theta)  &= \nabla_{\Wt} \HELBO(\Phi,\Wt) \enspace .
    \end{split}
    \end{align}
\end{lemma}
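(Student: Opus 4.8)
The plan is to treat the lemma as an instance of the envelope theorem. The starting point is the identity already recorded in \cref{eq:theorem-comb}: for \emph{every} $(\Phi,\Wt)$ the analytic objective satisfies
\[
\HELBO(\Phi,\Wt) = \ELBO\big(\Phi,\Theta_{\mathrm{opt}}(\Phi,\Wt)\big),
\]
where $\Theta_{\mathrm{opt}}(\Phi,\Wt) = \big(\Wt,\sigma^2_{\mathrm{opt}}(\Phi,\Wt),\vlambdaopt(\Phi)\big)$ is the map of \cref{eq:theta-opt} built from \cref{theo:optimal-parameters}. This holds because $\Theta_{\mathrm{opt}}$ by construction always meets \cref{eq:condition_stationary_points}, so \cref{theo:laplace-prior} turns the ELBO into the very entropy sum that defines $\HELBO$. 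The two sides are therefore literally the same function of $(\Phi,\Wt)$, and I may differentiate whichever representation is convenient. I would also note here that $\vlambdaopt$ depends on $\Phi$ alone (through the diagonal statistics $\nu_h^{(n)},\TT_{hh}^{(n)}$), whereas $\sigma^2_{\mathrm{opt}}$ depends on both $\Phi$ and $\Wt$; both are analytic, hence differentiable.

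First I would expand $\nabla_\Phi \HELBO$ by the multivariate chain rule applied to the composition on the right-hand side. Because $\Phi$ enters in three ways — directly, through $\vlambdaopt(\Phi)$, and through $\sigma^2_{\mathrm{opt}}(\Phi,\Wt)$ — this yields
\[
\nabla_\Phi\HELBO = \nabla_\Phi\ELBO\big|_{\Theta_{\mathrm{opt}}} + \frac{\partial\ELBO}{\partial\vlambda}\Big|_{\Theta_{\mathrm{opt}}}\frac{\partial\vlambdaopt}{\partial\Phi} + \frac{\partial\ELBO}{\partial\sigma^2}\Big|_{\Theta_{\mathrm{opt}}}\frac{\partial\sigma^2_{\mathrm{opt}}}{\partial\Phi},
\]
where $\nabla_\Phi\ELBO|_{\Theta_{\mathrm{opt}}}$ denotes the \emph{partial} gradient of the original ELBO with $\Theta$ held fixed at $\Theta_{\mathrm{opt}}$. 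The analogous expansion for $\Wt$ carries only the $\sigma^2_{\mathrm{opt}}$ indirect term, since $\vlambdaopt$ does not depend on $\Wt$.

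The key step is then to invoke the hypothesis of the lemma. Whenever \cref{eq:condition_stationary_points} holds, the parameters equal their optimal values — uniqueness in \cref{theo:optimal-parameters} forces $\vlambda = \vlambdaopt(\Phi)$ and $\sigma^2 = \sigma^2_{\mathrm{opt}}(\Phi,\Wt)$, i.e.\ $\Theta = \Theta_{\mathrm{opt}}$ — and the two defining partial derivatives $\partial\ELBO/\partial\vlambda$ and $\partial\ELBO/\partial\sigma^2$ vanish at exactly that point. Consequently every indirect term in the chain-rule expansions is annihilated, leaving $\nabla_\Phi\HELBO = \nabla_\Phi\ELBO$ and $\nabla_\Wt\HELBO = \nabla_\Wt\ELBO$, which is \cref{eq:Equality_of_Gradients}. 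This argument is insensitive to whether $(\vnu^{(n)},\TT^{(n)})$ are free per-datapoint parameters or amortized network outputs: the disjointness assumption $\Phi_\nu\cap\Phi_\TT=\emptyset$ and the specific dependence of $\Phi$ on the data only reshape the inner Jacobians $\partial\vlambdaopt/\partial\Phi$ and $\partial\sigma^2_{\mathrm{opt}}/\partial\Phi$, which are multiplied by zero regardless.

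The main obstacle I anticipate is bookkeeping rather than conceptual: one must carefully distinguish the \emph{total} derivative that defines $\nabla\HELBO$ from the \emph{partial} derivative of $\ELBO$ at fixed $\Theta$, and verify that the optimal-parameter maps are differentiable so that the chain rule is licensed (this is precisely where analyticity of $\mathcal{M}$, $\erf$, and $\sigma^2_{\mathrm{opt}}$ is used). A secondary point worth spelling out is that the only $\Theta$-components whose ELBO-partials must vanish are $\vlambda$ and $\sigma^2$; the $\Wt$-slot is shared by both objectives and carries no envelope correction, so no additional stationarity condition on $\Wt$ or $\Phi$ is needed, and the equality of ambient gradients descends to any constrained (unit-column) gradient used for $\Wt$.
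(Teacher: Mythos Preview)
Your envelope-theorem argument is correct and constitutes a genuinely different route from the paper's proof. You start from the identity $\HELBO(\Phi,\Wt)=\ELBO\big(\Phi,\Theta_{\mathrm{opt}}(\Phi,\Wt)\big)$ of \cref{eq:theorem-comb}, apply the chain rule, and observe that the indirect terms are annihilated because $\partial\ELBO/\partial\vlambda$ and $\partial\ELBO/\partial\sigma^2$ vanish at $\Theta_{\mathrm{opt}}$ by construction; the hypothesis then only serves to identify the given $\Theta$ with $\Theta_{\mathrm{opt}}$ via the uniqueness in \cref{theo:optimal-parameters}. The paper instead proceeds by explicit term-by-term computation: it decomposes both objectives into their three summands, differentiates each pair with respect to $\Wt$, $\phi_\nu\in\Phi_\nu$, and $\phi_\tau\in\Phi_\TT$ separately, and shows that the resulting expressions agree up to scalar factors $\sigma^2/\sigmaopt$ and $\lambda_h/\lambda_{\mathrm{opt},h}$, which become $1$ under \cref{eq:condition_stationary_points}. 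Your approach is shorter, makes the role of the disjointness assumption $\Phi_\nu\cap\Phi_\TT=\emptyset$ transparently superfluous (the paper itself remarks that it can be dropped), and generalizes immediately to any optimal-parameter profile. The paper's explicit computation, on the other hand, yields the extra information that the gradients are \emph{proportional} (with those explicit factors) even off the manifold, which your envelope argument does not directly deliver; this off-manifold relation is useful for understanding how the two objectives relate during optimization and for the subsequent discussion of Hessians in the proof of \cref{theo:analytic-elbo}. One presentational nit: in your ``key step'' paragraph it would be cleaner to separate (i) the fact that the indirect chain-rule terms vanish \emph{always}, since they are evaluated at $\Theta_{\mathrm{opt}}$, from (ii) the use of the hypothesis, which is only needed to equate $\nabla_\Phi\ELBO(\Phi,\Theta)$ with $\nabla_\Phi\ELBO\big|_{\Theta_{\mathrm{opt}}}$.
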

\begin{proof}
    We need to prove that the gradients for $\Theta$ and $\Phi$ for both objectives are equal at all points in parameter space whenever \cref{eq:condition_stationary_points} is fulfilled, i.e., whenever the scales $\vlambda$ and the variance $\sigma^2$ are optimal.
    
    Recall that the ELBO, given in \cref{EqnELBO}, can be written as (using the notation of \cref{theo:laplace-prior})
    \begin{align}
        \ELBO(\Phi, \Theta) = \overbrace{\frac{1}{N} \sum_{n=1}^N \mathcal{H}[q_\Phi^{(n)}(\vz)] + \ELBO_1(\Phi, \Theta)}^{\text{regularization (neg. KL-divergence)}} \overbrace{+ \vphantom{\frac{1}{N} \sum_{n=1}^N} \ELBO_2(\Phi, \Theta)}^{\text{reconstruction}} %
    \intertext{
    and whenever the condition of \cref{theo:laplace-prior} (i.e., \cref{eq:condition_stationary_points}) is satisfied, we observe the term-wise convergence to entropies such that the ELBO decomposes into three entropies
    }
		= \underbrace{\frac{1}{N} \sum_{n=1}^N \mathcal{H}[q_\Phi^{(n)}(\vz)] 
		- \mathcal{H}[p_\Theta(\vz)]}_{\text{regularization (neg. KL-divergence)}}
		\underbrace{- \vphantom{\frac{1}{N} \sum_{n=1}^N}\mathcal{H}[p_\Theta(\vx | \vz)]}_{\text{reconstruction}} \enspace .
    \end{align}
    Note that the entropy-based objective $\HELBO$, given in \cref{eq:analytic-elbo}, is merely the sum of entropies above with analytically optimal scale and variance parameters obtained from \cref{theo:optimal-parameters}.

    We need to investigate the gradients with respect to the parameters of the model $\Theta=(\Wt,\sigma^2, \vlambda)$ and the variational parameters $\Phi=(\Phi_\nu, \Phi_\TT)$.
    We start by addressing the model parameters $\Theta$.
    
    \textbf{Model Parameters $\Theta$:}
    Considering $\Theta$, only the parameters $w \in \tilde W$ are of interest.\footnote{%
    The remaining parameters $\sigma, \vlambda$ will be learned in case of $\ELBO$, or set to optimality in case of $\HELBO$. However, as \cref{eq:condition_stationary_points} holds in both cases they always yield zero gradients.}
    Thus, only the reconstruction terms, $\ELBO_2$ and its counterpart $- \mathcal{H}[p_\Theta(\vx | \vz)]$, contribute to the gradients for $\tilde W$.
    We consider a general parameter $w \in \tilde W$. Regarding the standard ELBO $\ELBO$, the gradient is given by
	\begin{align}
		\frac{\partial}{\partial w} \ELBO(\Phi, \Theta)  &= \frac{\partial}{\partial w} \ELBO_2(\Theta, \Phi) \\
		&= - \frac{1}{2 \sigma^2} \frac{\partial}{\partial w} \left[\frac{1}{N} \sum_{n=1}^N \E_{q^{(n)}_{\Phi}(\vz)} \big\| \vxn - \tilde W \vz\big\|^2 \right] \\
		&= - \frac{1}{\sigma^2} \frac{1}{N} \sum_{n=1}^N \E_{q^{(n)}_{\Phi}(\vz)} \left(\vxn - \tilde W \vz\right) \frac{\partial}{\partial w} \tilde W \vz \enspace.
    \intertext{Similarly, for the entropy-based ELBO $\HELBO$ we obtain}
		\frac{\partial}{\partial w} \HELBO(\Phi, \Wt) 
		&= - \frac{\partial}{\partial w} \mathcal{H}[ p_{\Theta}(\vx \vert \vz)] \\
		&= - \frac{D}{2} \frac{\partial}{\partial w} \log\big(2 \pi e 
		\frac{1}{ND} \sum_{n=1}^N \E_{q^{(n)}_{\Phi}(\vz)} \big\| \vxn - \tilde W \vz \big\|^2 \big) \\
		&= - D \frac{\frac{1}{N} \sum_{n=1}^N \E_{q^{(n)}_{\Phi}(\vz)} \left(\vxn - \tilde W \vz\right)}{\frac{1}{N} \sum_{n=1}^N \E_{q^{(n)}_{\Phi}(\vz)} \big\| \vxn - \tilde W \vz\big\|^2 } \frac{\partial}{\partial w} \tilde W \vz \enspace ,
		\intertext{and with $\sigmaopt(\Phi, \tilde W) = \frac{1}{ND} \sum_{n=1}^N \E_{q^{(n)}_{\Phi}(\vz)} \big\| \vxn - \tilde W \vz\big\|^2$ (as derived in \cref{theo:optimal-parameters}) we conclude}
		&= - \frac{1}{\sigmaopt} \frac{1}{N} \sum_{n=1}^N \E_{q^{(n)}_{\Phi}(\vz)} \left(\vxn - \tilde W \vz\right) \frac{\partial}{\partial w} \tilde W \vz \enspace .
	\end{align}	

    Observe that both objectives yield the same gradient information for any $w \in \Wt$, just scaled by a ratio that reflects how far $\sigma^2$ is from its optimal value $\sigmaopt$, such that
    \begin{equation}
         \nabla_{\Wt}
         \HELBO(\Phi, \Wt) = \frac{\sigma^2}{\sigmaopt} 
         \nabla_{\Wt}
         \ELBO(\Phi, (\vlambda, \Wt, \sigma^2)) \enspace .
         \label{eq:gradW}
    \end{equation}
    Importantly, both objectives give rise to the same gradients for $\tilde W$ whenever $\sigma^2 = \sigmaopt$, i.e., when \cref{eq:condition_stationary_points} is satisfied.\footnote{Note that the same constraints on $\Wt$ are imposed for both objectives. That is, any additive regularization terms of the form $\mathcal{L} + R(\Wt)$ would consequently yield the very same gradient updates for $\mathcal{L} \in \{ \ELBO, \HELBO \}$.}

    \textbf{Variational Parameters $\Phi$:}
    It remains to investigate how the gradients for the variational parameters $\Phi$ are affected when training with $\ELBO$ or $\HELBO$.
    We consider $\phi \in \Phi$ for which the gradient decomposes into three terms
    \begin{align}
    \frac{\partial}{\partial \phi} \ELBO(\Phi, \Theta) &=
    \frac{\partial}{\partial \phi} \frac{1}{N} \sum_{n=1}^N \mathcal{H}[q_\Phi^{(n)}(\vz)] + \frac{\partial}{\partial \phi} \ELBO_1(\Phi, \Theta) + \frac{\partial}{\partial \phi} \sum_{n=1}^N \ELBO_2(\Phi, \Theta)\label{eq:dellphi_ELBO} \enspace , \\
    \frac{\partial}{\partial \phi} \HELBO(\Phi, \Wt) &=
    \frac{\partial}{\partial \phi}\frac{1}{N} \sum_{n=1}^N \mathcal{H}[q_\Phi^{(n)}(\vz)] 
    - \frac{\partial}{\partial \phi} \mathcal{H}[p_\Theta(\vz)]
    - \frac{\partial}{\partial \phi} \mathcal{H}[p_\Theta(\vx | \vz)] \enspace . \label{eq:dellphi_HELBO} %
    \end{align}
    The average encoder entropy (the first term in \cref{eq:dellphi_HELBO,eq:dellphi_ELBO}, respectively) is part of both objectives and consequently provides the same gradient information for any $\phi \in \Phi$ (regardless of the concrete parametrization in terms of $\Phi$).
    We continue with the gradient updates arising from the reconstruction score, i.e., $\ELBO_2(\Phi, \Theta)$ vs. $- \mathcal{H}[p_\Theta(\vx | \vz)]$, the last term in \cref{eq:dellphi_HELBO,eq:dellphi_ELBO}, respectively.
    Starting with the latter we get %
    \begin{align}
		- \frac{\partial}{\partial \phi} \mathcal{H}[ p_{\Theta}(\vx \vert \vz)] %
		&= - \frac{D}{2} \frac{\partial}{\partial \phi} \log\big(2 \pi e 
		\sigmaopt(\Phi, \tilde W) \big) \\
		&= - \frac{D}{2} \frac{1}{\sigmaopt(\Phi, \tilde W)} \frac{\partial}{\partial \phi}  \sigmaopt(\Phi, \tilde W)\\
		\intertext{and by invoking $\sigmaopt(\Phi, \tilde W) = \frac{1}{ND} \sum_{n=1}^N \E_{q_\Phi(\vz \vert \vxn)} \big\| \vxn - \tilde W\vz)\big\|^2$,}
		&= - \frac{1}{2\sigmaopt}  \frac{\partial}{\partial \phi} \left[\frac{1}{N} \sum_{n=1}^N \E_{q(\vz \vert \vxn)} \big\| \vxn - \tilde W \vz\big\|^2 \right] \enspace .
	\end{align}
    Considering the corresponding counterpart, $\ELBO_2(\Phi, \Theta)$ in the classical ELBO, the gradient is directly given by
    \begin{align}
		\frac{\partial}{\partial \phi}  \ELBO_2(\Phi, \Theta) &= - \frac{1}{2\sigma^2}  \frac{\partial}{\partial \phi} \left[\frac{1}{N} \sum_{n=1}^N \E_{q(\vz \vert \vxn)} \big\| \vxn - \tilde W \vz\big\|^2 \right] \enspace,		
    \end{align}
    such that gradient updates are again scaled depending on how close $\sigma^2$ is to $\sigmaopt$
    \begin{equation}
         - \frac{\partial}{\partial \phi} \mathcal{H}[ p_{\Theta}(\vx \vert \vz)] = \frac{\sigma^2}{\sigmaopt} \frac{\partial}{\partial \phi}  \ELBO_2(\Phi, \Theta) \enspace .
    \end{equation}
    We are left with the middle terms in \cref{eq:dellphi_ELBO,eq:dellphi_HELBO}, i.e., $ \frac{\partial}{\partial \phi} \ELBO_1(\Phi, \Theta)$ vs. $ - \frac{\partial}{\partial \phi} \mathcal{H}[p_\Theta(\vz)]$.
    To enable the gradient computations we first need to derive a closed-form expression for $\ELBO_1(\Phi, \Theta)$ with $q_\Phi^{(n)}(\vz) = \mathcal{N}(\vz; \vnu^{(n)}, \TT^{(n)}\big)$, again with diagonal elements $\tau_h^2 = \TT_{hh}$, and a Laplace prior with learnable scales $\lambda_h$, i.e., $ p_\Theta(\mathbf{\vz}) = \prod_{h=1}^H \frac{1}{2\lambda_h} \exp\left(-\frac{|z_h|}{\lambda_h}\right)$. 
    
    Recall that $\ELBO_1(\Phi, \Theta) = \textstyle\frac{1}{N} \sum_n \int q^{(n)}_{\Phi}(\vz) \log p_{\Theta}(\vz) \d\vz$ for which the individual summands evaluate to %
\begin{align}
    \label{eq:L1_closed_form_start}
    \int q_\Phi^{(n)}(\vz) \log p_\Theta(\vz) \d\vz %
    &= \int \mathcal{N}(\vz | \vnu^{(n)}, \TT^{(n)}) \Big(\sum_{h=1}^H \log\Big(\frac{1}{2 \lambda_h}\Big) - \sum_{h=1} \frac{|z_h|}{\lambda_h}\Big) \d\vz \\
    &= \sum_{h=1}^H \log\Big(\frac{1}{2 \lambda_h}\Big) - \int \mathcal{N}(\vz | \vnu^{(n)}, \TT^{(n)})  \sum_{h=1} \frac{|z_h|}{\lambda_h} \d\vz \\
    &= - \sum_{h=1}^H \log(2 \lambda_h) - \sum_{h=1}^H \frac{1}{\lambda_h}\int \mathcal{N}(z_h | \nu^{(n)}_h, \TT^{(n)}_{hh})  |z_h| \d z_h \\
    &= - \sum_{h=1}^H \log(2 \lambda_h) - \sum_{h=1}^H \frac{1}{\lambda_h} \left[ \sqrt{ \frac{2}{\pi} } \tau_h^{(n)} \exp\left(-\frac{1}{2} \Bigg(\frac{\nu_h^{(n)}}{\tau_h^{(n)}}\Bigg)^2 \right) + \nu_h^{(n)} \erf\left(\frac{\nu_h^{(n)}}{\sqrt{2} \tau_h^{(n)}}\right) \right] \enspace. \label{eq:L1_closed_form_end}
    \intertext{With help of the statistic ${\cal M}(a) = \sqrt{ \frac{2}{\pi} } \exp\left(-\frac{1}{2}\,a^2 \right) + a \erf\left(\frac{a}{\sqrt{2}}\right)$, introduced in \cref{eq:DefFuncM}, we get}
    &= - \sum_{h=1}^H \log(2 \lambda_h) - \sum_{h=1}^H \frac{\tau_h^{(n)}}{\lambda_h} 
    \mathcal{M}\Big(\frac{\nu_h^{(n)}}{\tau_h^{(n)}}\Big)
    \intertext{such that}
    \ELBO_1(\Phi, \Theta) &= 
        - \frac{1}{N}\sum_{n=1}^N \sum_{h=1}^H 
        \left[ \log(\lambda_h)
        + \frac{\tau_h^{(n)}}{\lambda_h} 
        \mathcal{M}\Big(\frac{\nu_h^{(n)}}{\tau_h^{(n)}}\Big)
        + c \right] 
    \label{eq:ELBO1_closedform}
\end{align}
for some constant $c$ (that does not influence any gradients). Note that now the functional dependency for mean and variance parameters matters for the gradient calculations, such that we need to consider $\phi \in \Phi_\nu$ and $\phi \in \Phi_\TT$ separately.

Before proceeding we need to address the different parametrization that arises for non-amortized vs. amortized approaches. 
In the non-amortized setting, the variational parameters $\Phi = (\Phi_\nu, \Phi_\TT)$ directly parameterize mean and covariance of $q_\Phi^{(n)}$ (per data point $\vxn$), i.e.,
$\Phi_\nu = (\vnu^{(1)},\ldots,\vnu^{(N)})$ and $\Phi_\TT = (\TT^{(1)},\ldots,\TT^{(N)})$.
In amortized approaches, we take $\Phi = (\Phi_\nu, \Phi_\TT)$ to parameterize the two functions\footnote{Commonly, artificial neural networks are utilized here, such that $\Phi_\nu$ denotes the parameters of the neural net that predicts the mean, and $\Phi_\TT$ the parameters of the neural net that predicts the covariance. The independence assumption in this Lemma does not allow for parameter sharing between those networks. Often, the covariance is restricted to be a diagonal matrix such that $\TT_\Phi: \RRR^D \to \RRR^{H}$.} $\vnu_\Phi: \RRR^D \to \RRR^H$ and $\TT_\Phi: \RRR^D \to \RRR^{H\times H}$ such that mean and covariance are given as the respective function outputs, i.e,
\begin{align}
   \vnu^{(n)} &= \vnu_\Phi(\vxn) \enspace , \\
   \TT^{(n)} &= \TT_\Phi(\vxn) \enspace .
\end{align}
The derivations in the sequel cover both settings, as we only need to compare the resulting gradients in terms of functions of partial derivatives $\frac{\partial \nu_h^{(n)}}{\partial \phi_\nu}$ or $\frac{\partial \tau_h^{(n)}}{\partial \phi_\tau}$, which clearly differ in amortized vs. non-amortized parametrizations, but are the same for both objectives.

\exclude{
\SD{
Without parameter sharing:
    \begin{align*}
    \frac{\partial}{\partial \phi } \ELBO_1(\Phi, \Theta) %
        &= - \frac{\partial}{\partial \phi } \left[\frac{1}{N}\sum_{n=1}^N  \sum_{h=1}^H \frac{\tau_h^{(n)}}{\lambda_h} 
		\mathcal{M}\Big(\frac{\nu_h^{(n)}}{\tau_h^{(n)}}\Big) \right] \\
    &= - \frac{1}{N}\sum_{n=1}^N  \sum_{h=1}^H \frac{\partial}{\partial \nu_h^{(n)} } \left[ \frac{\tau_h^{(n)}}{\lambda_h} \mathcal{M}\Big(\frac{\nu_h^{(n)}}{\tau_h^{(n)}}\Big)\right] \frac{\partial \nu_h^{(n)}}{\partial \phi} \\
    &\phantom{=} - \frac{1}{N}\sum_{n=1}^N  \sum_{h=1}^H \frac{\partial}{\partial \tau_h^{(n)} } \left[ \frac{\tau_h^{(n)}}{\lambda_h} 
		\mathcal{M}\Big(\frac{\nu_h^{(n)}}{\tau_h^{(n)}}\Big)\right] \frac{\partial \tau_h^{(n)}}{\partial \phi}  \\
	&= - \frac{1}{N}\sum_{n=1}^N  \sum_{h=1}^H \frac{1}{\lambda_h} 
		\erf\Big( \frac{1}{\sqrt{2}} \frac{\nu_h^{(n)}}{\tau_h^{(n)}} \Big)
		\frac{\partial \nu_h^{(n)}}{\partial \phi} %
        \\
      &\phantom{=} -  \frac{1}{N} \sum_{n=1}^N \sum_{h=1}^H \frac{1}{\lambda_{h}} \sqrt{\frac{2}{\pi}} \exp\Big( - \frac{1}{2} \frac{\nu_h^{(n)}}{\tau_h^{(n)}} \Big)
		\frac{\partial \tau_h^{(n)}}{\partial \phi_\nu}
		\intertext{where we made use of the following derivatives which invokes the fact that $\frac{\partial \mathcal{M}(a)}{\partial a} = \erf\left(\frac{a}{\sqrt{2}}\right)$,}
    \frac{\partial}{\partial \nu_h^{(n)} } \left[ \frac{\tau_h^{(n)}}{\lambda_h} \mathcal{M}\Big(\frac{\nu_h^{(n)}}{\tau_h^{(n)}}\Big)\right]
		&= \frac{1}{\lambda_h}\erf\Big( \frac{1}{\sqrt{2}} \frac{\nu_h^{(n)}}{\tau_h^{(n)}} \Big) \enspace , \\
  \frac{\partial}{\partial \tau_h^{(n)} } \left[ \frac{\tau_h^{(n)}}{\lambda_h} 
		\mathcal{M}\Big(\frac{\nu_h^{(n)}}{\tau_h^{(n)}}\Big)\right] &= 
    \left[ \mathcal{M}\Big(\frac{\nu_h^{(n)}}{\tau_h^{(n)}} \Big) + \tau_h^{(n)} \frac{\partial}{\partial \tau_h^{(n)}} \mathcal{M}\Big(\frac{\nu_h^{(n)}}{\tau_h^{(n)}}\Big) \right] \\ 
		&= \left[ \mathcal{M}\Big(\frac{\nu_h^{(n)}}{\tau_h^{(n)}}\Big) -  \frac{\nu_h^{(n)}}{\tau_h^{(n)}} \erf\Big( \frac{1}{\sqrt{2}} \frac{\nu_h^{(n)}}{\tau_h^{(n)}} \Big) \right]
		\\
		\overset{(\ref{eq:DefFuncM})}&{=} \sqrt{\frac{2}{\pi}}\exp\Big(- \frac{1}{2} \frac{\nu_h^{(n)}}{\tau_h^{(n)}} \Big)\frac{\partial \tau_h^{(n)}}{\partial \phi_\tau} \enspace. %
    \end{align*}
}}

    \underline{Variational Parameters: Mean}
 
    Let us continue with the gradient updates for the variational mean $\vnu_\Phi$, so just the mean parameters $\phi_\nu \in \Phi_\nu$ are of interest here.
    Considering $\ELBO$ first, the updates for $\vnu$ from $\ELBO_1(\Phi, \Theta)$, in the form of \cref{eq:ELBO1_closedform}, result in
    \begin{align}
    \frac{\partial}{\partial \phi_\nu } \ELBO_1(\Phi, \Theta) %
        &= - \frac{1}{N}\sum_{n=1}^N  \sum_{h=1}^H \frac{\tau_h^{(n)}}{\lambda_h} 
		\frac{\partial}{\partial \phi_\nu} \mathcal{M}\Big(\frac{\nu_h^{(n)}}{\tau_h^{(n)}}\Big)  \\
		&= - \frac{1}{N}\sum_{n=1}^N  \sum_{h=1}^H \frac{1}{\lambda_h} 
		\erf\Big( \frac{1}{\sqrt{2}} \frac{\nu_h^{(n)}}{\tau_h^{(n)}} \Big)
		\frac{\partial \nu_h^{(n)}}{\partial \phi_\nu} \label{eq:UpdateNuELBO} \enspace ,
		\intertext{where we made use of the following derivative which invokes the fact that $\frac{\partial \mathcal{M}(a)}{\partial a} = \erf\left(\frac{a}{\sqrt{2}}\right)$,}
		\frac{\partial}{\partial \phi_\nu} \mathcal{M}\Big(\frac{\nu_h^{(n)}}{\tau_h^{(n)}}\Big)
		&= \frac{1}{\tau_h^{(n)}}\erf\Big( \frac{1}{\sqrt{2}} \frac{\nu_h^{(n)}}{\tau_h^{(n)}} \Big)
		\frac{\partial \nu_h^{(n)}}{\partial \phi_\nu} \enspace . 
    \end{align}
     Now, the respective gradient updates from the prior entropy of the entropy-based objective $\HELBO$ are given as
	\begin{align}
		- \frac{\partial}{\partial \phi_\nu} \mathcal{H}[p_\Theta(\mathbf{z})] 
		&= - \frac{\partial}{\partial \phi_\nu}  \sum_{h=1}^H \log(2 e \lambda_{\mathrm{opt}, h}(\Phi)) \\
		&= - \sum_{h=1}^H \frac{1}{ \lambda_{\mathrm{opt}, h}} \frac{\partial}{\partial \phi_\nu}  \lambda_{\mathrm{opt}, h}(\Phi) \\
		&= - \frac{1}{N}\sum_{n=1}^N \sum_{h=1}^H \frac{1}{ \lambda_{\mathrm{opt}, h}(\Phi)} \erf\Big( \frac{1}{\sqrt{2}} \frac{\nu_h^{(n)}}{\tau_h^{(n)}} \Big) \frac{\partial \nu_h^{(n)}}{\partial \phi_\nu} \enspace.   \label{eq:UpdateNuEntropies}
	\end{align} 
	The line above makes use of the following derivation 
	\begin{align}
		\frac{\partial}{\partial \phi_\nu}  \lambda_{\mathrm{opt}, h}(\Phi) 
		&= \frac{\partial}{\partial \phi_\nu}  \frac{1}{N} \sum_{n=1}^N \tau_h^{(n)} \mathcal{M}\Big(\frac{\nu_h^{(n)}}{\tau_h^{(n)}}\Big) \\ 
		&= \frac{1}{N} \sum_{n=1}^N \frac{\tau_h^{(n)}}{\tau_h^{(n)}}\erf\Big( \frac{1}{\sqrt{2}} \frac{\nu_h^{(n)}}{\tau_h^{(n)}} \Big)\frac{\partial \nu_h^{(n)}}{\partial \phi_\nu} \\
		&= \frac{1}{N} \sum_{n=1}^N \erf\Big( \frac{1}{\sqrt{2}} \frac{\nu_h^{(n)}}{\tau_h^{(n)}} \Big)\frac{\partial \nu_h^{(n)}}{\partial \phi_\nu} \enspace.
	\end{align}
	
    By comparing \cref{eq:UpdateNuELBO,eq:UpdateNuEntropies} we again conclude that the gradients just differ in the scaling by $1 / \lambda_h$ vs. $1 / \lambda_{\mathrm{opt}, h}$. That is, for optimal scales $\vlambda = \vlambdaopt$ the gradients of the regularization term for the variational mean $\vnu$ coincide.
	
    \underline{Variational Parameters: Variance}
	
    A similar result holds for (the parameters of) the variational variances. %
    We consider $\phi_\tau \in \Phi_\TT$ and start with the prior entropy in $\HELBO$. The gradient w.r.t. $\phi_\tau$ reads
    \begin{align}
		- \frac{\partial}{\partial \phi_\tau} \mathcal{H}[p_\Theta(\mathbf{z})]
		&= - \frac{\partial}{\partial \phi_\tau}  \sum_{h=1}^H \log(2 e \lambda_{\mathrm{opt}, h}(\Phi)) \\
		\overset{(\ref{eq:lambda_M})}&{=} - \frac{\partial}{\partial \phi_\tau} \sum_{h=1}^H \log\Bigg(2 e  \frac{1}{N} \sum_{n=1}^N \tau_h^{(n)} \mathcal{M}\Big(\frac{\nu_h^{(n)}}{\tau_h^{(n)}}\Big)\Bigg) \\
		&= - \sum_{h=1}^H \frac{1}{\lambda_{\mathrm{opt}, h}(\Phi)} \frac{\partial}{\partial \phi_\tau} \Bigg( \frac{1}{N} \sum_{n=1}^N \tau_h^{(n)} \mathcal{M}\Big(\frac{\nu_h^{(n)}}{\tau_h^{(n)}}\Big)\Bigg) \\
		&= - \frac{1}{N} \sum_{n=1}^N \sum_{h=1}^H \frac{1}{\lambda_{\mathrm{opt}, h}(\Phi)} \sqrt{\frac{2}{\pi}} \exp\Big( - \frac{1}{2} \frac{\nu_h^{(n)}}{\tau_h^{(n)}} \Big)
		\frac{\partial \tau_h^{(n)}}{\partial \phi_\tau} \enspace ,
		\label{eq:UpdateTauEntropies}
    \end{align}
    where the last line makes use of the following derivation
	\begin{align}
    \frac{\partial}{\partial \phi_\tau} \Bigg(  \frac{1}{N} \sum_{n=1}^N \tau_h^{(n)} \mathcal{M}\Big(\frac{\nu_h^{(n)}}{\tau_h^{(n)}}\Big) \Bigg) \label{eq:dellTauLambdaOpt_Start}%
		&= \frac{1}{N} \sum_{n=1}^N \left[ \mathcal{M}\Big(\frac{\nu_h^{(n)}}{\tau_h^{(n)}} \Big) \frac{\partial \tau_h^{(n)}}{\partial \phi_\tau} + \tau_h^{(n)} \frac{\partial}{\partial \phi_\tau} \mathcal{M}\Big(\frac{\nu_h^{(n)}}{\tau_h^{(n)}}\Big) \right]\\ 
		&= \frac{1}{N} \sum_{n=1}^N \left[ \mathcal{M}\Big(\frac{\nu_h^{(n)}}{\tau_h^{(n)}}\Big) -  \frac{\nu_h^{(n)}}{\tau_h^{(n)}} \erf\Big( \frac{1}{\sqrt{2}} \frac{\nu_h^{(n)}}{\tau_h^{(n)}} \Big) \right] \frac{\partial \tau_h^{(n)}}{\partial \phi_\tau} 
		\\
		\overset{(\ref{eq:DefFuncM})}&{=} \frac{1}{N} \sum_{n=1}^N \sqrt{\frac{2}{\pi}}\exp\Big(- \frac{1}{2} \frac{\nu_h^{(n)}}{\tau_h^{(n)}} \Big)\frac{\partial \tau_h^{(n)}}{\partial \phi_\tau} \enspace. \label{eq:dellTauLambdaOpt_End}
	\end{align}
	
    Lastly, for $\ELBO$ we consider the remaining term $\ELBO_1(\Phi)$ which gradients evaluate to\footnote{Recall that for $\ELBO$, $\vlambda$ is just a learnable parameter and consequently no function of $\Phi$ (in contrast to $\HELBO$).}
	\begin{align}
		\frac{\partial}{\partial \phi_\tau} \ELBO_1(\Phi) &= -  \frac{\partial}{\partial \phi_\tau} \frac{1}{N}\sum_{n=1}^N \sum_{h=1}^H 
		\left[ 
		\log(\lambda_h)
		+ \frac{\tau_h^{(n)}}{\lambda_h} 
		\mathcal{M}\Big(\frac{\nu_h^{(n)}}{\tau_h^{(n)}}\Big)
		+c \right]
		\\
		&= - \frac{\partial}{\partial \phi_\tau} \frac{1}{N}\sum_{n=1}^N \sum_{h=1}^H 
		\left[ 
		\frac{\tau_h^{(n)}}{\lambda_h} 
		\mathcal{M}\Big(\frac{\nu_h^{(n)}}{\tau_h^{(n)}}\Big)
		\right] \\
		\intertext{and with the derivations in \cref{eq:dellTauLambdaOpt_Start} -- \cref{eq:dellTauLambdaOpt_End} we get} 
		&= - \frac{1}{N} \sum_{n=1}^N \sum_{h=1}^H \frac{1}{\lambda_{h}} \sqrt{\frac{2}{\pi}} \exp\Big( - \frac{1}{2} \frac{\nu_h^{(n)}}{\tau_h^{(n)}} \Big)
		\frac{\partial \tau_h^{(n)}}{\partial \phi_\tau}
		\label{eq:UpdateTauELBO} \enspace .
	\end{align}
     
     \definecolor{cproof}{RGB}{219, 48, 122}
	
    Again, the resulting gradients in \cref{eq:UpdateTauELBO,eq:UpdateTauEntropies} just differ in the scaling $1/\lambda_h$ vs. $1/\lambda_{\mathrm{opt}, h}$. 
    
    Note that \cref{theo:analytic-elbo} can be generalized to allow for parameter sharing, i.e., the additional assumption $\Phi_\nu \cap \Phi_\TT = \emptyset$ can be dropped. 
    However, we invoked this additional assumption as it (slightly) simplifies and shortens the equations and the overall proof. 
    With the assumption $\Phi_\nu \cap \Phi_\TT = \emptyset$, we can now summarize the term-wise gradient calculations by completing \cref{eq:dellphi_ELBO,eq:dellphi_HELBO} %
    \begin{align}
        \frac{\partial}{\partial \phi} \ELBO(\Phi, \Theta) &=
        \frac{\partial}{\partial \phi} \frac{1}{N} \sum_{n=1}^N \mathcal{H}[q_\Phi^{(n)}(\vz)] + \frac{\partial}{\partial \phi} \ELBO_1(\Phi, \Theta) + \frac{\partial}{\partial \phi} \sum_{n=1}^N \ELBO_2(\Phi, \Theta) \notag \\
        \begin{split}
            &= \frac{\partial}{\partial \phi} \frac{1}{N} \sum_{n=1}^N \mathcal{H}[q_\Phi^{(n)}(\vz)] %
        - \frac{1}{N}\sum_{n=1}^N  \sum_{h=1}^H  \textcolor{cproof}{\frac{1}{\lambda_h}} 
    		\erf\Big( \frac{1}{\sqrt{2}} \frac{\nu_h^{(n)}}{\tau_h^{(n)}} \Big)
    		\frac{\partial \nu_h^{(n)}}{\partial \phi} \\
        & \hspace{+5mm} - \frac{1}{N} \sum_{n=1}^N \sum_{h=1}^H  \textcolor{cproof}{\frac{1}{\lambda_{h}}} \sqrt{\frac{2}{\pi}} \exp\Big( - \frac{1}{2} \frac{\nu_h^{(n)}}{\tau_h^{(n)}} \Big)
    		\frac{\partial \tau_h^{(n)}}{\partial \phi} %
        - \textcolor{cproof}{\frac{1}{2\sigma^2}}  \frac{\partial}{\partial \phi} \left[\frac{1}{N} \sum_{n=1}^N \E_{q(\vz \vert \vxn)} \big\| \vxn - \tilde W \vz\big\|^2 \right] \enspace ,
        \end{split} \label{eq:dellphi_ELBO_full} \\
        \frac{\partial}{\partial \phi} \HELBO(\Phi, \Wt) &=
        \frac{\partial}{\partial \phi}\frac{1}{N} \sum_{n=1}^N \mathcal{H}[q_\Phi^{(n)}(\vz)] 
        - \frac{\partial}{\partial \phi}\mathcal{H}[p_\Theta(\vz)]
        - \frac{\partial}{\partial \phi} \mathcal{H}[p_\Theta(\vx | \vz)] \notag \\
        \begin{split}
            &= \frac{\partial}{\partial \phi} \frac{1}{N} \sum_{n=1}^N \mathcal{H}[q_\Phi^{(n)}(\vz)] %
        - \frac{1}{N}\sum_{n=1}^N \sum_{h=1}^H  \textcolor{cproof}{\frac{1}{ \lambda_{\mathrm{opt}, h}(\Phi)}} \erf\Big( \frac{1}{\sqrt{2}} \frac{\nu_h^{(n)}}{\tau_h^{(n)}} \Big) \frac{\partial \nu_h^{(n)}}{\partial \phi}  \\
        & \hspace{-8mm} - \frac{1}{N} \sum_{n=1}^N \sum_{h=1}^H  \textcolor{cproof}{\frac{1}{\lambda_{\mathrm{opt}, h}(\Phi)}} \sqrt{\frac{2}{\pi}} \exp\Big( - \frac{1}{2} \frac{\nu_h^{(n)}}{\tau_h^{(n)}} \Big)
    		\frac{\partial \tau_h^{(n)}}{\partial \phi}
        -  \textcolor{cproof}{\frac{1}{2\sigmaopt}}  \frac{\partial}{\partial \phi} \left[\frac{1}{N} \sum_{n=1}^N \E_{q(\vz \vert \vxn)} \big\| \vxn - \tilde W \vz\big\|^2 \right] \enspace.	%
        \end{split} \label{eq:dellphi_HELBO_full} %
    \end{align}

    Overall, at points in parameter space that satisfy \cref{eq:condition_stationary_points}, each pair of terms gives rise to the same gradients at stationary points as all scaling coefficients (highlighted in light red) coincide (or the respective terms already provide the very same gradient information regardless of whether \cref{eq:condition_stationary_points} is satisfied).    
    Consequently, the full gradients for all trainable parameters, i.e., the sum of all constitutive terms as given in \cref{eq:dellphi_ELBO_full,eq:dellphi_HELBO_full} for $\Phi$ and \cref{eq:gradW} for $\Wt$, are equivalent whenever \cref{eq:condition_stationary_points} holds, or simply: \cref{eq:condition_stationary_points} implies
    \begin{align*}
        \nabla_{\Phi} \ELBO(\Phi^\star,\Theta^\star) = \nabla_{\Phi} \HELBO(\Phi^\star,\Wt^\star)  \text{ and }  %
        \nabla_{\Wt} \ELBO(\Phi^\star,\Theta^\star)  = \nabla_{\Wt} \HELBO(\Phi^\star,\Wt^\star) \enspace . 
    \end{align*}
\end{proof}

We are now ready to prove \cref{theo:analytic-elbo} from the main paper. 

\begin{theorem}[Restated from main paper]
    Consider the sparse coding model formulated in \cref{EqnPSC2} with model parameters ${\Theta=(\Wt,\sigma^2,\vlambda) \in \RRRnorm^{D\times{}H} \times \RRR_+ \times \RRR_+^H}$, and variational parameters $\Phi = (\Phi_\nu, \Phi_\TT)$ that parameterize mean $ \vnu^{(n)} \in \RRR^H$ and covariance $\TT^{(n)} \in \RRR^{H\times H}$ (in amortized or non-amortized fashion) where $\Phi_\nu \cap \Phi_\TT = \emptyset$.
    
    Then, the set of stationary points of the original objective $\ELBO(\Phi,\Theta)$, given in \cref{EqnELBO}, and of the entropy-based objective $\HELBO(\Phi,\Wt)$, given in \cref{eq:analytic-elbo}, coincide.
    Furthermore, it applies at any stationary point of $\ELBO(\Phi,\Theta)$ or $\HELBO(\Phi,\Wt)$ that
	\begin{align}
		\ELBO(\Phi^\star,\Theta^\star) = \HELBO(\Phi^\star,\Wt^\star) \enspace .
	\end{align}
\end{theorem}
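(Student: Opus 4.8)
The plan is to derive \cref{theo:analytic-elbo} as an almost immediate consequence of the gradient-equality \cref{lemma:gradient_equality}, combined with \cref{theo:laplace-prior} and the uniqueness established in \cref{theo:optimal-parameters}. The conceptual hurdle is that the two objectives live on different parameter spaces: $\ELBO$ is a function of $(\Phi,\Theta)=(\Phi,\Wt,\sigma^2,\vlambda)$, whereas $\HELBO$ is a function of $(\Phi,\Wt)$ only. I would therefore first fix the identification $(\Phi,\Wt)\mapsto\big(\Phi,\Theta_{\mathrm{opt}}(\Phi,\Wt)\big)$ with $\Theta_{\mathrm{opt}}(\Phi,\Wt)=\big(\vlambdaopt(\Phi),\Wt,\sigmaopt(\Phi,\Wt)\big)$, and recall from \cref{theo:optimal-parameters} that these are exactly the \emph{unique} scales and variance for which \cref{eq:condition_stationary_points} holds; by the very definition of $\HELBO$, its value at $(\Phi,\Wt)$ equals the three-entropy sum of \cref{eq:three-entropies} evaluated at this $\Theta_{\mathrm{opt}}$.

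For the value equality \cref{eq:fixpointsLL} I would argue as follows. At any stationary point of $\ELBO$ the partial derivatives with respect to $\vlambda$ and $\sigma^2$ in particular vanish, so \cref{eq:condition_stationary_points} is satisfied and \cref{theo:laplace-prior} applies, giving $\ELBO(\Phi^\star,\Theta^\star)=\frac1N\sum_n\mathcal{H}[q_\Phi^{(n)}]-\mathcal{H}[p_{\Theta^\star}(\vz)]-\mathcal{H}[p_{\Theta^\star}(\vx\mid\vz)]$. By the uniqueness in \cref{theo:optimal-parameters}, the stationary $\vlambda^\star,\sigma^{2\star}$ must coincide with $\vlambdaopt(\Phi^\star),\sigmaopt(\Phi^\star,\Wt^\star)$, so the right-hand side is by definition exactly $\HELBO(\Phi^\star,\Wt^\star)$. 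The same identity then holds at any stationary point of $\HELBO$ once the stationary sets are shown to coincide.

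For the coincidence of stationary-point sets I would prove the two inclusions separately using \cref{lemma:gradient_equality}. If $(\Phi^\star,\Theta^\star)$ is stationary for $\ELBO$, then \cref{eq:condition_stationary_points} holds, so the Lemma yields $\nabla_\Phi\HELBO=\nabla_\Phi\ELBO=0$ and $\nabla_\Wt\HELBO=\nabla_\Wt\ELBO=0$, i.e.\ $(\Phi^\star,\Wt^\star)$ is stationary for $\HELBO$. Conversely, if $(\Phi^\star,\Wt^\star)$ is stationary for $\HELBO$, I set $\Theta^\star=\Theta_{\mathrm{opt}}(\Phi^\star,\Wt^\star)$, which satisfies \cref{eq:condition_stationary_points} by construction (hence the $\vlambda$- and $\sigma^2$-gradients of $\ELBO$ vanish there); the Lemma then transfers $\nabla_\Phi\ELBO=\nabla_\Phi\HELBO=0$ and $\nabla_\Wt\ELBO=\nabla_\Wt\HELBO=0$, so every partial derivative of $\ELBO$ vanishes and $(\Phi^\star,\Theta^\star)$ is stationary for $\ELBO$.

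The whole argument rests on \cref{lemma:gradient_equality}, which is where the genuine difficulty lies. Because $\HELBO$ is obtained by substituting the $\Phi$- and $\Wt$-dependent optimal parameters $\vlambdaopt(\Phi),\sigmaopt(\Phi,\Wt)$ back into the entropy sum, naive differentiation of $\HELBO$ with respect to $\Phi$ or $\Wt$ appears to generate additional chain-rule terms through these substitutions. The reason they do not spoil the comparison is essentially an envelope phenomenon: $\vlambdaopt$ and $\sigmaopt$ are defined precisely to annihilate $\partial_\vlambda\ELBO$ and $\partial_{\sigma^2}\ELBO$. Concretely, this shows up in the term-by-term computation as each gradient block of $\HELBO$ agreeing with the corresponding block of $\ELBO$ up to the scalar factors $\sigma^2/\sigmaopt$ and $\lambda_h/\lambda_{\mathrm{opt},h}$, which are identically one on the manifold defined by \cref{eq:condition_stationary_points}. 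Establishing those scalar factors --- the explicit matching of the reconstruction gradients and of the prior-entropy versus $\ELBO_1$ gradients, for both the mean and the (co)variance parameters and in both amortized and non-amortized parametrizations --- is the one part that requires real computation, and it is exactly what the Lemma supplies.
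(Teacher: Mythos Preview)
Your proposal is correct and follows essentially the same route as the paper: reduce \cref{theo:analytic-elbo} to \cref{lemma:gradient_equality}, use the uniqueness in \cref{theo:optimal-parameters} to identify $(\vlambda^\star,\sigma^{2\star})$ with $(\vlambdaopt,\sigmaopt)$ at any stationary point of $\ELBO$, and invoke \cref{theo:laplace-prior} for the value equality. Your envelope-theorem framing of why the chain-rule terms through $\vlambdaopt(\Phi)$ and $\sigmaopt(\Phi,\Wt)$ do not obstruct the gradient comparison is a clean way to summarize the content of the Lemma, though the paper establishes it by the direct term-by-term computation you describe rather than by citing an envelope principle.
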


\setcounter{theorem}{3}

\begin{proof}

To prove that the sets of stationary points of $\ELBO$ and $\HELBO$ are equal it suffices to show the following two statements:
\begin{itemize}
    \item[\circled{A}] $(\Phi^\star,\Theta^\star)$ is a stationary point of $\ELBO(\Phi,\Theta)$ $\Rightarrow$ $(\Phi^\star,\Wt^\star)$ is a stationary point of $\HELBO(\Phi,\Wt)$,
    \item[\circled{B}] $(\Phi^\star, \Wt^\star)$ is a stationary point of $\HELBO(\Phi,W)$ $\Rightarrow$ $(\Phi^\star, (\vlambdaopt, \Wt^\star, \sigmaopt))$ is a stationary point of $\ELBO(\Phi, \Theta)$.
\end{itemize}

We start with statement \circled{A}.
Let $(\Phi^\star,\Theta^\star)$ be an arbitrary stationary point of $\ELBO(\Phi,\Theta)$. By definition of fixed points, \cref{eq:condition_stationary_points} holds such that $\Theta^\star = (\Wt^\star, \sigmaopt,\vlambdaopt)$.\footnote{Recall that any local optima for $\vlambda$ and $\sigma^2$ are in fact the (respective) global optima as both problems are convex (see \cref{theo:optimal-parameters}, which also provides the analytic solutions).}
As $(\Phi^\star,\Theta^\star)$ is a stationary point of $\ELBO(\Phi,\Theta)$ we have
\begin{align*}
    \nabla_{\Wt} \HELBO(\Phi^\star,\Wt^\star) &= \nabla_{\Wt} \ELBO(\Phi^\star,\Theta^\star) = 0 \enspace , \\
    \nabla_\Phi \HELBO(\Phi^\star,\Wt^\star)  &= \nabla_\Phi \ELBO(\Phi^\star,\Theta^\star) =0 %
\end{align*}
as the gradients w.r.t. $\Phi$ and $\Wt$ for both objectives are equal by \cref{lemma:gradient_equality} (which accounts for the technicalities of different parameterizations, that arise from amortized vs. non-amortized approaches).
Consequently, $(\Phi^\star,\Wt^\star)$ must also a stationary point of the entropy-based objective $\HELBO(\Phi,\Wt)$.

To show the opposite direction, formulated in statement \circled{B}, we assume that $(\Phi^\star, \Wt^\star)$ is a stationary point of $\HELBO(\Phi,\Wt)$.
By design of the entropy-based objective, \cref{eq:condition_stationary_points} is satisfied as scales and variance are chosen to be optimal for $\HELBO(\Phi,\Wt)$ in each iteration.
We can therefore invoke \cref{lemma:gradient_equality} again and get
\begin{align*}
     \nabla_{\Wt} \ELBO(\Phi^\star,\Theta^\star) &= \nabla_{\Wt} \HELBO(\Phi^\star,\Wt^\star) = 0 \enspace , \\ 
      \nabla_\Phi  \ELBO(\Phi^\star,\Theta^\star) &= \nabla_\Phi \HELBO(\Phi^\star,\Wt^\star) = 0 \enspace .
\end{align*}
Therefore, $(\Phi^\star, (\Wt^\star, \sigmaopt, \vlambdaopt))$ must also be a stationary point of $\ELBO(\Phi, \Theta)$.

Note that the objective functions $\ELBO$ and $\HELBO$ are continuous and continuously differentiable functions. 
From \cref{lemma:gradient_equality} we can also conclude that the Hessians of $\ELBO$ and $\HELBO$ in $\Phi$ and $\Wt$ coincide at stationary points as they admit the very same functional dependencies in $\Phi$ and $\Wt$.
This implies the same convergence behavior in the vicinity ($\epsilon$-ball) around the fixed points such that both objectives have the same stationary points (with same signature).

Eventually, by \cref{theo:laplace-prior} also the function values coincide whenever \cref{eq:condition_stationary_points} holds, which concludes the proof.
\end{proof}

\section{NUMERICAL RESULTS -- DETAILS AND ADDITIONAL RESULTS}
\label{app:numerical_results}
The numerical experiments were run on a desktop computer with Intel i9-9900k 3.6GHz CPU, 32GB RAM, and Nvidia GeForce GTX 1070 8GB. We used CUDA numerical backend for PyTorch whenever possible. The default floating point precision was set to float32. On average, optimization of one epoch of $204\,800$ image patches of size $16\times 16$, with latent dimensionality of 100, by minibatches of 512 with EM-like updates took 156s. One epoch of optimization with stochastic updates by Adam took on average 12s.

\subsection{Approximating the error function}
\label{app:approx-error-function}
While the exact $\erf(\cdot)$ evaluation requires the summing of an infinite number of terms, e.g. of its Taylor series expansion, its approximate computation is heavily optimized in common numerical libraries. To get a closed-form objective, we experimented with a simple second-order B\"urmann approximation \citep{schopf_burmanns_2014}:
\begin{align}
 \erf(x) \approx
 \frac{2}{\sqrt{\pi}} \sqrt{1-e^{-x^2}} 
	\left( 
	\frac{\sqrt{\pi}}{2} 
	+ \frac{21}{200} e^{-kx^2} 
	- \frac{341}{8000} e^{-2kx^2} 
	\right) \enspace .
\end{align}

We did not find any significant difference in optimization results when compared to $\erf(\cdot)$ implemented in numerical libraries, but our na{\"\i}ve implementation led to 20-30\% longer run time. In all our experiments we always used the $\erf(\cdot)$ implementation provided by numerical libraries.

\exclude{
	Writing the error function in the form of a convergent, infinite power series(?) serves to highlight that
	objective ... is an analytic (but not a closed-form) solution.
	\ \\
	ANALYTIC FORMULA FOR THE ELBO!!
	\ \\
	CLOSED-FORM FORMULA FOR THE ELBO (Bürman approximation, mention others)
}

\subsection{Bars dataset}
\label{app:bars-dataset}

We generated the training data according to the model defined in \cref{EqnPSC}. That is, we sampled activation vectors $\vz^{(n)}$ from a Laplace distribution with $\lambda_h = 1$ (for all $h$), linearly combined the weighted generative fields, and added Gaussian noise with standard deviation $\sigma = 0.1$. Each ground truth generative field $W_{:,h}$ contained exactly one (horizontal or vertical) bar (value 1 for `bar', value 0 as the background). 

\begin{figure}[ht]
    \centering
	\includegraphics[width=0.5\textwidth, trim={0 0.5cm 0 0.5cm}, clip]{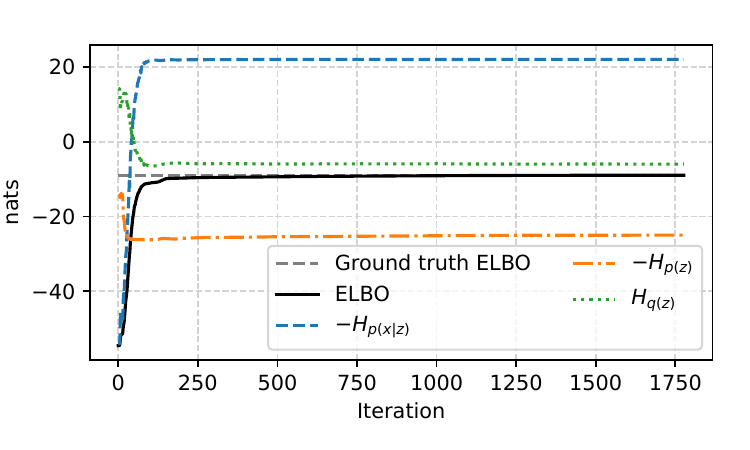}
	\caption{\textbf{Learning the artificial sparse bars dataset.} 
      While the entropy-based ELBO is monotonously increasing, different entropy terms may undergo complex dynamics during the optimization.
    }
	\label{fig:bars-dataset-elbo}
\end{figure}

For this experiment, we used full covariance Gaussian variational posterior.  We observed good convergence and complete recovery of the bars in approximately $70\%$ of runs (7 out of 10). When the model converged to a local optimum, some of the recovered generative fields usually contained two bars, and the final ELBO was slightly lower. During the optimization ELBO values quickly approach the value computed with ground truth $\Wt$, then ELBO values asymptotically converge.
\cref{fig:bars-dataset-elbo} illustrates the typical trajectories of the $\HELBO$ and different entropies during the optimization.

\subsection{Amortized learning}
\label{app:amortized-learning}
Our entropy-based objective can be combined with amortized inference and stochastic updates. We used a deep neural network that comprises two ResNet-like nonlinear mappings (parametric automorphisms) and separate linear readout maps for the mean and the diagonal covariance variational parameters of the posterior (Fig. \ref{fig:deep-encoder}), optimized by stochastic updates (Adam with $lr=10^{-3}$). We compared the convergence speed to the previously suggested (non-amortized) EM-like updates, and considered cases with and without prior entropy annealing (Fig. \ref{fig:compare-optimization-4}). The EM updates with annealing allow the ELBO to be optimized faster and reach a better optimum. We also observe a minor gap (presumably an amortization gap) due to the limited neural network capacity. All three optimization methods finally result in a set of similar generative fields (Fig. \ref{fig:compare-optimization-gabors}). 

Low-rank approximation of full covariance matrices for the variational posterior (\cref{fig:deep-encoder-low-rank}) helps to diminish the amortization gap. To construct a low-rank convariance matrix, the DNN produces a set of $r$ vectors $V \in \mathbb{R}^{H \times r}$, and a separate vecor of diagonal covariances $\sigma^2$. The covariance matrix is then computed as $\TT = VV^\mathrm{T} +  \diag(\sigma^2)$. We used $r=5$ in our experiments.

We did not observe parameters convergence in reasonable time when we trained Laplace-prior sparse coding model with unamortized Gaussian posterior and used reparameterization trick and stochastic updates with even 100 samples (no analytic solutions of the ELBO integrals).

\begin{figure}[h!]
  \centering
  \scalebox{1.0}{\begin{tikzpicture}[
        thick, 
        node distance=4mm,
        > = {Stealth[round, sep]},
        container/.style={draw, thick, rectangle, dashed, inner sep=0.3cm, rounded
            corners,fill=white,minimum height=2cm},
        mybackground/.style={execute at end picture={
            \begin{scope}[on background layer]
                \node[] at (current bounding box.north){\bottom{1cm} #1};
                \end{scope}
            }},
        base/.style={draw, rectangle, rounded corners, font=\sffamily},
        nnlayer/.style={base, minimum height=0.5cm, minimum width=2.5cm},  
        vnnlayer/.style={base, minimum height=2.5cm, minimum width=0.5cm},  
        var/.style={fill=white},
        col/.style={fill={#1!20},},
        simpleop/.style={draw, circle, fill=white, minimum height=0.5cm},  
        connection/.style={inner sep=0,outer sep=0},
    ]
    \newcommand{\map}[2]{$\mathbb{R}^{#1} \rightarrow \mathbb{R}^{#2}$}
        
    \node [var]                                  (VarX)  {$\mathbf{x}$};
    \node [vnnlayer, col=green, right=of VarX]   (Lin0)  {\rotatebox{90}{Linear \map{D}{2D}}};
    
    \node [connection, right=0.6 cm of Lin0]     (Conn1)  {};
    \node [vnnlayer, col=blue, right=of Conn1]   (ReLU11) {\rotatebox{90}{ReLU}};
    \node [vnnlayer, col=green, right=of ReLU11] (Lin11)  {\rotatebox{90}{Linear \map{2D}{3D}}};
    \node [vnnlayer, col=blue, right=of Lin11]   (ReLU12) {\rotatebox{90}{ReLU}};
    \node [vnnlayer, col=green, right=of ReLU12] (Lin12)  {\rotatebox{90}{Linear \map{3D}{2D}}};
    \node [simpleop, right=of Lin12]             (Add1)   {+};
    
    \begin{scope}[on background layer]
        \node[container, fit=(Conn1) (ReLU11) (Lin11) (ReLU12) (Lin12) (Add1)] (ResNet1) {};
    \end{scope}
    
    \path [->]  (Conn1.west)  edge (ReLU11)
                (ReLU11) edge (Lin11) 
                (Lin11)  edge (ReLU12)
                (ReLU12) edge (Lin12)
                (Lin12)  edge (Add1);
    \draw [->, rounded corners] (Conn1) |- ($(Lin11.north) + (0.0, 0.15)$) -| (Add1);
  
    \node [connection, right= 1cm of Add1]       (Conn2)  {};
    \node [vnnlayer, col=blue, right=of Conn2]   (ReLU21) {\rotatebox{90}{ReLU}};
    \node [vnnlayer, col=green, right=of ReLU21] (Lin21)  {\rotatebox{90}{Linear \map{2D}{3D}}};
    \node [vnnlayer, col=blue, right=of Lin21]   (ReLU22) {\rotatebox{90}{ReLU}};
    \node [vnnlayer, col=green, right=of ReLU22] (Lin22)  {\rotatebox{90}{Linear \map{3D}{2D}}};
    \node [simpleop, right=of Lin22]             (Add2)   {+};

    \begin{scope}[on background layer]
        \node[container, fit=(Conn2) (ReLU21) (Lin21) (ReLU22) (Lin22) (Add2)] (ResNet2) {};
    \end{scope}

    \path [->]  (Conn2.west)  edge (ReLU21)
                (ReLU21) edge (Lin21) 
                (Lin21)  edge (ReLU22)
                (ReLU22) edge (Lin22)
                (Lin22)  edge (Add2);
    \draw [->, rounded corners] (Conn2) |- ($(Lin21.north) + (0.0, 0.15)$) -| (Add2);

    \draw [->] (VarX) -- (Lin0);
    \draw [-]  (Lin0) -- (Conn1);
    \draw [-]  (Add1) -- (Conn2);
  
    \node [vnnlayer, col=green, above right=0.2cm and 0.8cm of Add2] (LinMean1)  {\rotatebox{90}{Linear \map{2D}{H}}};
    \node [var, right=of LinMean1]                                   (VarMu)  {$\mathbf{\vnu}$};

    \node [vnnlayer, col=green, below right=-0.2cm and 0.8cm of Add2] (LinL1)  {\rotatebox{90}{Linear \map{2D}{H}}};
    \node [vnnlayer, col=blue, right=of LinL1]                        (SoftMax1)  {\rotatebox{90}{SoftPlus}};
    \node [var, right=of SoftMax1]                                    (VarL)  {$\sigma^2$};

    \draw [->] (Add2) -- (LinMean1);
    \draw [->] (LinMean1) -- (VarMu);

    \draw [->] (Add2) -- (LinL1);
    \draw [->] (LinL1) -- (SoftMax1);
    \draw [->] (SoftMax1) -- (VarL);

\end{tikzpicture}}  
  \caption{\textbf{Deep encoder architecture}. First, the input data $\vx$ is linearly projected to a higher dimensional space, and then two ResNet-like transformations are applied. The variational parameters $\vnu$ and $\sigma^2$ are obtained by separate linear mappings. Posterior diagonal covariance is then constructed as $\TT = \diag(\sigma^2)$.}
  \label{fig:deep-encoder}
\end{figure}

\begin{figure}[h!]
  \centering
  \scalebox{1.0}{\begin{tikzpicture}[
  thick, 
  node distance=4mm,
  > = {Stealth[round, sep]},
  container/.style={draw, thick, rectangle, dashed, inner sep=0.3cm, rounded
      corners,fill=white,minimum height=2cm},
  mybackground/.style={execute at end picture={
      \begin{scope}[on background layer]
          \node[] at (current bounding box.north){\bottom{1cm} #1};
          \end{scope}
      }},
  base/.style={draw, rectangle, rounded corners, font=\sffamily},
  nnlayer/.style={base, minimum height=0.5cm, minimum width=2.5cm},  
  vnnlayer/.style={base, minimum height=2.5cm, minimum width=0.5cm},  
  var/.style={fill=white},
  col/.style={fill={#1!20},},
  simpleop/.style={draw, circle, fill=white, minimum height=0.5cm},  
  connection/.style={inner sep=0,outer sep=0},
]
\newcommand{\map}[2]{$\mathbb{R}^{#1} \rightarrow \mathbb{R}^{#2}$}
  
\node [var]                                  (VarX)  {$\mathbf{x}$};
\node [vnnlayer, col=green, right=of VarX]   (Lin0)  {\rotatebox{90}{Linear \map{D}{2D}}};

\node [connection, right=0.6 cm of Lin0]     (Conn1)  {};
\node [vnnlayer, col=blue, right=of Conn1]   (ReLU11) {\rotatebox{90}{ReLU}};
\node [vnnlayer, col=green, right=of ReLU11] (Lin11)  {\rotatebox{90}{Linear \map{2D}{3D}}};
\node [vnnlayer, col=blue, right=of Lin11]   (ReLU12) {\rotatebox{90}{ReLU}};
\node [vnnlayer, col=green, right=of ReLU12] (Lin12)  {\rotatebox{90}{Linear \map{3D}{2D}}};
\node [simpleop, right=of Lin12]             (Add1)   {+};

\begin{scope}[on background layer]
  \node[container, fit=(Conn1) (ReLU11) (Lin11) (ReLU12) (Lin12) (Add1)] (ResNet1) {};
\end{scope}

\path [->]  (Conn1.west)  edge (ReLU11)
          (ReLU11) edge (Lin11) 
          (Lin11)  edge (ReLU12)
          (ReLU12) edge (Lin12)
          (Lin12)  edge (Add1);
\draw [->, rounded corners] (Conn1) |- ($(Lin11.north) + (0.0, 0.15)$) -| (Add1);

\node [connection, right=1cm of Add1]       (Conn2)  {};
\node [vnnlayer, col=blue, right=of Conn2]   (ReLU21) {\rotatebox{90}{ReLU}};
\node [vnnlayer, col=green, right=of ReLU21] (Lin21)  {\rotatebox{90}{Linear \map{2D}{3D}}};
\node [vnnlayer, col=blue, right=of Lin21]   (ReLU22) {\rotatebox{90}{ReLU}};
\node [vnnlayer, col=green, right=of ReLU22] (Lin22)  {\rotatebox{90}{Linear \map{3D}{2D}}};
\node [simpleop, right=of Lin22]             (Add2)   {+};

\begin{scope}[on background layer]
  \node[container, fit=(Conn2) (ReLU21) (Lin21) (ReLU22) (Lin22) (Add2)] (ResNet2) {};
\end{scope}

\path [->]  (Conn2.west)  edge (ReLU21)
          (ReLU21) edge (Lin21) 
          (Lin21)  edge (ReLU22)
          (ReLU22) edge (Lin22)
          (Lin22)  edge (Add2);
\draw [->, rounded corners] (Conn2) |- ($(Lin21.north) + (0.0, 0.15)$) -| (Add2);

\draw [->] (VarX) -- (Lin0);
\draw [-]  (Lin0) -- (Conn1);
\draw [-]  (Add1) -- (Conn2);

\node [connection, below=4cm of Conn2]       (Conn3)  {};
\node [vnnlayer, col=blue, right=of Conn3]   (ReLU31) {\rotatebox{90}{ReLU}};
\node [vnnlayer, col=green, right=of ReLU31] (Lin31)  {\rotatebox{90}{Linear \map{2D}{3D}}};
\node [vnnlayer, col=blue, right=of Lin31]   (ReLU32) {\rotatebox{90}{ReLU}};
\node [vnnlayer, col=green, right=of ReLU32] (Lin32)  {\rotatebox{90}{Linear \map{3D}{2D}}};
\node [simpleop, right=of Lin32]             (Add3)   {+};

\begin{scope}[on background layer]
  \node[container, fit=(Conn3) (ReLU31) (Lin31) (ReLU32) (Lin32) (Add3)] (ResNet3) {};
\end{scope}

\path [->]  (Conn3.west)  edge (ReLU31)
          (ReLU31) edge (Lin31) 
          (Lin31)  edge (ReLU32)
          (ReLU32) edge (Lin32)
          (Lin32)  edge (Add3);
\draw [->, rounded corners] (Conn3) |- ($(Lin31.north) + (0.0, 0.15)$) -| (Add3);

\draw [-]  (Add1) -- (Conn3);

\node [connection, above=4cm of Conn2]       (Conn4)  {};
\node [vnnlayer, col=blue, right=of Conn4]   (ReLU41) {\rotatebox{90}{ReLU}};
\node [vnnlayer, col=green, right=of ReLU41] (Lin41)  {\rotatebox{90}{Linear \map{2D}{3D}}};
\node [vnnlayer, col=blue, right=of Lin41]   (ReLU42) {\rotatebox{90}{ReLU}};
\node [vnnlayer, col=green, right=of ReLU42] (Lin42)  {\rotatebox{90}{Linear \map{3D}{2D}}};
\node [simpleop, right=of Lin42]             (Add4)   {+};

\begin{scope}[on background layer]
  \node[container, fit=(Conn4) (ReLU41) (Lin41) (ReLU42) (Lin42) (Add4)] (ResNet4) {};
\end{scope}

\path [->]  (Conn4.west)  edge (ReLU41)
          (ReLU41) edge (Lin41) 
          (Lin41)  edge (ReLU42)
          (ReLU42) edge (Lin42)
          (Lin42)  edge (Add4);
\draw [->, rounded corners] (Conn4) |- ($(Lin41.north) + (0.0, 0.15)$) -| (Add4);

\draw [-]  (Add1) -- (Conn4);

\node [vnnlayer, col=green, right=1cm of Add4] (LinMean1)  {\rotatebox{90}{Linear \map{2D}{H}}};
\node [var, right=of LinMean1]                                   (VarMu)  {$\mathbf{\vnu}$};
\draw [->] (Add4) -- (LinMean1);
\draw [->] (LinMean1) -- (VarMu);

\node [vnnlayer, col=green, right=1cm of Add2] (LinL1)  {\rotatebox{90}{Linear \map{2D}{H}}};
\node [vnnlayer, col=blue, right=of LinL1]                        (SoftMax1)  {\rotatebox{90}{SoftPlus}};
\node [var, right=of SoftMax1]                                    (VarL)  {$\sigma^2$};
\draw [->] (Add2) -- (LinL1);
\draw [->] (LinL1) -- (SoftMax1);
\draw [->] (SoftMax1) -- (VarL);

\node [vnnlayer, col=green, right=1cm of Add3] (LinLoRank1)  {\rotatebox{90}{Linear \map{2D}{H \times r}}};
\node [var, right=of LinLoRank1]                                   (VarV)  {$V$};
\draw [->] (Add3) -- (LinLoRank1);
\draw [->] (LinLoRank1) -- (VarV);

\end{tikzpicture}}  
  \caption{\textbf{Encoder architecture for variational posterior with a low-rank approximation of full covariance}. The covariance matrix is constructed as $\TT = VV^\mathrm{T} +  \diag(\sigma^2)$.
  }
  \label{fig:deep-encoder-low-rank}
\end{figure}

\begin{figure}[ht!]
  \centering
  \begin{minipage}[t]{0.3\textwidth}
    \centering
    \includegraphics[width=\linewidth, trim={2cm 2cm 2cm 2cm}, clip]{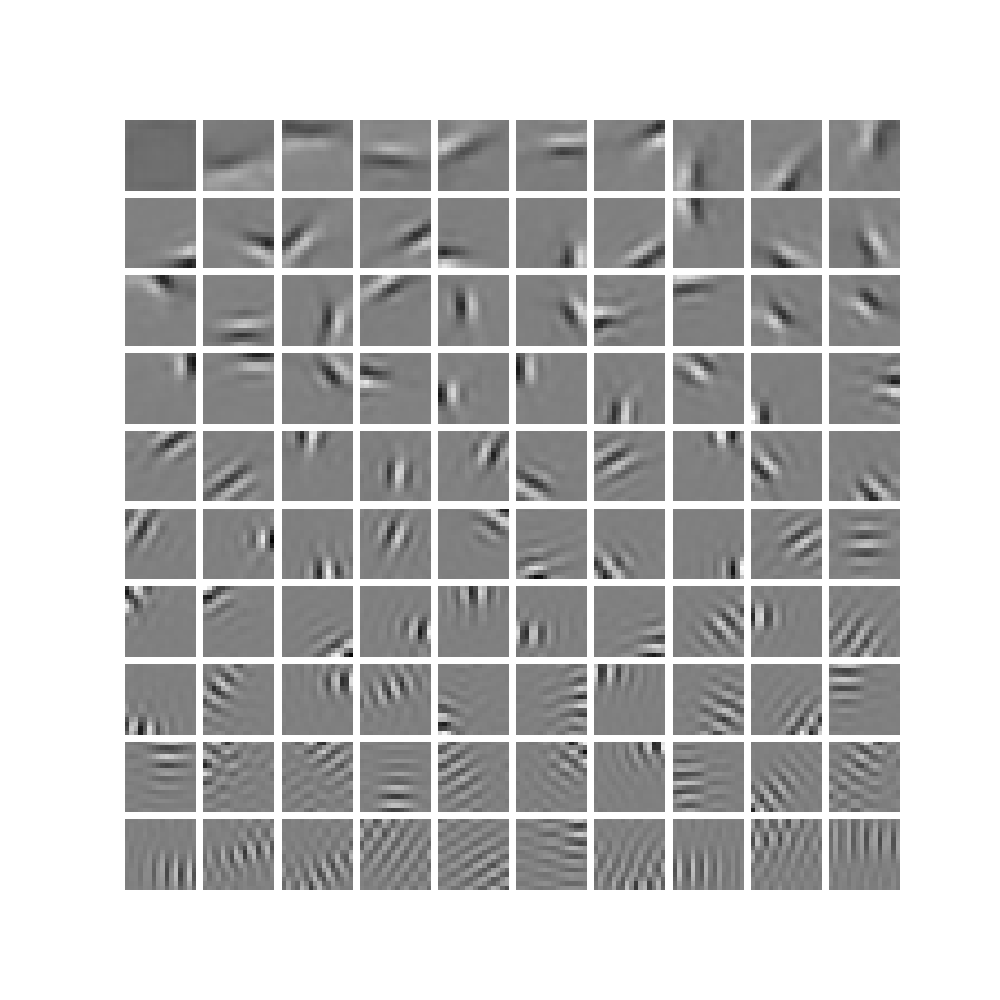}
	\subcaption{EM, annealing}
  \end{minipage}
  \hfill
  \begin{minipage}[t]{0.3\textwidth}
    \centering
    \includegraphics[width=\linewidth, trim={2cm 2cm 2cm 2cm}, clip]{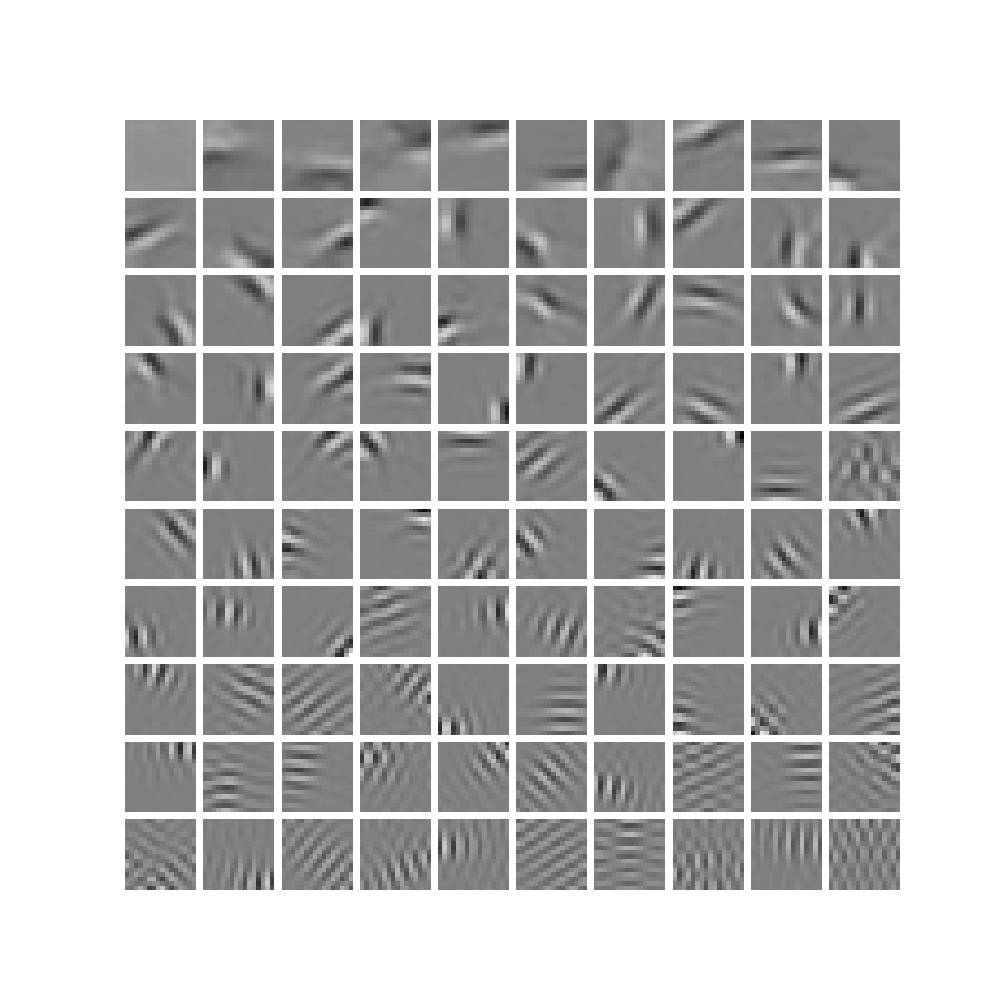}
	\subcaption{EM, no annealing}
  \end{minipage}
  \hfill
  \begin{minipage}[t]{0.3\textwidth}
    \centering
    \includegraphics[width=\linewidth, trim={2cm 2cm 2cm 2cm}, clip]{images/vanhateren/olshausen-100/compare-optimization/Ws-200-amortized.png}
	\subcaption{Adam, amortized}
  \end{minipage}
  \caption{\textbf{Learned bases with different optimization methods} where the generative fields were obtained after 200 epochs of optimization. All methods result in practically the same set of filters, but the prior entropy annealing facilitates fast convergence.}
  \label{fig:compare-optimization-gabors}
\end{figure}

\subsection{Comparing annealing schemes}
\label{app:compare-annealing}
We used a basic linear annealing scheme for prior entropy annealing: $\gamma_i = \max(1.0, 2*(5-i))$ for epoch $i$. For the likelihood entropy annealing, we set $\delta_i = \min(1.0, 1/(7-i))$.
While prior (\cref{fig:prior-annealing-all}) and likelihood (\cref{fig:beta-annealing-all}) entropy annealing result in similar generative fields after convergence, the trajectories of the optimization of generative fields and latent codes differ. With prior entropy annealing, all the latent dimensions are used for the encoding from the very beginning of the optimization. In the case of the likelihood annealing, the latent dimensions start contributing gradually to the reconstruction.

\cref{tab:annealing} provides numerical details and gives some insights into how the model parameters behave during the above-mentioned annealing. 
The table shows ELBOs, Gini coefficients, and contributions of different entropies to the entropy-based ELBO. To compute the ELBO, after every epoch, we evaluated the non-annealed ELBO for the learned parameters and the full dataset. Thus, the highest (and also the only proper) ELBO can be obtained only when a non-annealed objective is used for the optimization. We selected some of the annealing epochs, for which the contribution of the annealing coefficient causes a quantitatively similar balance of the contributing entropies to the annealed ELBO, that is, e.g., for the epochs when $\gamma=2$ and $\delta=0.5$ the corresponding contributing entropies are close. Despite the similarity in the values of the entropies, the learned generative fields are qualitatively very different (\cref{fig:prior-annealing-all} and \cref{fig:beta-annealing-all}). Next, we provide an explanation of what causes such a qualitative difference.

Notice that the Gini coefficient of the latent codes is high during the annealing, which indicates high sparsity of the posterior. Here we have to remember that the likelihood annealing trades off the reconstruction quality to Kullback-Leibler divergence between the prior and the variational posterior. With small $\delta$ we can largely ignore the reconstruction term and focus only on the Kullback-Leibler divergence. Our reparameterized model allows two ways to minimize the Kullback-Leibler divergence term: by adjusting the variational posterior parameters, and by changing the prior scales. 
We observe both phenomena in the case of the likelihood entropy annealing, which leads to noisy and non-localized generative fields and posterior collapse of some of the latent dimensions. 
That is, some of the latent dimensions do not participate in the encoding, their corresponding scale parameters $\lambda_h$ shrink to very small values, the corresponding contribution to the Kullback-Leibler divergence may become arbitrarily close to 0, and the corresponding generative fields do not contribute to the data reconstruction.
\cref{fig:beta-annealing-all} illustrates such noisy generative fields, which do not contribute to the reconstruction. Noisy and non-localized generative fields entirely disappear as the annealing ends.

\begin{figure}[!ht]
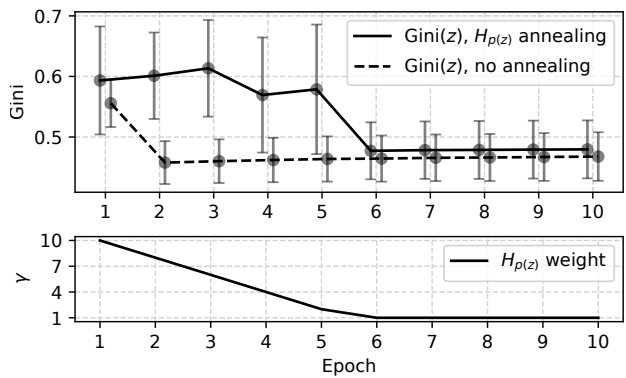

		\centering
		\includegraphics[width=0.5\linewidth, trim={0 0.6cm 0 0cm}, clip]{images/vanhateren/olshausen-100/mini-vanhateren-gini-notitle.pdf}
		\includegraphics[width=0.5\linewidth]{images/vanhateren/olshausen-100/annealing-notitle.pdf}
	\caption{\textbf{Prior entropy annealing on natural image patches dataset}.
            Gini coefficients (mean $\pm$SD bars) of the latent codes (\cref{fig:image-patches-dataset} for example generative fields) stay marginally higher if the prior entropy annealing is used even after the annealing ends after epoch 5. The bottom plot shows the annealing schedule.
            }
	\label{fig:image-patches-ELBO}
\end{figure}

The qualitative difference of prior entropy annealing can be seen by comparing the generative fields we obtain during the optimization (\cref{fig:prior-annealing-all}). Even after one epoch with high weight on the prior entropy, the generative fields already resemble localized Gabor filters. As the annealing decays, more generative fields that represent high-frequency Gabors emerge. \cref{fig:image-patches-ELBO} shows the linear annealing schedule and how the Gini coefficient changes during the prior entropy annealing.

\begin{figure}[ht!]
  \centering
  \begin{minipage}[t]{0.3\textwidth}
    \centering
    \includegraphics[width=\linewidth, trim={2cm 2cm 2cm 2cm}, clip]{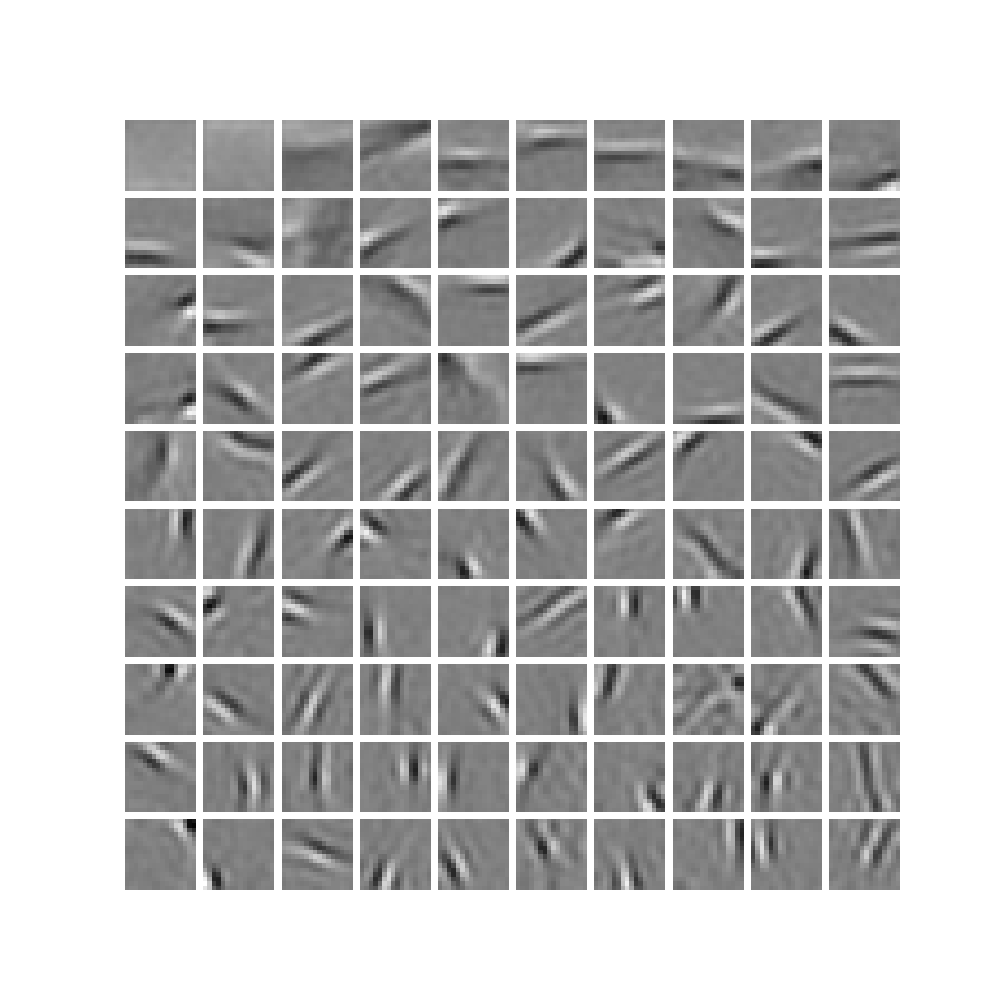}
	\subcaption{Epoch 1}
  \end{minipage}
  \hfill
  \begin{minipage}[t]{0.3\textwidth}
    \centering
    \includegraphics[width=\linewidth, trim={2cm 2cm 2cm 2cm}, clip]{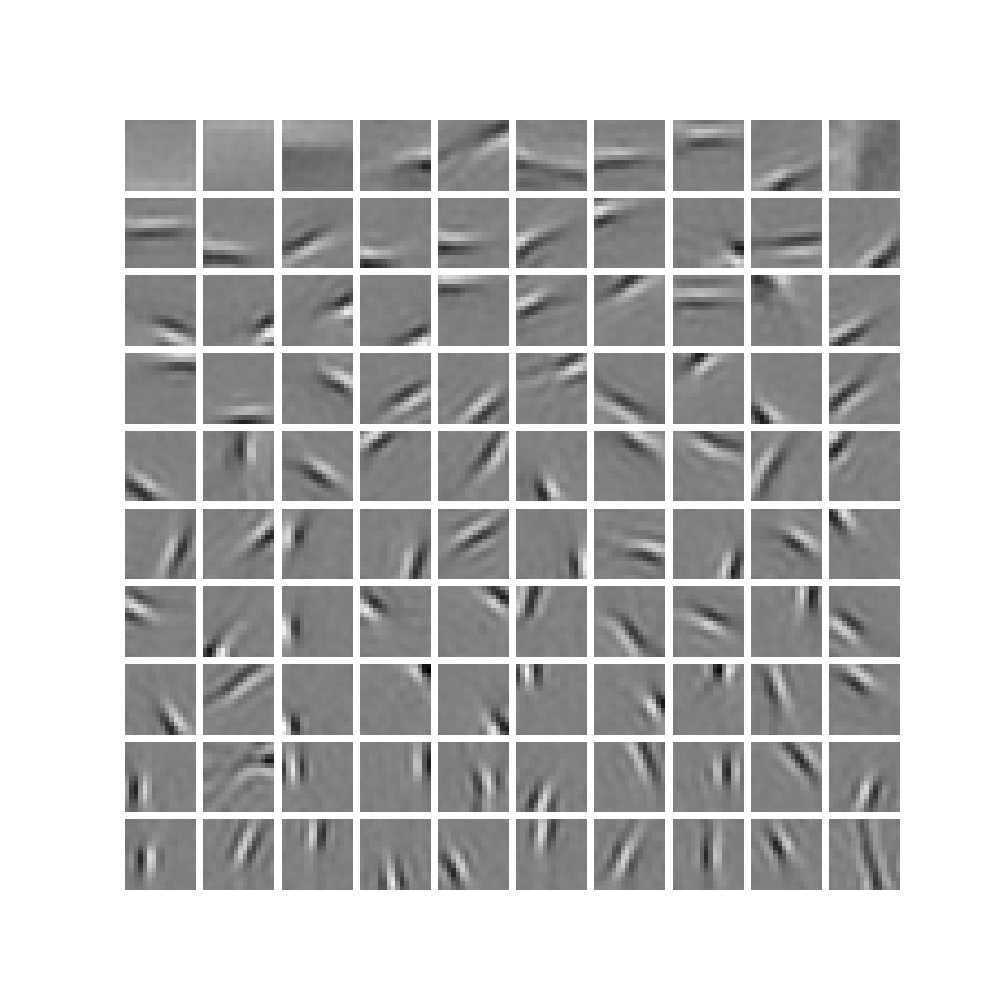}
	\subcaption{Epoch 2}
  \end{minipage}
  \hfill
  \begin{minipage}[t]{0.3\textwidth}
    \centering
    \includegraphics[width=\linewidth, trim={2cm 2cm 2cm 2cm}, clip]{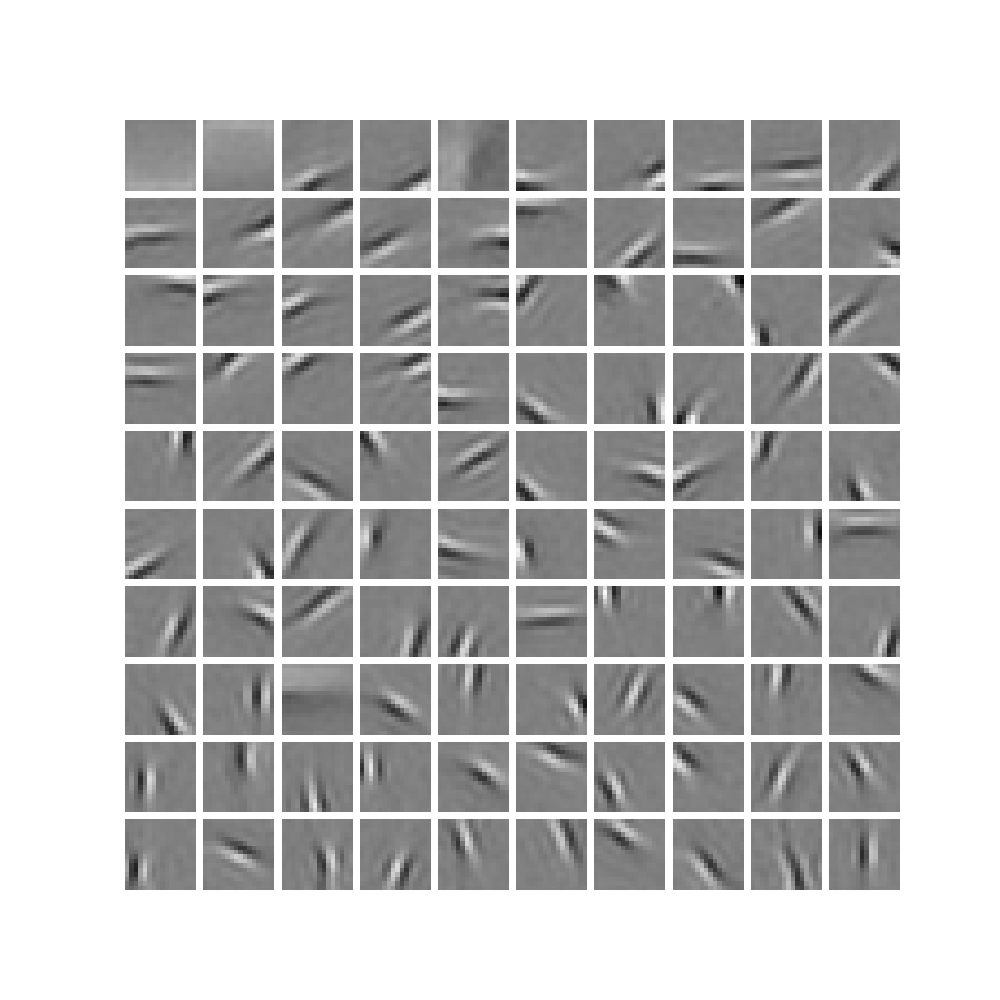}
	\subcaption{Epoch 3}
  \end{minipage} 
  \\
  \vspace{0.5cm}
  \begin{minipage}[t]{0.3\textwidth}
    \centering
    \includegraphics[width=\linewidth, trim={2cm 2cm 2cm 2cm}, clip]{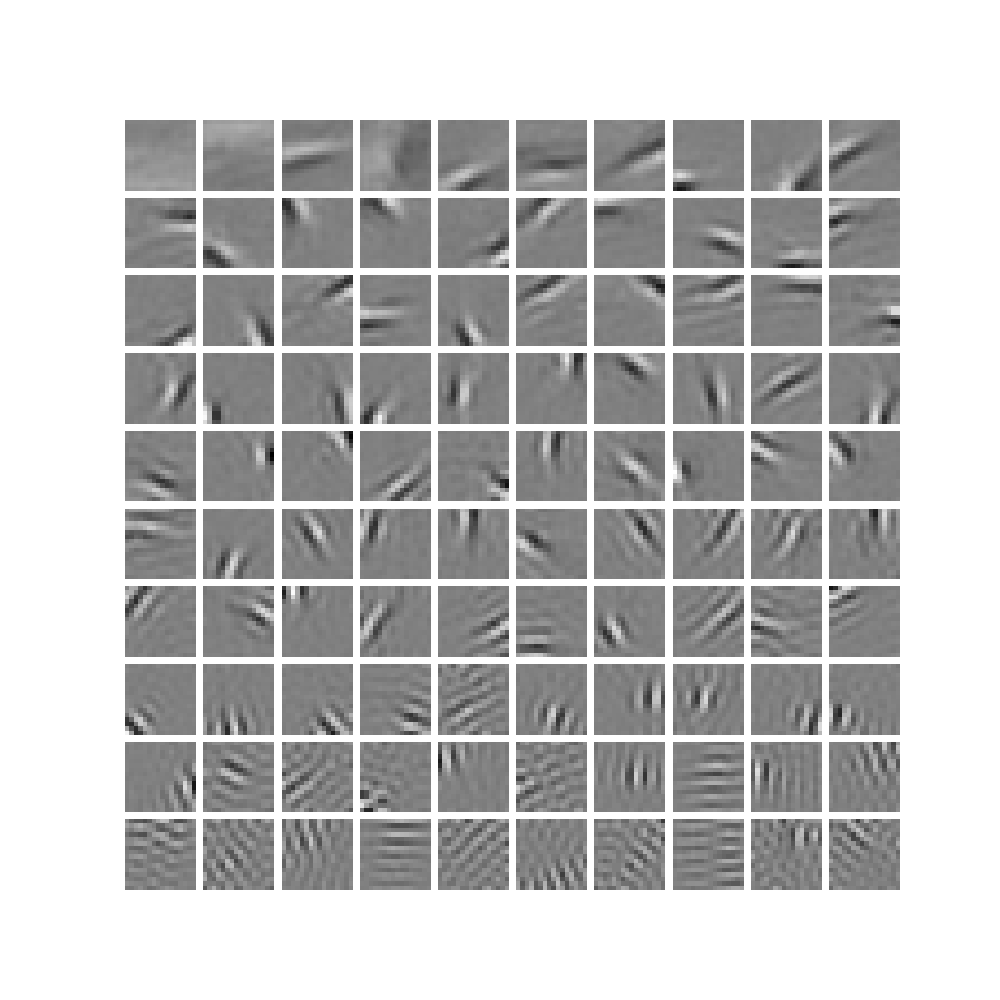}
	\subcaption{Epoch 4}
  \end{minipage}
  \hfill
  \begin{minipage}[t]{0.3\textwidth}
    \centering
    \includegraphics[width=\linewidth, trim={2cm 2cm 2cm 2cm}, clip]{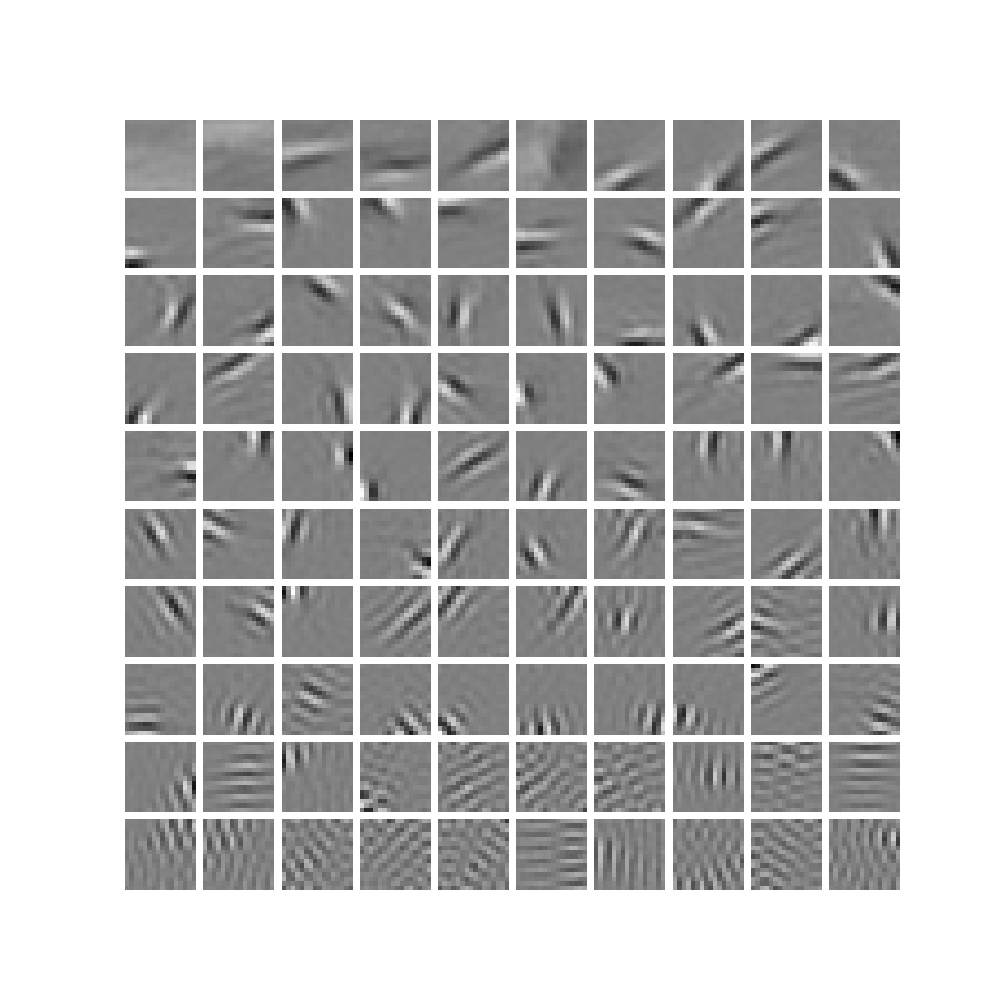}
	\subcaption{Epoch 5}
  \end{minipage}
  \hfill
  \begin{minipage}[t]{0.3\textwidth}
    \centering
    \includegraphics[width=\linewidth, trim={2cm 2cm 2cm 2cm}, clip]{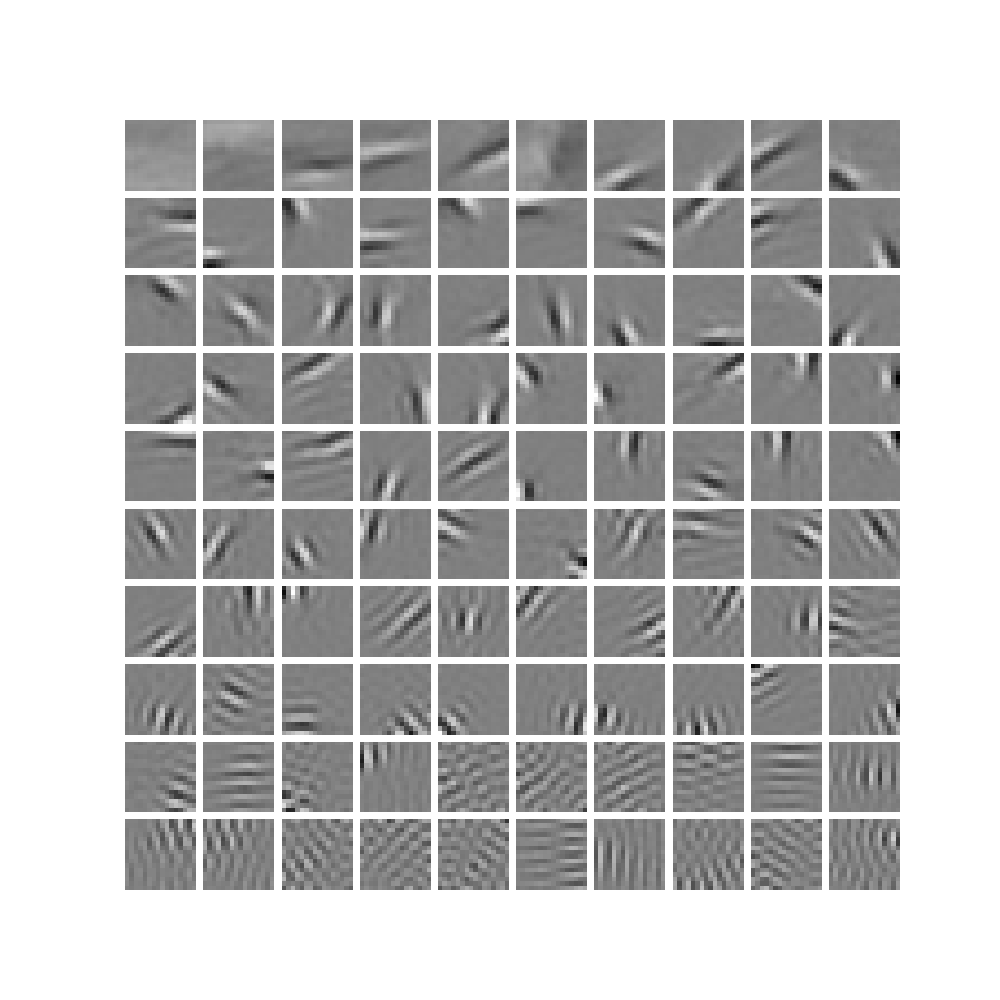}
	\subcaption{Epoch 6}
  \end{minipage}
  \\
  \vspace{0.5cm}
  \begin{minipage}[t]{0.3\textwidth}
    \centering
    \includegraphics[width=\linewidth, trim={2cm 2cm 2cm 2cm}, clip]{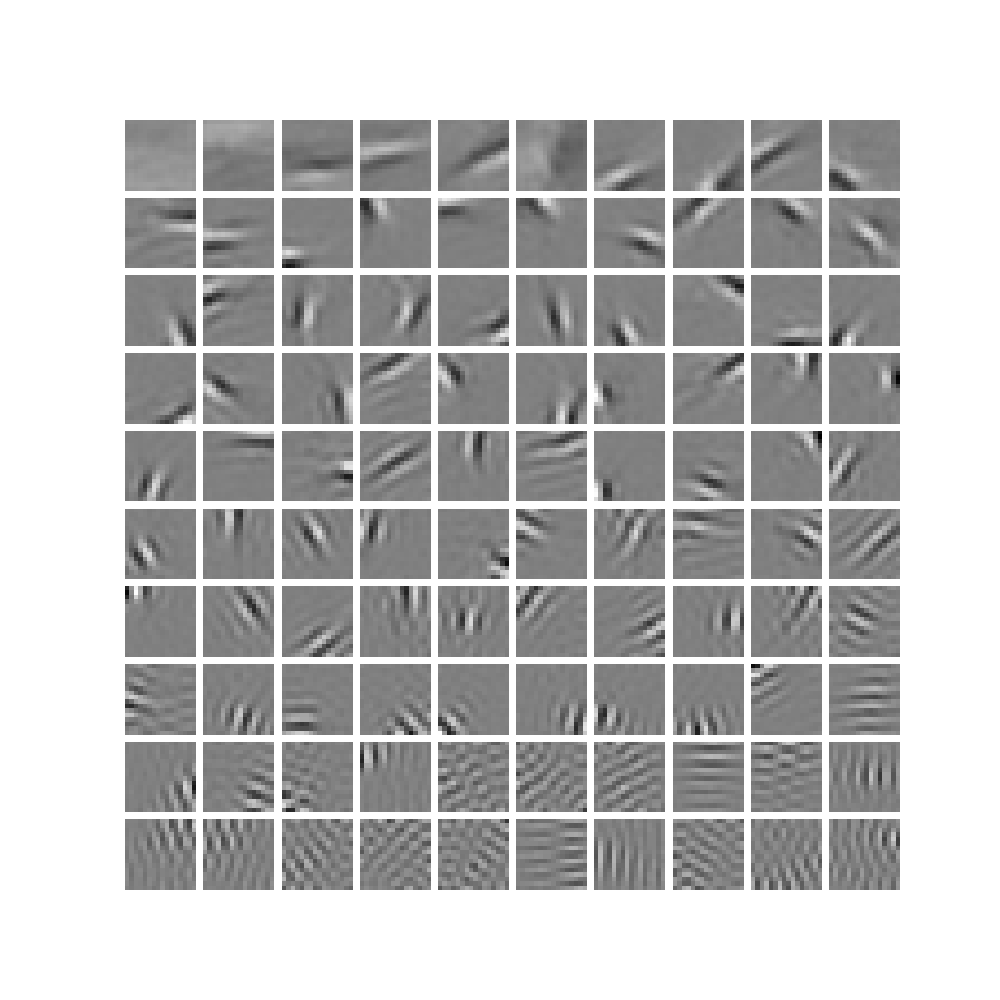}
	\subcaption{Epoch 7}
  \end{minipage}
  \hfill
  \begin{minipage}[t]{0.3\textwidth}
    \centering
    \includegraphics[width=\linewidth, trim={2cm 2cm 2cm 2cm}, clip]{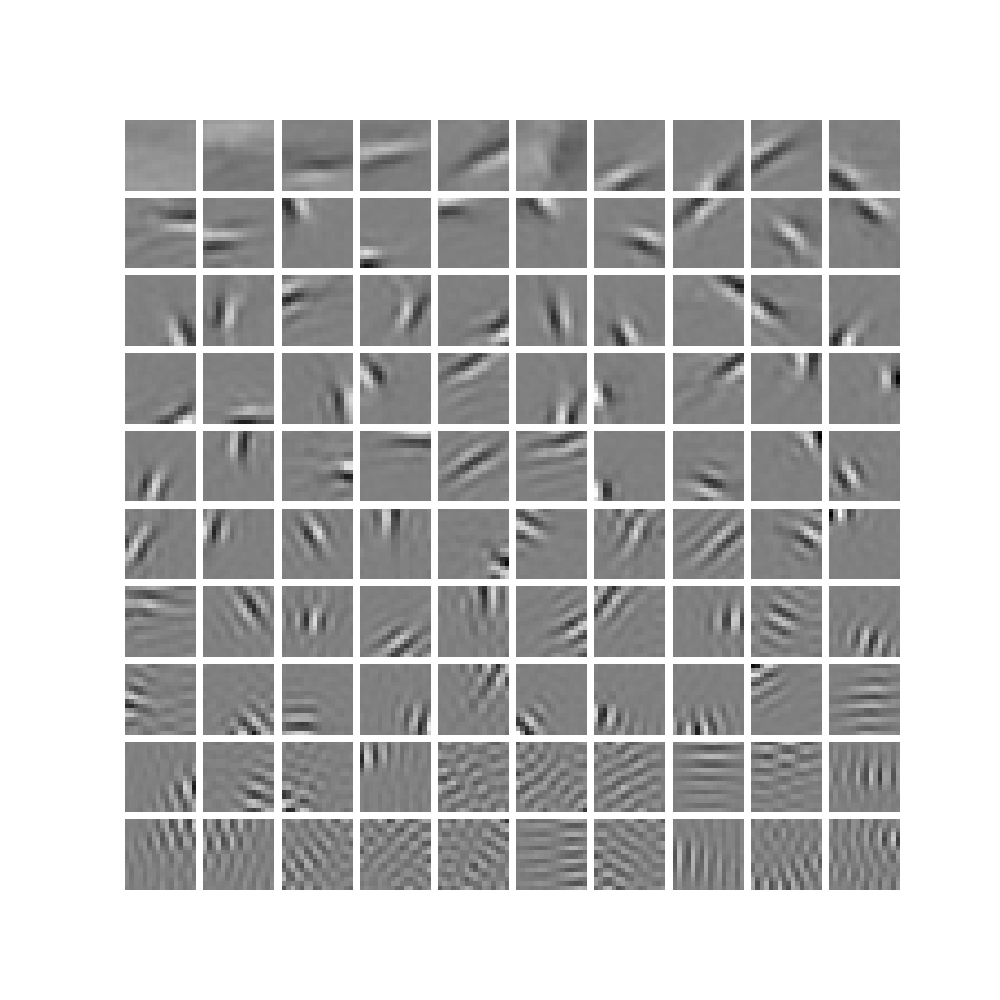}
	\subcaption{Epoch 8}
  \end{minipage}
  \hfill
  \begin{minipage}[t]{0.3\textwidth}
    \centering
    \includegraphics[width=\linewidth, trim={2cm 2cm 2cm 2cm}, clip]{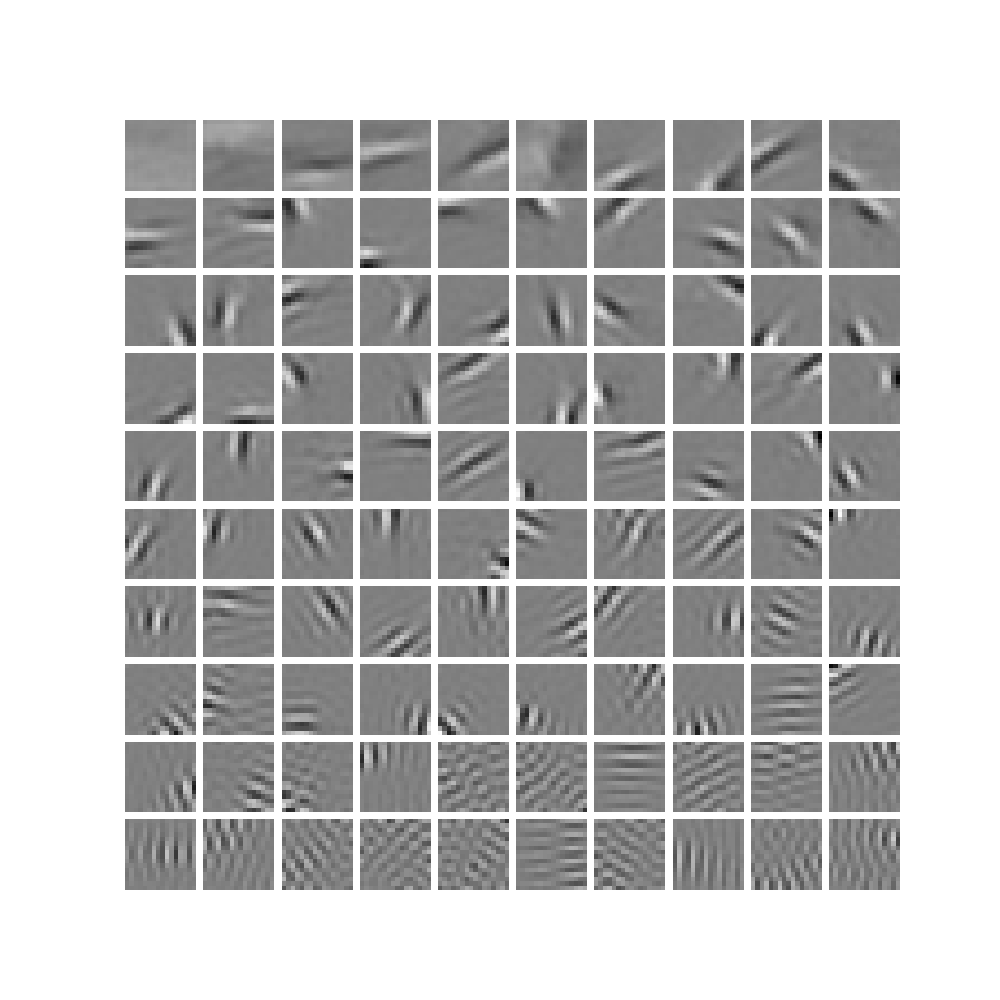}
	\subcaption{Epoch 9}
  \end{minipage}
  \caption{\textbf{Learned generative fields during optimization with prior entropy annealing}. The annealing stops after epoch 5.}
  \label{fig:prior-annealing-all}
\end{figure}

\begin{figure}[ht!]
  \centering
  \begin{minipage}[t]{0.3\textwidth}
    \centering
    \includegraphics[width=\linewidth, trim={2cm 2cm 2cm 2cm}, clip]{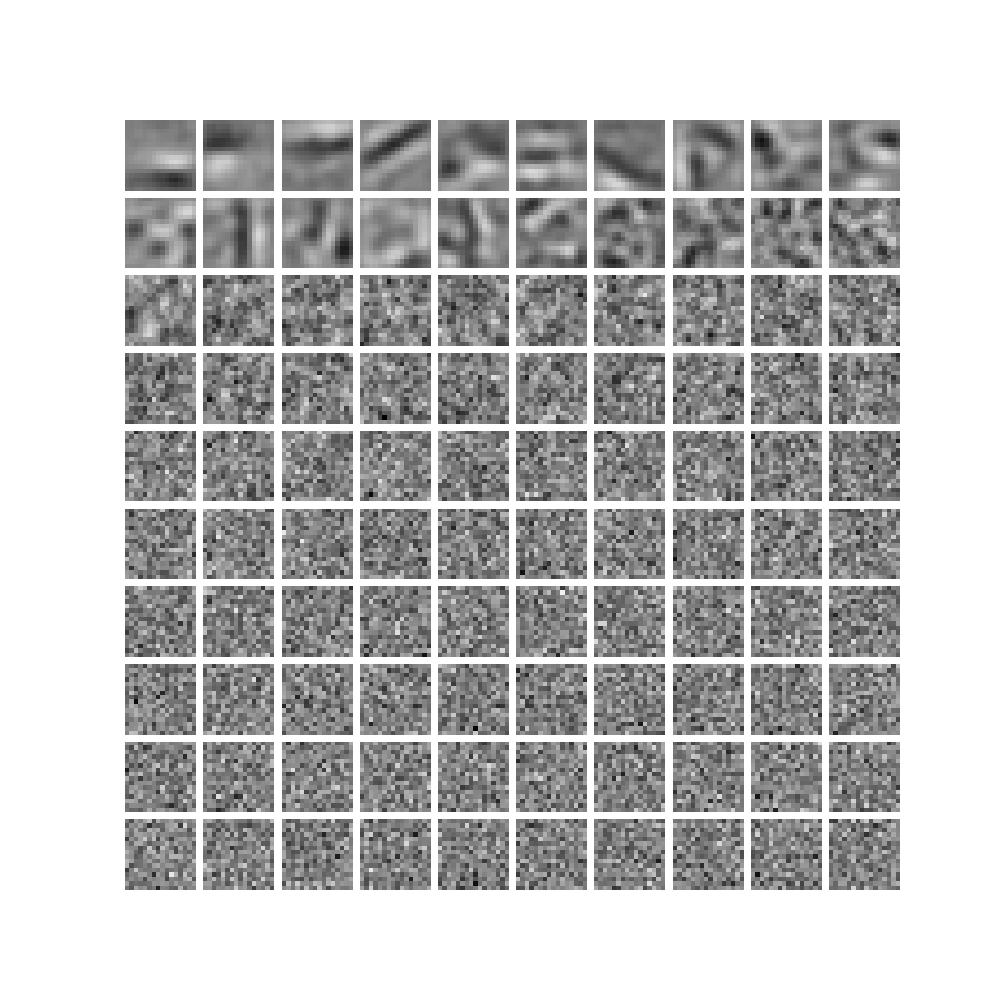}
	\subcaption{Epoch 1}
  \end{minipage}
  \hfill
  \begin{minipage}[t]{0.3\textwidth}
    \centering
    \includegraphics[width=\linewidth, trim={2cm 2cm 2cm 2cm}, clip]{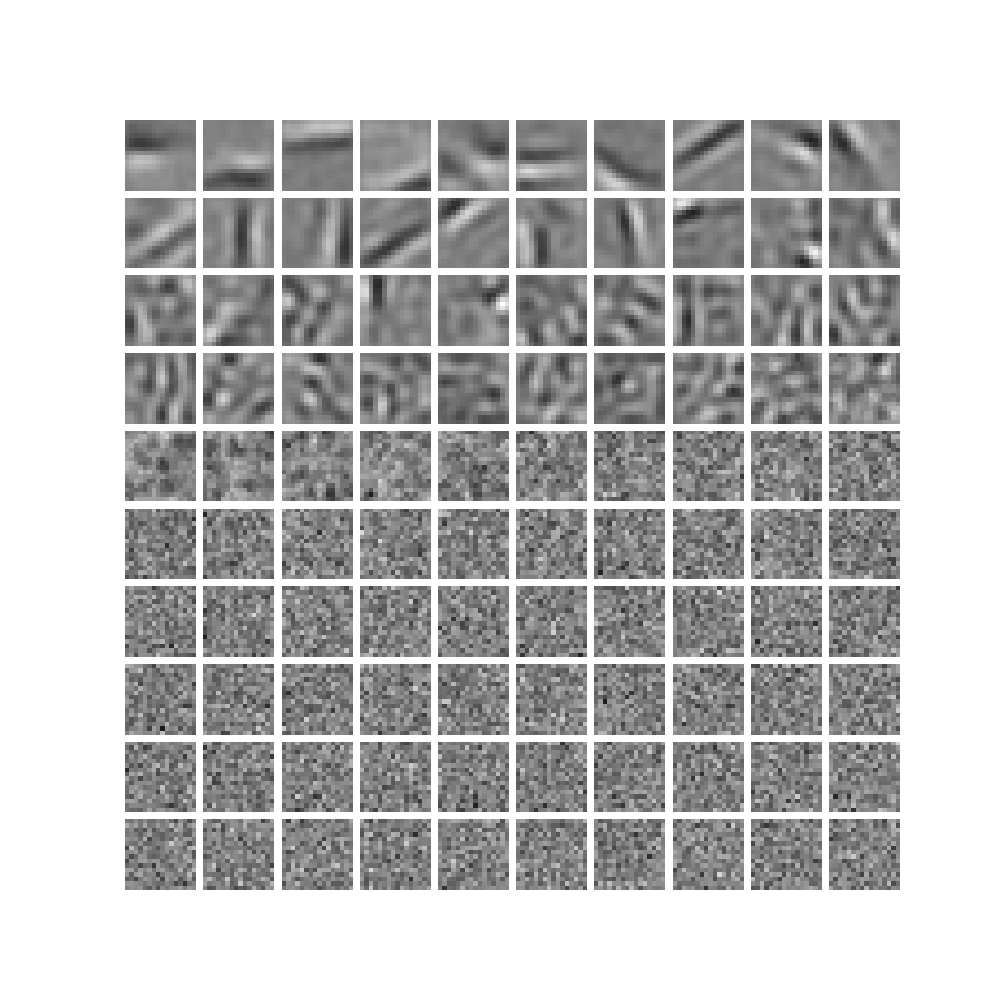}
	\subcaption{Epoch 2}
  \end{minipage}
  \hfill
  \begin{minipage}[t]{0.3\textwidth}
    \centering
    \includegraphics[width=\linewidth, trim={2cm 2cm 2cm 2cm}, clip]{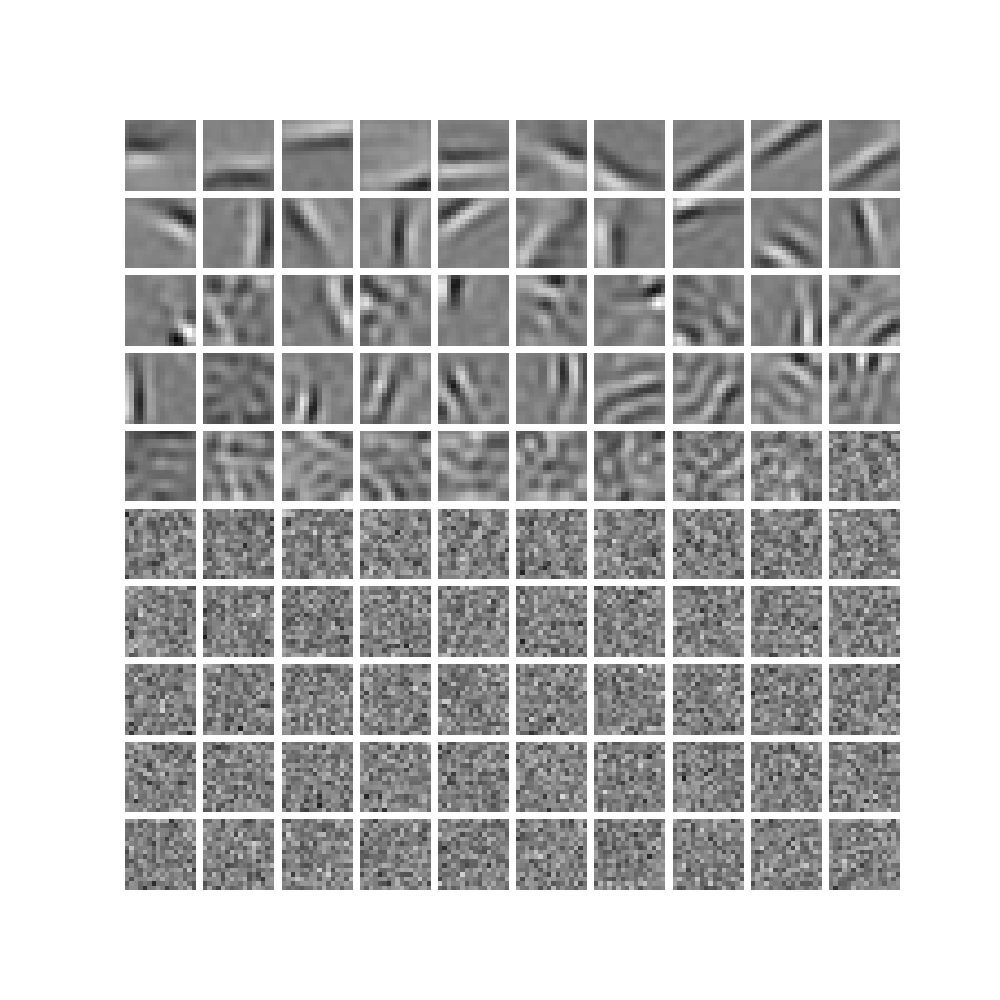}
	\subcaption{Epoch 3}
  \end{minipage} 
  \\
  \vspace{0.5cm}
  \begin{minipage}[t]{0.3\textwidth}
    \centering
    \includegraphics[width=\linewidth, trim={2cm 2cm 2cm 2cm}, clip]{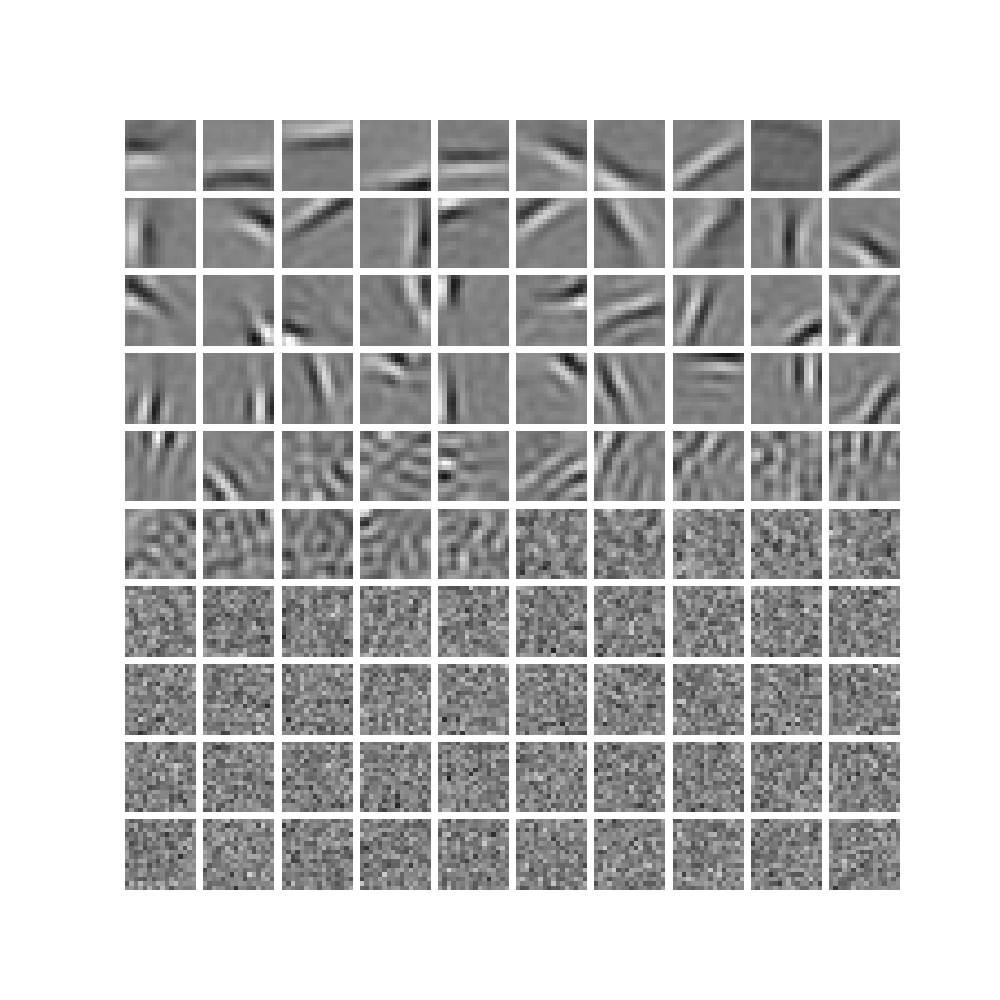}
	\subcaption{Epoch 4}
  \end{minipage}
  \hfill
  \begin{minipage}[t]{0.3\textwidth}
    \centering
    \includegraphics[width=\linewidth, trim={2cm 2cm 2cm 2cm}, clip]{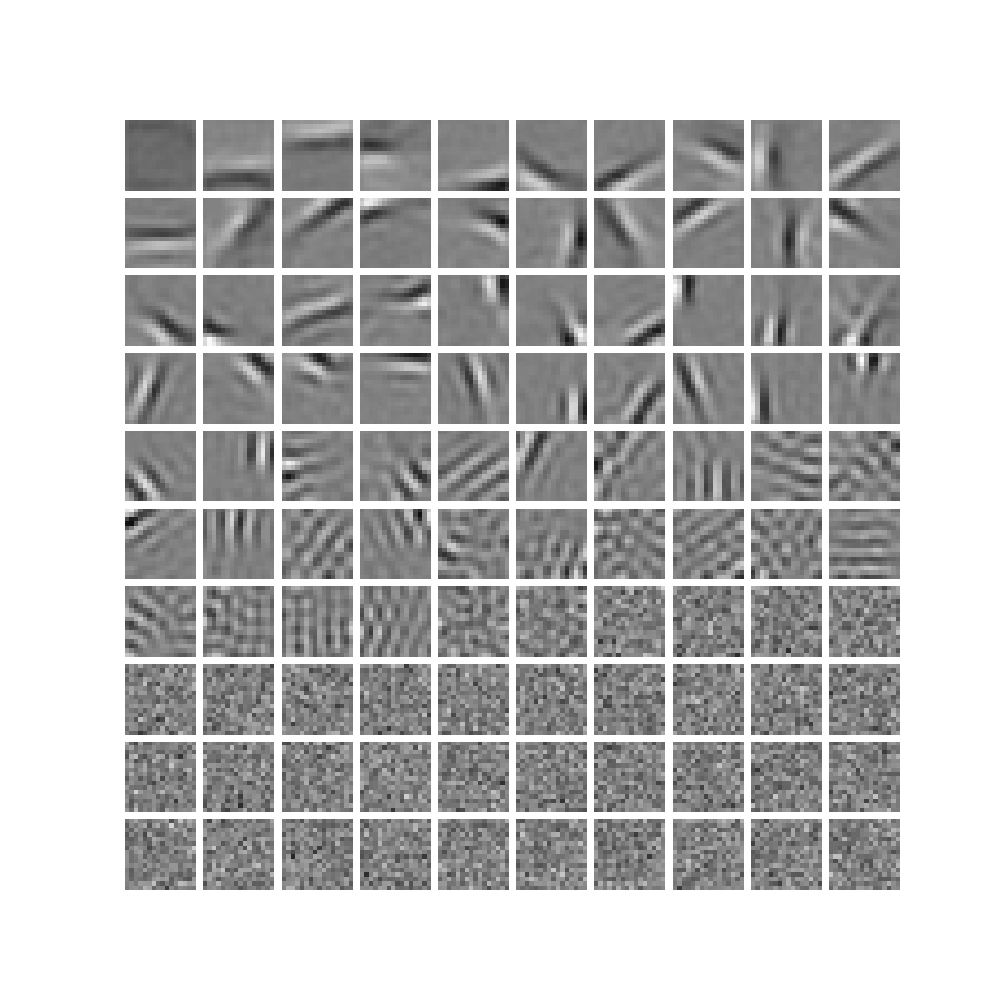}
	\subcaption{Epoch 5}
  \end{minipage}
  \hfill
  \begin{minipage}[t]{0.3\textwidth}
    \centering
    \includegraphics[width=\linewidth, trim={2cm 2cm 2cm 2cm}, clip]{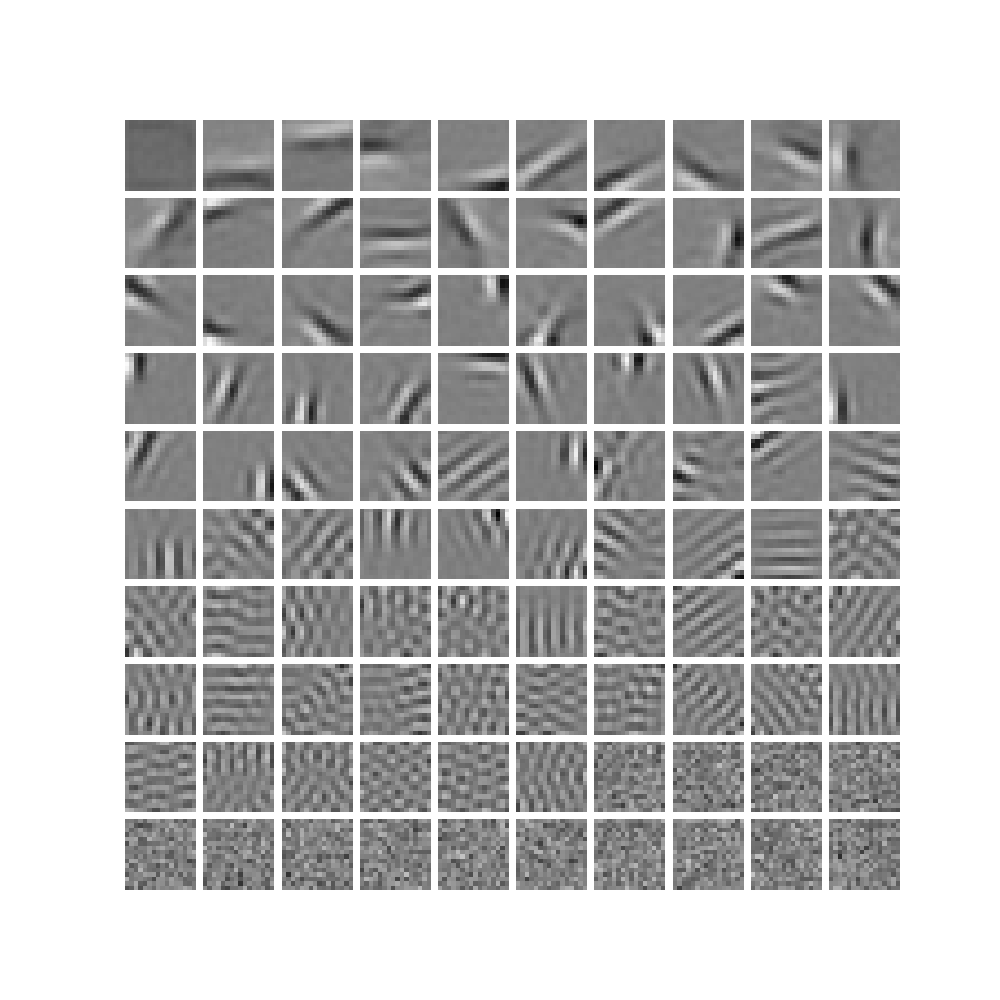}
	\subcaption{Epoch 6}
  \end{minipage}
  \\
  \vspace{0.5cm}
  \begin{minipage}[t]{0.3\textwidth}
    \centering
    \includegraphics[width=\linewidth, trim={2cm 2cm 2cm 2cm}, clip]{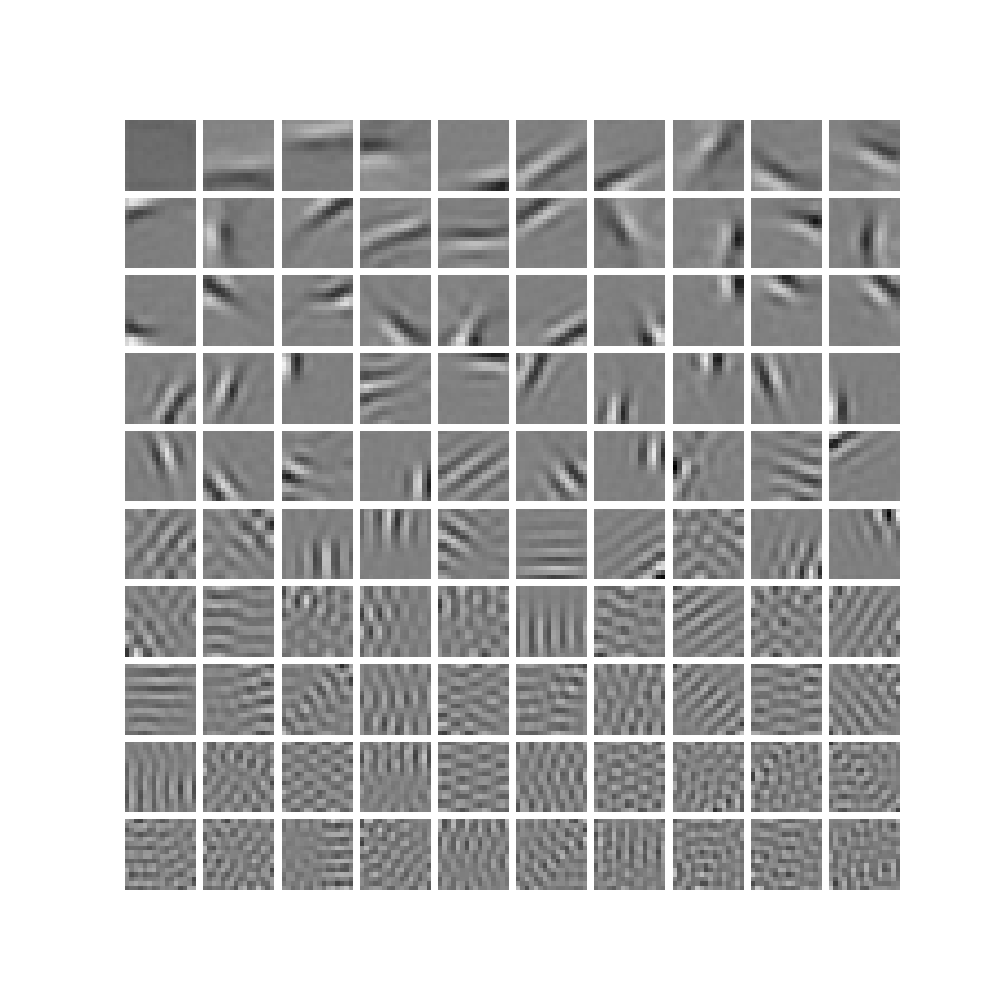}
	\subcaption{Epoch 7}
  \end{minipage}
  \hfill
  \begin{minipage}[t]{0.3\textwidth}
    \centering
    \includegraphics[width=\linewidth, trim={2cm 2cm 2cm 2cm}, clip]{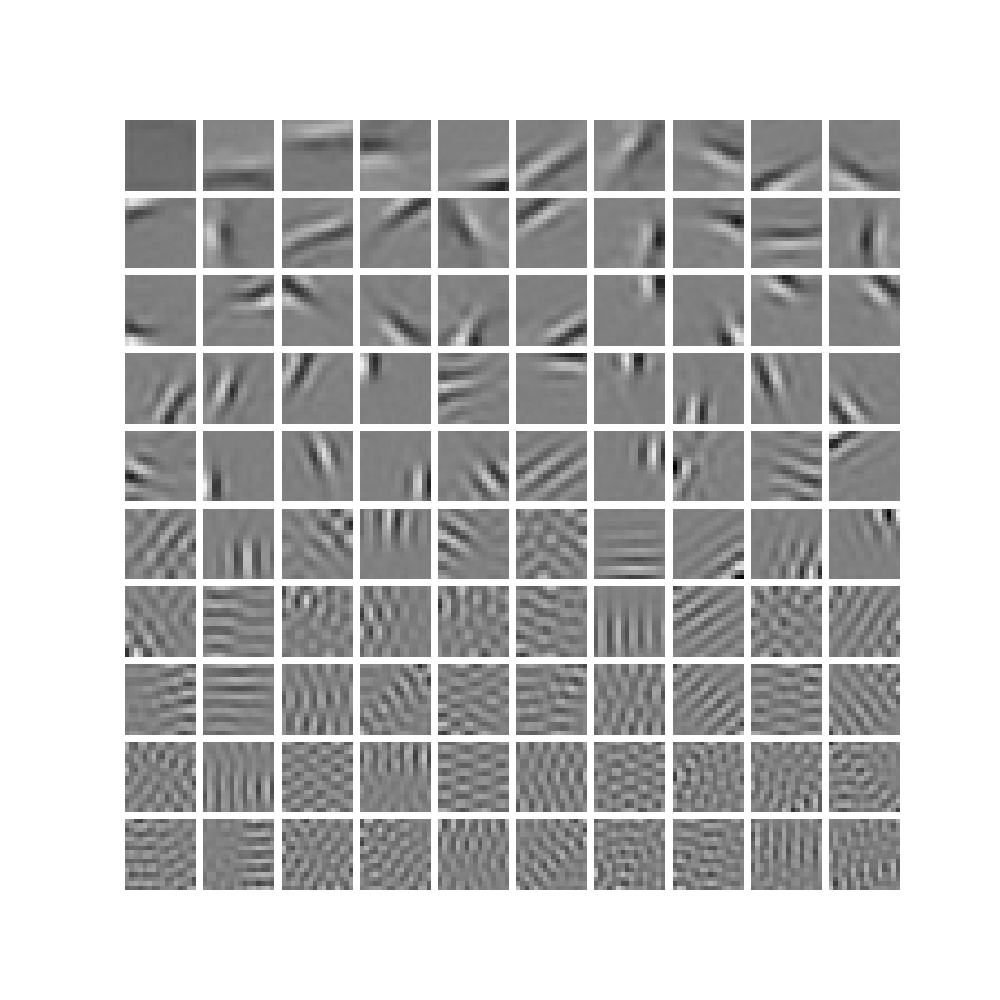}
	\subcaption{Epoch 8}
  \end{minipage}
  \hfill
  \begin{minipage}[t]{0.3\textwidth}
    \centering
    \includegraphics[width=\linewidth, trim={2cm 2cm 2cm 2cm}, clip]{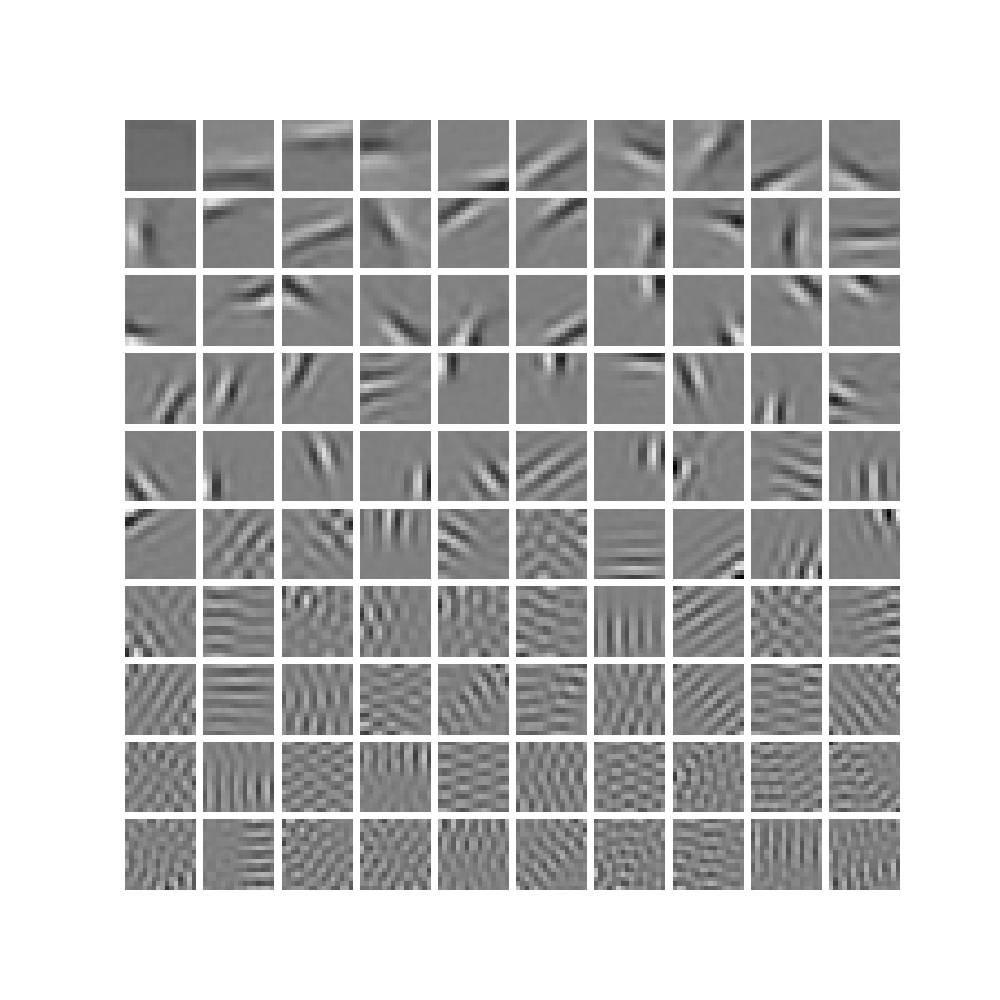}
	\subcaption{Epoch 9}
  \end{minipage}
  \caption{\textbf{Learned generative fields during optimization with likelihood annealing
  of entropy-ELBOs}. The annealing stops after epoch 6. It is equivalent to $\beta$-annealing, which is a popular scheme to tune reconstruction--embedding quality trade-off for VAE training.}
  \label{fig:beta-annealing-all}
\end{figure}

\clearpage
\subsection{Learning overcomplete basis}
\label{app:overcomplete-basis}
Prior entropy annealing significantly improves the quality of the learned dictionary and the sparseness of the latent codes (see \cref{fig:image-patches-dataset-overcomplete}). When after annealing the prior entropy weight is set to 1, approximately half of the generative fields converge to high-frequency textures and contribute only marginally to the reconstruction. Emphasizing the prior allows us to learn a rich dictionary of localized Gabor-like generative fields that span a wide range of frequencies, positions, and orientations, positively contributing to the sparsity of the latent codes.

\begin{figure}[ht!]
  \begin{minipage}[t]{.49\textwidth}
    \centering
    \includegraphics[width=\linewidth, trim={2cm 2cm 2cm 2cm}, clip]{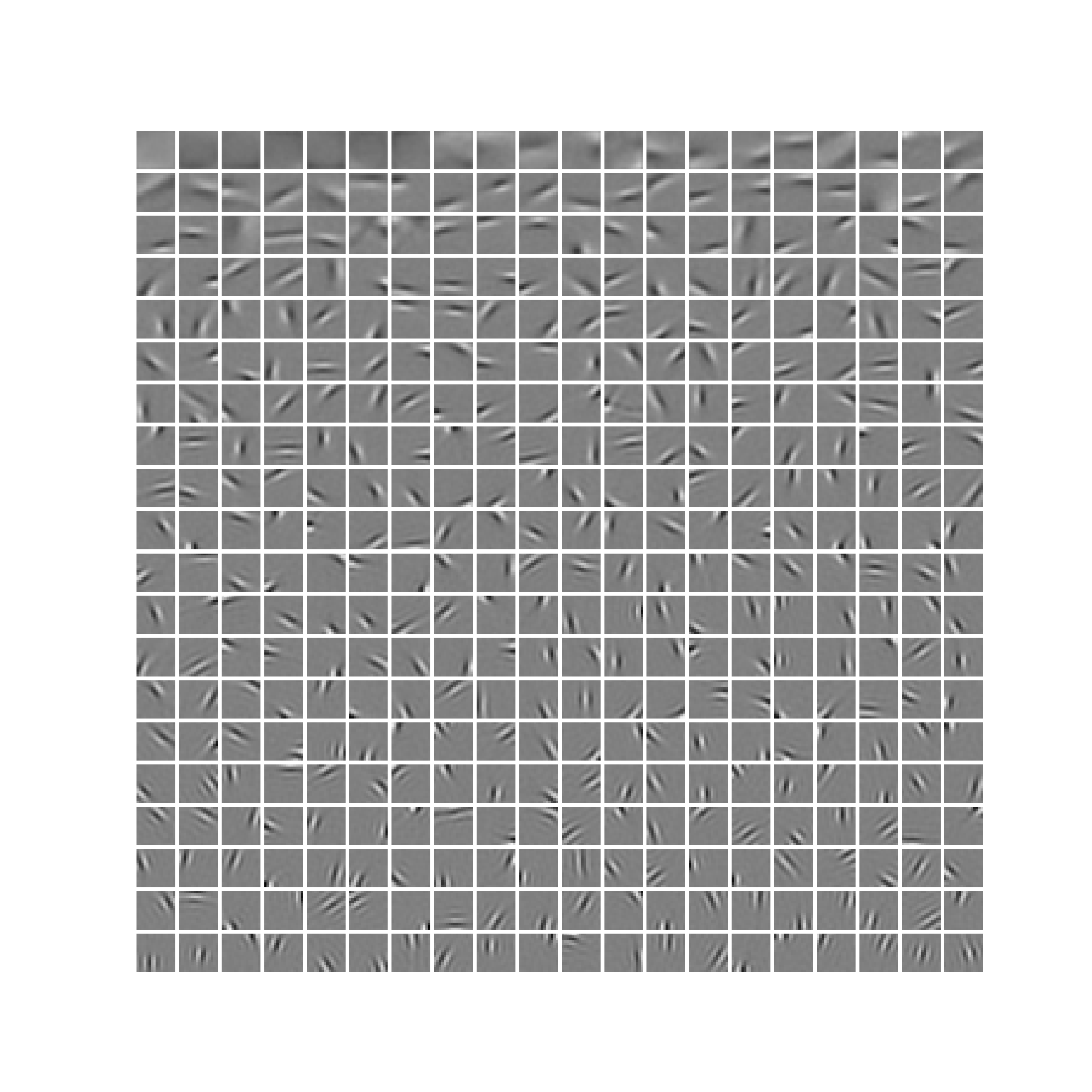}
    \includegraphics[width=\linewidth]{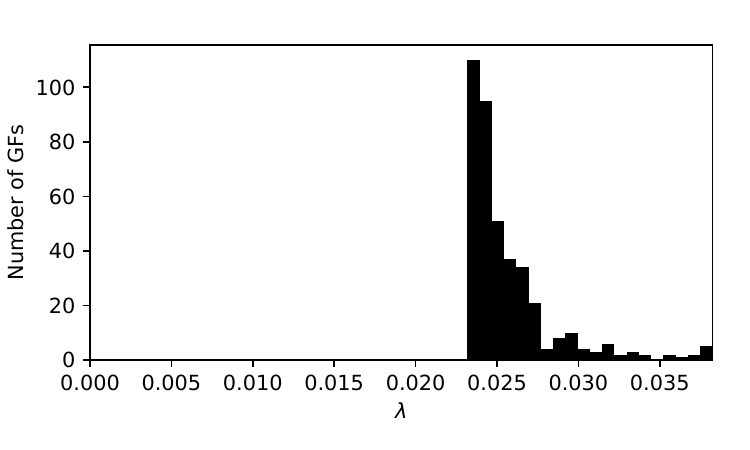}
	\subcaption{Prior entropy weight: 2}%
  \end{minipage}
  \hfill
  \begin{minipage}[t]{.49\textwidth}
    \centering
    \includegraphics[width=\linewidth, trim={2cm 2cm 2cm 2cm}, clip]{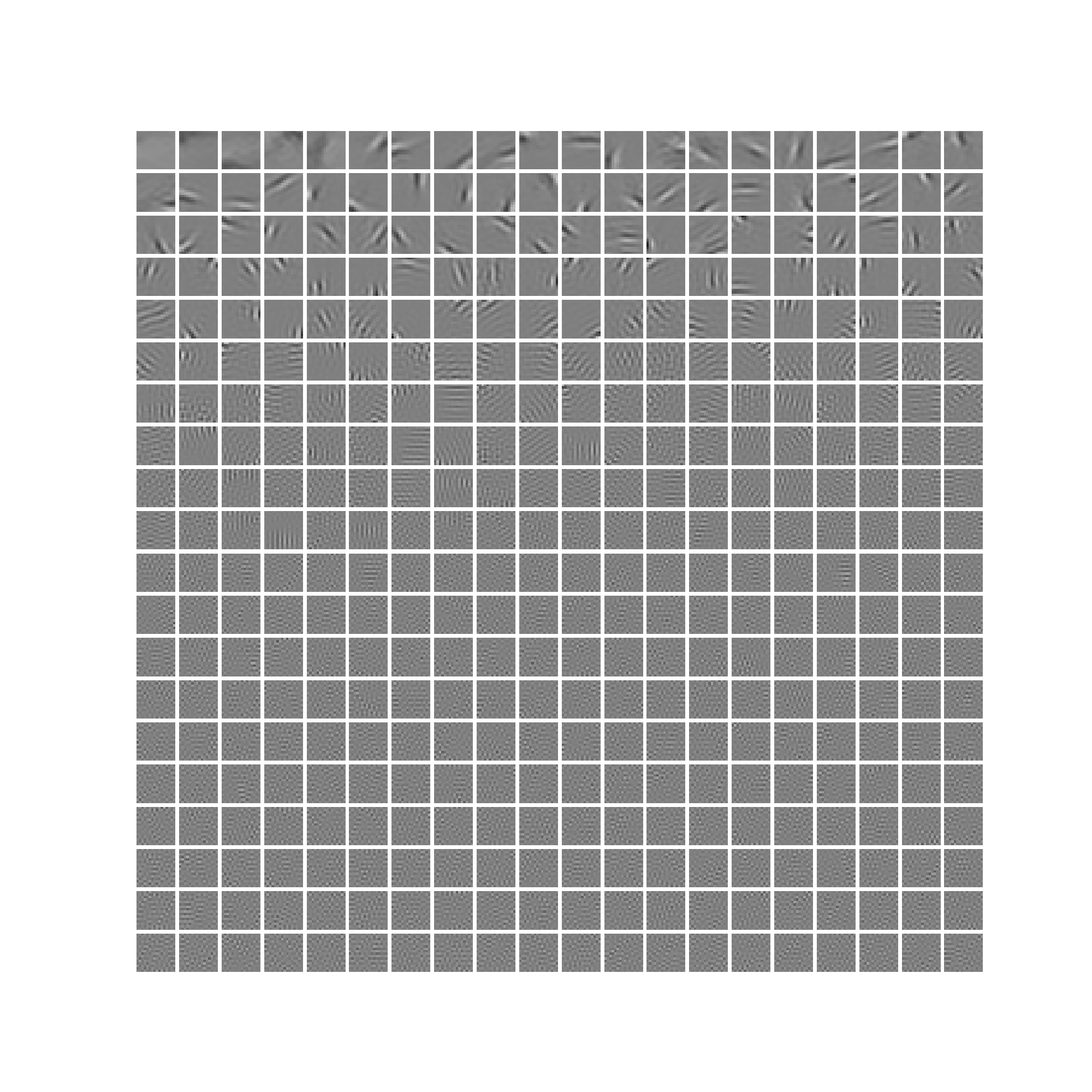}
    \includegraphics[width=\linewidth]{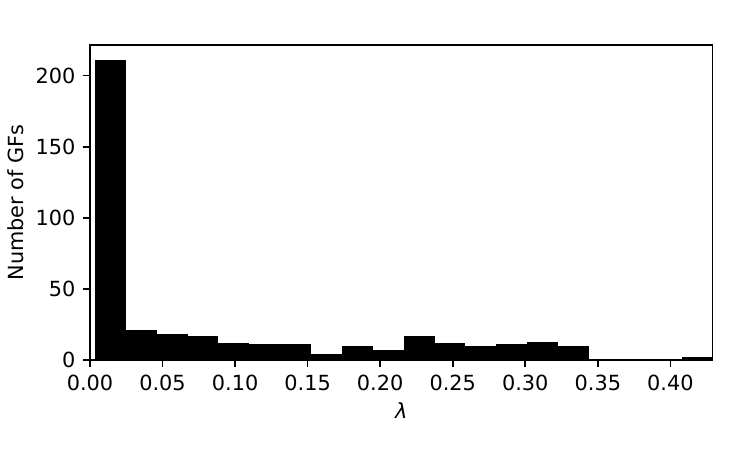}
    \subcaption{Prior entropy weight: 1}%
  \end{minipage}  
  \caption{\textbf{Learned overcomplete bases for the image patches dataset.} 400 generative fields are learned from $16\times16$ image patches. Different generative fields were obtained with prior entropy weight set to 2 (a), and with the original ELBO (b). The bottom histograms illustrate how this reweighting affects the prior scales $\lambda_i$. In (b) more than 200 prior coefficients are close to 0, which hints to the posterior collapse. The generative fields are sorted according to their $\lambda_i$ scale. }
  \label{fig:image-patches-dataset-overcomplete}
\end{figure}

\clearpage
\section{PROPERTIES OF THE FUNCTION \texorpdfstring{$\mathcal{M}$}{M} AND SOFTENED MAGNITUDE} 
\label{app:m-function}
We study the properties of ${\cal M}(a)$ in \cref{eq:DefFuncM}. We find for very small and for very large arguments $a$ of the function that:
\begin{align}
	\lim_{a\rightarrow\infty} {\cal M}(a) = |a|\ \ \mbox{and}\ \ \lim_{a\rightarrow{}-\infty} {\cal M}(a) = |a| \enspace .
\end{align}
So ${\cal M}(a)$ approximates the ``$l_1$'' magnitude function $|a|$ if $a$ has large or small values. %
Furthermore, the function upper-bounds the magnitude function everywhere, and the largest difference of ${\cal M}(a)$ compared
to $|a|$ is at zero:
\begin{align}
	\mbox{for all\ $a\in\RRR$:}\ {\cal M}(a) > |a|\ \ \mbox{and}\ \ {\cal M}(0) = \sqrt{ 2 / \pi } \enspace .
\end{align}
Turning back to the relatively intricate expression for $\lambda^{\mathrm{opt}}_h$ in \cref{theo:optimal-parameters}, we can now define a `softened' magnitude function (cf.\,Sec.\,\ref{subsec:classicalSC}) that formally simplifies the expression significantly:
\begin{align}
	\lambda^{\mathrm{opt}}_h = \frac{1}{N} \sum_{n=1}^N \big|\nu_h^{(n)}\big|^\star\ \ \mbox{where}\ \ \big|\nu_h^{(n)}\big|^\star = \tau_h^{(n)} {\cal M}\Big( \frac{\nu_h^{(n)}}{\tau_h^{(n)}} \Big) \enspace. \label{EqnMStar}
\end{align}
Using the properties of ${\cal M}$, it can directly be observed that $|\nu_h^{(n)}|^\star\approx{}|\nu_h^{(n)}|$ whenever $\nu_h^{(n)} \gg \tau_h^{(n)}$, so for small $\tau_h^{(n)}$ the function $|\nu_h^{(n)}|^\star$ essentially represents the $l_1$ magnitude. We have to keep in mind, however, that $|\nu_h^{(n)}|^\star$ depends on $\tau_h^{(n)}$ (which we have omitted in the notation for convenience). 
As a principled difference between $|\nu_h^{(n)}|^\star$ and $|\nu_h^{(n)}|$ it remains that ultimately the derived function $|\nu^{(n)}_h|^\star$ does (in contrast to $|\nu^{(n)}_h|$) {\em not} vanish for vanishing $\nu_h$. So while many entries $\nu^{(n)}_h$ will be pushed towards zero, the minimum of $|\nu_h^{(n)}|^\star$ will not be at zero. The derived objective is, therefore, genuinely different from $l_1$-sparse coding. It may be used, however, to relate to recent threshold-based variants of the sparse coding objectives \citep[][]{rozell_sparse_2008,fallah_variational_2022}.

The derivative of the function ${\cal M}(a)$ has a particularly simple form, which we already discussed in \cref{theo:analytic-elbo}:
\begin{align}
    \frac{\partial \mathcal{M}(a)}{\partial a} = \erf\left(\frac{a}{\sqrt{2}}\right) \enspace .
\end{align} %

\end{document}